\documentclass{article}

\usepackage[final]{neurips_2021}

\usepackage[utf8]{inputenc} %
\usepackage[T1]{fontenc}    %
\usepackage{hyperref}       %
\usepackage{url}            %
\usepackage{booktabs}       %
\usepackage{amsfonts}       %
\usepackage{nicefrac}       %
\usepackage{microtype}      %
\usepackage{xcolor}         %

\usepackage{amsmath}
\usepackage{amsthm}
\usepackage{amsfonts}
\usepackage{bm}
\usepackage{breqn}
\usepackage{breakcites}
\usepackage{amssymb}
\usepackage{algorithm}
\usepackage{algorithmic}
\usepackage{graphicx}
\usepackage{multirow}
\usepackage{listings}
\usepackage{wrapfig}
\usepackage{subfig}
\usepackage{mathtools}
\usepackage{float}
\usepackage{bbm}
\usepackage[shortlabels]{enumitem}
\usepackage{todonotes}
\definecolor{light-gray}{gray}{0.5}

\newcommand\norm[1]{\left\lVert#1\right\rVert}

\newcommand{\mc}{\mathcal}
\newcommand{\mb}{\mathbb}
\newcommand{\BR}{\mathbb{R}}

\newcommand{\BN}{\mathbb{N}}
\newcommand{\BE}{\mathbb{E}}

\usepackage[english]{babel}

\newtheorem{theorem}{Theorem}[section]
\newtheorem{corollary}[theorem]{Corollary}
\newtheorem{lemma}[theorem]{Lemma}
\theoremstyle{definition}

\newtheorem{assumption}[theorem]{Assumption}

\title{Conservative Offline Distributional \\ Reinforcement Learning}

\author{%
Yecheng Jason Ma, Dinesh Jayaraman, Osbert Bastani \\
University of Pennsylvania \\ 
{\tt\small \{jasonyma, dineshj, obastani\}@seas.upenn.edu} 
}

\begin{document}

\maketitle

\begin{abstract}
Many reinforcement learning (RL) problems in practice are \emph{offline}, learning purely from observational data. A key challenge is how to ensure the learned policy is safe, which requires quantifying the risk associated with different actions. In the online setting, distributional RL algorithms do so by learning the distribution over returns (i.e., cumulative rewards) instead of the expected return; beyond quantifying risk, they have also been shown to learn better representations for planning. We propose
Conservative Offline Distributional Actor Critic (CODAC), an offline RL algorithm suitable for both risk-neutral and risk-averse domains. CODAC adapts distributional RL to the offline setting by penalizing the predicted quantiles of the return for out-of-distribution actions. We prove that CODAC learns a conservative return distribution---in particular, for finite MDPs, CODAC converges to an uniform lower bound on the quantiles of the return distribution; our proof relies on a novel analysis of the distributional Bellman operator. In our experiments, on two challenging robot navigation tasks, CODAC successfully learns risk-averse policies using offline data collected purely from risk-neutral agents. Furthermore, CODAC is state-of-the-art on the D4RL MuJoCo benchmark in terms of both expected and risk-sensitive performance. Code is available at: \href{https://github.com/JasonMa2016/CODAC}{https://github.com/JasonMa2016/CODAC}
\end{abstract}

\section{Introduction}

In many applications of reinforcement learning, actively gathering data through interactions with the environment can be risky and unsafe. Offline (or batch) reinforcement learning (RL) avoids this problem by learning a policy solely from historical data (called \emph{observational data}) \cite{ernst2005tree, lange2012batch, levine2020offline}.

A shortcoming of most existing approaches to offline RL \cite{fujimoto2018addressing, wu2019behavior, kumar2019stabilizing, kumar2020conservative, yu2020model, kidambi2020morel} is that they are designed to maximize the expected value of the cumulative reward (which we call the \emph{return}) of the policy. As a consequence, they are unable to quantify risk and ensure that the learned policy acts in a safe way. In the online setting, there has been recent work on \emph{distributional} RL algorithms \cite{dabney2018distributional, dabney2018implicit, ma2020distributional, tang2019worst, keramati2020being}, which instead learn the full distribution over future returns. They can use this distribution to plan in a way that avoids taking risky, unsafe actions. Furthermore, when coupled with deep neural network function approximation, they can learn better state representations due to the richer distributional learning signal~\cite{bellemare2017distributional, lyle2019comparative}, enabling them to outperform traditional RL algorithms even on the risk-neutral, expected return objective~\cite{bellemare2017distributional, dabney2018distributional, dabney2018implicit, yang2019fully, hessel2018rainbow}. 

We propose Conservative Offline Distributional Actor-Critic (CODAC), which adapts distributional RL to the offline setting. A key challenge in offline RL is accounting for high uncertainty on out-of-distribution (OOD) state-action pairs for which observational data is limited~\cite{levine2020offline, kumar2019stabilizing}; the value estimates for these state-action pairs are intrinsically high variance, and may be exploited by the policy without correction due to the lack of online data gathering and feedback. We build on conservative $Q$-learning~\cite{kumar2020conservative}, which penalizes $Q$ values for OOD state-action pairs to ensure that the learned $Q$-function lower bounds the true $Q$-function. Analogously, CODAC uses a penalty to ensure that the quantiles of the learned return distribution lower bound those of the true return distribution. Crucially, the lower bound is data-driven and selectively penalizes the quantile estimates of state-actions that are less frequent in the offline dataset; see Figure \ref{fig:codac-overview}. 

\setlength\intextsep{0pt}
\begin{wrapfigure}{r}{0.40\linewidth}
\includegraphics[width=\linewidth]{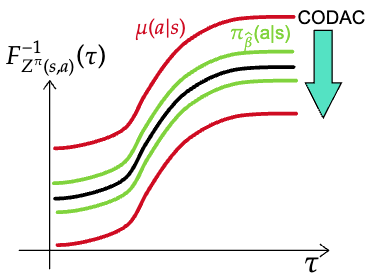}
\caption{
CODAC obtains conservative estimates of the true return quantiles (black); it penalizes out-of-distribution actions, $\mu(a\mid s)$, more heavily than in-distribution actions, $\pi_{\hat{\beta}}(a \mid s)$.} 
\label{fig:codac-overview}
\end{wrapfigure}

We prove that for finite MDPs, CODAC converges to an estimate of the return distribution whose quantiles uniformly lower bound the quantiles of the true return distribution; in addition, this data-driven lower bound is tight up to the approximation error in estimating the quantiles using finite data. Thus, CODAC obtains a uniform lower bound on \textit{all} integrations of the quantiles, including the standard RL objective of expected return, the risk-sensitive conditional-value-at-risk (CVaR) objective~\cite{rockafellar2002conditional}, as well as many other risk-sensitive objectives.
We additionally prove that CODAC expands the gap in quantile estimates between in-distribution and OOD actions, thus avoiding overconfidence when extrapolating to OOD actions \cite{fujimoto2018addressing}.

Our theoretical guarantees rely on novel techniques for analyzing the distributional Bellman operator, which is challenging since it acts on the infinite-dimensional function space of return distributions (whereas the traditional Bellman operator acts on a finite-dimensional vector space). We provide several novel results that may be of independent interest; for instance, our techniques can be used to bound the error of the fixed-point of the empirical distributional Bellman operator; see Appendix~\ref{sec:empiricalfixedpoint}.

Finally, to obtain a practical algorithm, CODAC builds on existing distributional RL algorithms by integrating conservative return distribution estimation into a quantile-based actor-critic framework.

In our experiments, we demonstrate the effectiveness of CODAC on both risk-sensitive and risk-neutral RL.
First, on two novel risk-sensitive robot navigation tasks, we show that CODAC successfully learns risk-averse policies using offline datasets collected purely from a risk-neutral agent, a challenging task that all our baselines fail to solve. Next, on the D4RL Mujoco suite \cite{fu2020d4rl}, a popular offline RL benchmark, we show that CODAC achieves state-of-art results on both the original risk-neutral version as well a modified risk-sensitive version \cite{urpi2021risk}. Finally, we empirically show that CODAC computes quantile lower-bounds and gap-expanded quantiles even on high-dimensional continuous-control problems, validating our key theoretical insights into the effectiveness of CODAC.

\textbf{Related work.}
There has been growing interest in offline (or batch) RL~\cite{lange2012batch, levine2020offline}. The key challenge in offline RL is to avoid overestimating the value of out-of-distribution (OOD) actions rarely taken in the observational dataset~\cite{thrun1993issues, van2016deep, kumar2019stabilizing}. The problem is that policy learning optimizes against the value estimates; thus, even if the estimation error is i.i.d., policy optimization biases towards taking actions with high variance value estimates, since some of these values will be large by random chance. In risk-sensitive or safety-critical settings, these actions are exactly the ones that should be avoided.

One solution is to constrain the learned policy to take actions similar to the ones in the dataset (similar to imitation learning)---e.g., by performing support matching \cite{wu2019behavior} or distributional matching \cite{kumar2019stabilizing, fujimoto2019off}. However, these approaches tend to perform poorly when data is gathered from suboptimal policies. An alternative is to regularize the $Q$-function estimates to be conservative at OOD actions~\cite{kumar2020conservative, yu2020model, kidambi2020morel}. CODAC builds on these approaches, but obtains conservative estimates of all quantile values of the return distribution rather than just the expected return. Traditionally, the literature on off-policy evaluation (OPE) \cite{precup2000eligibility, jiang2016doubly, thomas2016data, liu2018breaking, shen2021state} aims to estimate the expected return of a policy using pre-collected offline data; CODAC proposes an OPE procedure amenable to all objectives that can be expressed as integrals of the return quantiles. Consequently, our fine-grained approach not only enables risk-sensitive policy learning, but also improves performance on risk-neutral domains.

In particular, CODAC builds on recent works on distributional RL~\cite{bellemare2017distributional, dabney2018distributional, dabney2018implicit, yang2019fully}, which parameterize and estimate the entire return distribution instead of just a point estimate of the expected return (i.e., the $Q$-function)~\cite{mnih2013playing,mnih2016asynchronous}. 
Distributional RL algorithms have been shown to achieve state-of-art performance on Atari and continuous control domains~\cite{hessel2018rainbow, barth2018distributed}; intuitively, they provide richer training signals that stabilize value network training~\cite{bellemare2017distributional}. Existing distributional RL algorithms parameterize the policy return distribution in many different ways, including canonical return atoms~\cite{bellemare2017distributional}, the expectiles~\cite{rowland2019statistics}, the moments~\cite{nguyen2020distributional}, and the quantiles~\cite{dabney2018distributional, dabney2018implicit, yang2019fully}. CODAC builds on the quantile approach due to its suitability for risk-sensitive policy optimization. The quantile representation provides an unified framework for optimizing different objectives of interest \cite{dabney2018distributional}, such as the risk-neutral expected return, and a family of risk-sensitive objectives representable by the quantiles; this family includes, for example, the conditional-value-at-risk (CVaR) return~\cite{rockafellar2002conditional, chow2017risk, tang2019worst, keramati2020being}, the Wang measure~\cite{wang2000class}, and the cumulative probability weighting (CPW) metric~\cite{tversky1992advances}. Recent work has provided theoretical guarantees on learning CVaR policies~\cite{keramati2020being}; however, their approach cannot provide bounds on the quantiles of the estimated return distribution, which is significantly more challenging since there is no closed-form expression for the Bellman update on the return quantiles.

Finally, there has been some recent work adapting distributional RL to the offline setting~\cite{agarwal2020optimistic, urpi2021risk}. First, REM~\cite{agarwal2020optimistic} builds on QR-DQN~\cite{dabney2018distributional}, an online distributional RL algorithm; however, REM can be applied to regular DQN~\cite{mnih2013playing} and does not directly utilize the distributional aspect of QR-DQN. The closest work to ours is ORAAC~\cite{urpi2021risk}, which uses distributional RL to learn a CVaR policy in the offline setting. ORAAC uses imitation learning to avoid OOD actions and stay close to the data distribution. As discussed above, imitation learning strategies can perform poorly unless the dataset comes from an optimal policy; in our experiments, we find that CODAC significantly outperforms ORAAC. Furthermore, unlike ORAAC, we provide theoretical guarantees on our approach.

\section{Background}

\textbf{Offline RL.}
Consider a Markov Decision Process (MDP)~\cite{puterman2014markov} $(\mc{S}, \mc{A}, P, R, \gamma)$, where $\mc{S}$ is the state space, $\mc{A}$ is the action space, $P(s'\mid s,a)$ is the transition distribution, $R(r\mid s,a)$ is the reward distribution, and $\gamma \in (0,1)$ is the discount factor, and consider a stochastic policy $\pi(a\mid s): \mc{S} \rightarrow \Delta(\mc{A})$. A \emph{rollout} using $\pi$ from state $s$ using initial action $a$ is the random sequence $\xi=((s_0,a_0,r_0),(s_1,a_1,r_1),...)$ such that $s_0=s$, $a_0=a$, $a_t\sim\pi(\cdot\mid s_t)$ (for $t>0$), $r_t\sim R(\cdot\mid s_t,a_t)$, and $s_{t+1}\sim P(\cdot\mid s_t,a_t)$; we denote the distribution over rollouts by $D^\pi(\xi\mid s,a)$.
The $Q$-function $Q^\pi: \mc{S} \times \mc{A} \rightarrow \BR$ of $\pi$ is its expected discounted cumulative return $Q^\pi(s,a)=\BE_{D^\pi(\xi\mid s,a)}[\sum_{t=0}^\infty \gamma^tr_t]$. Assuming the rewards satisfy $r_t\in[R_{\text{min}},R_{\text{max}}]$, then we have $Q^\pi(s,a)\in[V_{\text{min}},V_{\text{max}}] \subseteq [R_{\text{min}}/(1-\gamma),R_{\text{max}}/(1-\gamma)]$.

A standard goal of reinforcement learning (RL), which we call \emph{risk-neutral} RL, is to learn the optimal policy $\pi^*$ such that $Q^{\pi^*}(s,a) \geq Q^\pi(s,a)$ for all $s \in \mc{S}, a \in \mc{A}$ and all $\pi$. 


In offline RL, the learning algorithm only has access to a fixed dataset $\mc{D} \coloneqq \{(s,a,r,s')\}$, where $r\sim R(\cdot\mid s,a)$ and $s'\sim P(\cdot\mid s,a)$. The goal is to learn the optimal policy without any interaction with the environment. Though we do not assume that $\mc{D}$ necessarily comes from a single behavior policy, we define the empirical behavior policy to be
$
\hat{\pi}_{\beta}(a\mid s) \coloneqq \frac{\sum_{s',a'\in \mc{D}}\mathbbm{1}(s'=s,a'=a)}{\sum_{s' \in \mc{D}}\mathbbm{1}(s'=s)}.
$
With slight abuse of notation, we write $(s,a,r,s') \sim \mc{D}$ to denote a uniformly random sample from the dataset. Also, in this paper, we broadly refer to actions not drawn from $\hat{\pi}_\beta(\cdot\mid s)$ (i.e., low probability density) as out-of-distribution (OOD).

Fitted $Q$-evaluation (FQE)~\cite{ernst2005tree, riedmiller2005neural} uses $Q$-learning for offline RL, which leverages the fact that $Q^\pi=\mc{T}^\pi Q^\pi$ is the unique fixed point of the Bellman operator $\mc{T}^\pi: \BR^{|\mc{S}||\mc{A}|} \rightarrow \BR^{|\mc{S}||\mc{A}|}$ defined by
\begin{align*}
\mc{T}^\pi Q(s,a) = \BE_{R(r\mid s,a)}[r] + \gamma\cdot\BE_{P^\pi(s',a'\mid s,a)}[Q(s',a')],
\end{align*}
where $P^\pi(s',a'|s,a) = P(s'\mid s,a)\pi(a'\mid s')$. In the offline setting, we do not have access to $\mc{T}^\pi$; instead, FQE uses an approximation $\hat{\mc{T}}^\pi$ obtained by replacing $R$ and $P$ in $\mc{T}^\pi$ with estimates $\hat{R}$ and $\hat{P}$ based on $\mc{D}$. Then, we can estimate $Q^\pi$ by starting from an arbitrary $\hat{Q}^0$ and iteratively computing
\begin{align*}
\hat{Q}^{k+1}\coloneqq\operatorname*{\arg\min}_Q\mathcal{L}(\hat{Q},\hat{\mc{T}}^\pi\hat{Q}^k)
\quad\text{where}\quad
\mc{L}(Q,Q')=\BE_{\mathcal{D}(s,a)}\left[(Q(s,a)-Q'(s,a))^2\right].
\end{align*}
Assuming we search over the space of all possible $Q$ (i.e., do not use function approximation), then the minimizer is $\hat{Q}^{k+1}=\hat{\mc{T}}^\pi\hat{Q}^k$, so $\hat{Q}^k=(\hat{\mc{T}}^\pi)^kQ^0$. If $\hat{\mc{T}}^\pi=\mc{T}^\pi$, then $\lim_{k\to\infty}\hat{Q}^k=Q^\pi$. 

\textbf{Distributional RL.}
In distributional RL, the goal is to learn the distribution of the discounted cumulative rewards (i.e., \emph{returns})~\cite{bellemare2017distributional}. Given a policy $\pi$, we denote its return distribution as the random variable $Z^\pi(s,a) = \sum_{t=0}^\infty \gamma^t r_t$, which is a function of a random rollout $\xi\sim D^\pi(\cdot\mid s,a)$; note that $Z^\pi$ includes three sources of randomness: (1) the reward $R(\cdot\mid s,a)$, (2) the transition $P(\cdot\mid s,a)$, and (3) the policy $\pi(\cdot\mid s)$. Also, note that $Q^\pi(s,a) = \BE_{D^\pi(\xi\mid s,a)}[Z^\pi(s,a)]$.
Analogous to $Q$-function Bellman operator, the distributional Bellman operator for $\pi$ is
\begin{equation}
\label{eq:distributional-bellman-operator-identity}
\begin{aligned}
\mc{T}^{\pi}Z(s,a) &\stackrel{D}{\coloneqq} r + \gamma Z(s',a')
\quad\text{where}\quad
r\sim R(\cdot\mid s,a), ~ s' \sim P(\cdot\mid s,a), ~ a' \sim \pi(\cdot\mid s'),
\end{aligned}
\end{equation}
where $\stackrel{D}{=}$ indicates equality in distribution. As with $Q^\pi$, $Z^{\pi}$ is the unique fixed point of $\mc{T}^{\pi}$ in Eq.~\ref{eq:distributional-bellman-operator-identity}.

Next, let $F_{Z(s,a)}(x):[V_{\text{min}}, V_{\text{max}}]\rightarrow [0,1]$ be the cumulative density function (CDF) for return distribution $Z(s,a)$, and $F_{R(s,a)}$ be the CDF of $R(\cdot\mid s,a)$ %
Then, we have the following equality, which captures how the distributional Bellman operator $\mc{T}^\pi$ operates on the CDF $F_{Z(s,a)}$~\cite{keramati2020being}:
\begin{equation}
\label{eq:cdf-bellman-operator}
F_{\mc{T}^\pi Z(s,a)}(x)=\sum_{s',a'}P^\pi(s',a'\mid s,a)\int F_{Z(s',a')}\left(\frac{x-r}{\gamma}\right)dF_{R(s,a)}(r).
\end{equation}

Let $X$ and $Y$ be two random variables. Then, the \emph{quantile function} (i.e., inverse CDF) $F^{-1}_X$ of $X$ is $F^{-1}_X(\tau) \coloneqq \inf \{x \in \BR \mid \tau \leq F_X(x)\}$, and the \emph{$p$-Wasserstein distance} between $X$ and $Y$ is $W_p(X,Y) = (\int_0^1 \lvert F^{-1}_Y(\tau) - F^{-1}_X(\tau) \rvert^p d\tau )^{1/p}$. Then, the distributional Bellman operator $\mc{T}^\pi$ is a $\gamma$-contraction in the $W_p$ \cite{bellemare2017distributional}---i.e., letting $\bar{d}_p(Z_1,Z_2) \coloneqq \text{sup}_{s,a} W_p(Z_1(s,a),Z_2(s,a))$ be the largest Wasserstein distance over $(s,a)$, and $\mc{Z} = \{Z: \mc{S} \times \mc{A} \rightarrow \mc{P}(\BR) \mid \forall (s,a) \;.\; \BE[|Z(s,a)|^p] < \infty \}$ be the space of distributions over $\mathbb{R}$ with bounded $p$-th moment, then
\begin{equation}
\label{eq:wasserstein-contraction}
\bar{d}_p(\mc{T}^{\pi}Z_1, \mc{T}^\pi Z_2) \leq \gamma \bar{d}_p(Z_1,Z_2)
\qquad(\forall Z_1,Z_2 \in \mc{Z}).
\end{equation}
As a result, $Z^\pi$ may be obtained by iteratively applying $\mc{T}^\pi$ to an initial distribution $Z$.

As before, in the offline setting, we can approximate $\mc{T}^\pi$ by $\hat{\mc{T}}^\pi$ using $\mc{D}$. Then, we can compute $Z^\pi$ (represented as $F_{Z(s,a)}^{-1}$; see below) by starting from an arbitrary $\hat{Z}^0$, and iteratively computing
\begin{equation}
\label{eq:offline-vanilla-wasserstein-objective}
\hat{Z}^{k+1}=\operatorname*{\arg\min}_Z\mathcal{L}_p(Z,\hat{\mc{T}}^{\pi}\hat{Z}^k)\quad\text{where}\quad \mathcal{L}_p(Z,Z')=\BE_{\mc{D}(s,a)}\left[W_p(Z(s,a),Z'(s,a))^p \right].
\end{equation}
We call this procedure \emph{fitted distributional evaluation (FDE)}.

One distributional RL algorithmic framework is quantile-based distributional RL \cite{dabney2018distributional, dabney2018implicit, yang2019fully, ma2020distributional, tang2019worst, urpi2021risk}, where the return distribution $Z$ is represented by its quantile function $F^{-1}_{Z(s,a)}(\tau): [0,1] \rightarrow \mb{R}$.
Given a distribution $g(\tau)$ over $[0,1]$, the \emph{distorted expectation} of $Z$ is
$\Phi_g(Z(s,a)) = \int_0^1 F^{-1}_{Z(s,a)}(\tau)g(\tau)d\tau$, and the corresponding policy is $\pi_g(s) \coloneqq \arg \max_a \Phi_g(Z(s,a))$~\cite{dabney2018distributional}.
If $g=\text{Uniform}([0,1])$, then $Q^\pi(s,a)=\Phi_g(Z(s,a))$;
alternatively, $g=\text{Uniform}([0,\xi])$ corresponds to the CVaR \cite{rockafellar2002conditional, chow2017risk, dabney2018implicit} objective, where only the bottom $\xi$-percentile of the return is considered. Additional risk-sensitive objectives are also compatible. For example, CPW~\cite{tversky1992advances} amounts to $g(\tau) = \tau^{\beta}/(\tau^{\beta}+(1-\tau)^{\beta})^{\frac{1}{\beta}}$, and Wang~\cite{wang2000class} has $g(\tau) = F_{\mathcal{N}}(F_{\mathcal{N}}^{-1}(\tau)+\beta)$, where $F_{\mc{N}}$ is the standard Gaussian CDF.

A drawback of FDE is that it does not account for estimation error, especially for pairs $(s,a)$ that rarely appear in the given dataset $\mc{D}$; thus, $\hat{Z}^k(s,a)$ may be an overestimate of $Z^k(s,a)$~\cite{fujimoto2019off, kumar2019stabilizing, kumar2020conservative}, even in distributional RL (since the learned distribution does not include randomness in the dataset)~\cite{hessel2018rainbow, barth2018distributed}. Importantly, since we act by optimizing with respect to $\hat{Z}^k(s,a)$, the optimization algorithm will exploit these errors, biasing towards actions with higher uncertainty, which is the opposite of what is desired. In Section~\ref{sec:method_theoretical}, we propose and analyze a penalty designed to avoid this issue.

\section{Conservative offline distributional policy evaluation}
\label{sec:method_theoretical}

\begin{figure*}
\centering
\resizebox{\textwidth}{!}{
\includegraphics[]{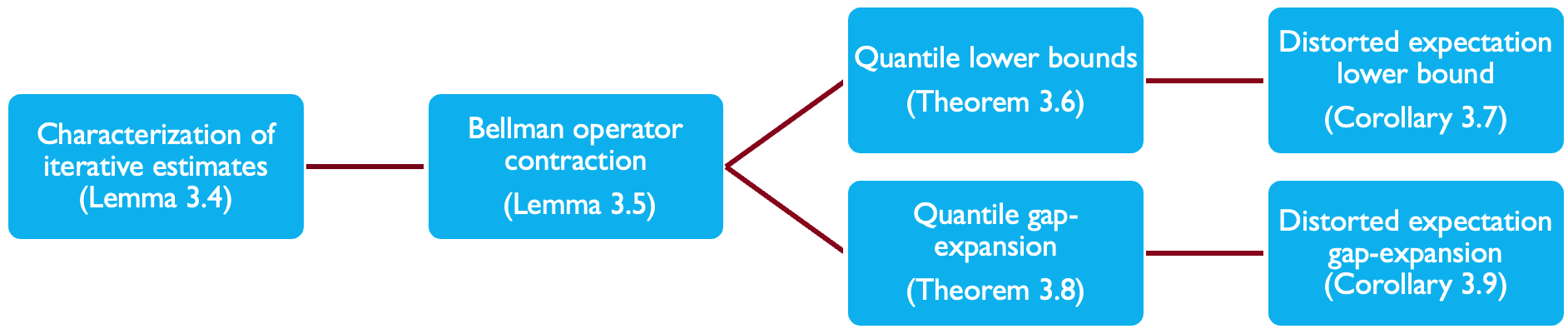}}
\caption{Overview of our theoretical results.} 
\label{figure:codac-theory-overview}
\end{figure*}

We describe our algorithm for computing a conservative estimate of $Z^\pi(s,a)$, and provide theoretical guarantees for finite MDPs. In particular, we modify Eq.~\ref{eq:offline-vanilla-wasserstein-objective} to include a penalty term:
\begin{equation}
\label{eq:CODAC-objective}
\tilde{Z}^{k+1} = \operatorname*{\arg\min}_{Z} \alpha \cdot \BE_{U(\tau),\mc{D}(s,a)} \left[c_0(s,a)\cdot F^{-1}_{Z(s,a)}(\tau) \right] + \mc{L}_p(Z,\hat{\mc{T}}^\pi\tilde{Z}^k)
\end{equation}
for some state-action dependent scale factor $c_0$; here, $U=\text{Uniform}([0,1])$. This objective adapts the conservative penalty in prior work~\cite{kumar2020conservative} to the distributional RL setting; in particular, the first term in the objective is a penalty that aims to shrink the quantile values for out-of-distribution (OOD) actions compared to those of in-distribution actions; intuitively, $c_0(s,a)$ should be larger for OOD actions. For now, we let $c_0$ be arbitrary; we describe our choice in Section~\ref{sec:method_practical}. $\alpha\in\mathbb{R}_{>0}$ is a hyperparameter controlling the magnitude of the penalty term with respect to the usual FDE objective.  We call this iterative algorithm \emph{conservative distribution evaluation (CDE)}.

Next, we analyze theoretical properties of CDE in the setting of finite MDPs; Figure~\ref{figure:codac-theory-overview} overviews these results. First, we prove that CDE iteratively obtains conservative quantile estimates (Lemma~\ref{theorem:iterative-conservative-estimatetion}) and defines a contraction operator on return distributions (Lemma~\ref{theorem:CODAC-contraction}). Then, our main result (Theorem~\ref{theorem:CODAC-quantile-pointwise-lowerbound}) is that CDE converges to a fixed point $\tilde{Z}^\pi$ whose quantile function lower bounds that of the true returns $Z^\pi$. We also prove that CDE is \emph{gap-expanding} (Theorem~\ref{theorem:CODAC-quantile-gap-expansion})---i.e., it is more conservative for actions that are rare in $\mc{D}$. These results translate to RL objectives computed by integrating the return quantiles, including expected and CVaR returns (Corollaries~\ref{theorem:distorted-expectation-lower-bound} \& \ref{corollary:CODAC-distorted-expectation-gap-expansion}).

We begin by describing our assumptions on the MDP and dataset. First, we assume that our dataset $\mc{D}$ has nonzero coverage of all actions for states in the dataset~\cite{levine2020offline, kumar2020conservative}.
\begin{assumption}
For all $s \in \mc{D}$ and $a \in \mc{A}$, we have $\hat{\pi}_{\beta}(a\mid s) > 0$.
\end{assumption}
This assumption is only needed by our theoretical analysis to avoid division-by-zero and ensure that all estimates are well-defined; alternatively, we could assign a very low value $\hat{\pi}_{\beta}(a\mid s) \coloneqq \epsilon$ for all actions not visited at state $s$ in the offline dataset and renormalize $\hat{\pi}_{\beta}(a\mid s)$ accordingly. Next, we impose regularity conditions on the fixed point $Z^\pi$ of the distributional Bellman operator $\mc{T}^\pi$.
\begin{assumption}
For all $s\in\mc{S}$ and $a\in\mc{A}$, $F_{Z^\pi(s,a)}$ is smooth. Furthermore, there exists $\zeta\in\mathbb{R}_{>0}$ such that for all $s\in\mc{S}$ and $a\in\mc{A}$, $F_{Z^\pi(s,a)}$ is $\zeta$-strongly monotone---i.e., we have $F'_{Z^\pi(s,a)}(x) \geq \zeta$.
\end{assumption}
The smoothness assumption ensures that the $p$th moments of $Z^\pi(s,a)$ are bounded (since $Z^\pi\in[V_{\text{min}},V_{\text{max}}]$ is also bounded), which in turn ensures that $Z^\pi\in\mc{Z}$. The monotonicity assumption is needed to ensure convergence of $F^{-1}_{Z^\pi(s,a)}$. Next, we assume that the search space in (\ref{eq:CODAC-objective}) includes all possible functions (i.e., no function approximation).
\begin{assumption}
The search space of the minimum over $Z$ in (\ref{eq:CODAC-objective}) is over all smooth functions $F_{Z(s,a)}$ (for all $s\in\mc{S}$ and $a\in\mc{A}$) with support on $[V_{\text{min}},V_{\text{max}}]$.
\end{assumption}
This assumption is required for us to analytically characterize the solution $\tilde{Z}^{k+1}$ of the CDE objective. Finally, we also assume $p>1$ (i.e., we use the $p$-Wasserstein distance for some $p>1$).

Now, we describe our key results. Our first result characterizes the CDE iterates $\tilde{Z}^{k+1}$; importantly, if $c_0(s,a)>0$, then these iterates encode successively more conservative quantile estimates.
\begin{lemma}
\label{theorem:iterative-conservative-estimatetion}
For all $s \in \mc{D}$, $a \in \mc{A}$, $k \in \BN$, and $\tau \in [0,1]$, we have
\begin{align*}
F^{-1}_{\tilde{Z}^{k+1}(s,a)}(\tau) = F^{-1}_{\hat{\mc{T}}^{\pi}\tilde{Z}^k(s,a)}(\tau)
- c(s,a)
~~\text{where}~~
c(s,a)=|\alpha p^{-1}c_0(s,a)|^{1/(p-1)}\cdot\mathrm{sign}(c_0(s,a)).
\end{align*}
\end{lemma}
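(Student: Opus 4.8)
The plan is to exploit the fact that, absent function approximation (Assumption~3), the CDE objective (\ref{eq:CODAC-objective}) decomposes completely across state-action pairs, and then to reduce each per-$(s,a)$ subproblem to a pointwise minimization over the quantile level $\tau$. First I would write out both terms of the objective explicitly in terms of the quantile function $q_{s,a}(\tau) \coloneqq F^{-1}_{Z(s,a)}(\tau)$. Using $U(\tau)=\text{Uniform}([0,1])$ and the definition of the distorted expectation, the penalty term becomes $\alpha \sum_{s,a} \mc{D}(s,a)\, c_0(s,a) \int_0^1 q_{s,a}(\tau)\,d\tau$, while the definition of $W_p$ turns the regression term into $\sum_{s,a}\mc{D}(s,a)\int_0^1 |q_{s,a}(\tau) - \hat{q}_{s,a}(\tau)|^p\,d\tau$, where $\hat{q}_{s,a}(\tau) \coloneqq F^{-1}_{\hat{\mc{T}}^\pi\tilde{Z}^k(s,a)}(\tau)$ is the (fixed) target quantile function. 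Because the quantile functions for distinct $(s,a)$ are independent decision variables and each empirical frequency $\mc{D}(s,a)$ is strictly positive for $s\in\mc{D}$ (Assumption~1, via $\mc{D}(s,a)=\mc{D}(s)\hat{\pi}_\beta(a\mid s)>0$), the global minimizer is obtained by minimizing each bracketed functional separately, which also explains why the statement is restricted to $s\in\mc{D}$.

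Next I would solve the per-$(s,a)$ subproblem. The key observation is that the integrand, viewed as a function of the scalar value $q_{s,a}(\tau)$ at a fixed $\tau$, is $\alpha c_0(s,a)\, q_{s,a}(\tau) + |q_{s,a}(\tau) - \hat{q}_{s,a}(\tau)|^p$, so the functional is a sum of decoupled pointwise problems. For $p>1$ this scalar objective is strictly convex and differentiable (the map $y \mapsto |y-\hat{y}|^p$ has continuous derivative $p|y-\hat{y}|^{p-1}\mathrm{sign}(y-\hat{y})$ vanishing at $y=\hat{y}$), so its unique minimizer is pinned down by the first-order condition $\alpha c_0(s,a) + p|y-\hat{y}|^{p-1}\mathrm{sign}(y-\hat{y}) = 0$. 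Solving for the deviation $\delta = y-\hat{y}$ gives $\mathrm{sign}(\delta) = -\mathrm{sign}(c_0(s,a))$ and $|\delta| = |\alpha p^{-1} c_0(s,a)|^{1/(p-1)}$, i.e.\ $\delta = -c(s,a)$ with $c(s,a)$ exactly as in the statement. Hence the pointwise-optimal value is $q_{s,a}(\tau) = \hat{q}_{s,a}(\tau) - c(s,a)$ for every $\tau$, which is the claimed identity.

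The one step requiring care---and the main obstacle---is justifying that this pointwise minimizer is admissible, i.e.\ that it arises from a valid return distribution and hence solves the constrained problem over $Z$ rather than a relaxation. The crucial structural fact is that the optimal shift $c(s,a)$ is a constant independent of $\tau$: since $\hat{q}_{s,a}$ is the quantile function of a distribution and is therefore nondecreasing, the shifted function $\hat{q}_{s,a}(\tau) - c(s,a)$ is also nondecreasing and thus a legitimate quantile function, so the monotonicity constraint implicit in Assumption~3 is inactive and the unconstrained pointwise minimizer coincides with the constrained one. I would also remark on the support constraint $[V_{\text{min}},V_{\text{max}}]$: a constant shift can in principle push mass past an endpoint, but this is a boundary effect that does not alter the interior quantile identity, and strict convexity guarantees the stated solution is the global optimum. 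I expect the bulk of the work to lie in this admissibility argument and in confirming that $p>1$ is precisely what secures both differentiability at $\delta=0$ and strict convexity, the analysis degenerating at $p=1$, consistent with the paper's standing assumption.
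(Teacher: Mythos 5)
Your proof is correct, and it reaches the paper's conclusion by a genuinely different (and more elementary) route. The paper treats (\ref{eq:CODAC-objective}) as a variational problem: it perturbs the candidate quantile function by $\epsilon\,\phi_{s,a}(\tau)$, differentiates at $\epsilon=0$, and invokes the fundamental lemma of the calculus of variations to force the integrand to vanish pointwise, which yields exactly your first-order condition $\alpha c_0(s,a) + p\,|y-\hat{y}|^{p-1}\mathrm{sign}(y-\hat{y})=0$ and hence the constant shift $c(s,a)$. You instead observe that, because the integrand at each $\tau$ depends only on the scalar value $F^{-1}_{Z(s,a)}(\tau)$ and not on any derivative of it, the functional decouples into independent pointwise problems, each strictly convex for $p>1$; pointwise minimization then suffices and no variational machinery is needed. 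Your route buys two things the paper's does not make explicit: strict convexity certifies that the stationary point is the \emph{global} minimizer (the paper's first-variation argument identifies only stationarity and then asserts validity), and you explicitly verify admissibility, namely that a constant shift of a nondecreasing quantile function remains a legitimate quantile function, where the paper merely writes ``clearly, this choice of $Z$ is valid.'' The one shared imprecision is the support constraint in Assumption 3: a constant shift can push the support outside $[V_{\text{min}},V_{\text{max}}]$, and neither your boundary-effect remark nor the paper's proof resolves this rigorously, so you are no worse off there. Your exploitation of Assumption 1 (so that $\mc{D}(s,a)>0$ pins down the minimizer at every $(s,a)$ with $s\in\mc{D}$) also correctly explains the restriction to $s\in\mc{D}$ in the statement.
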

We give a proof in Appendix~\ref{sec:theorem:iterative-conservative-estimatetion:proof}; roughly speaking, it follows by setting the gradient of (\ref{eq:CODAC-objective}) equal to zero, relying on results from the calculus of variations to handle the fact that $F_{Z(s,a)}^{-1}$ is a function.

Next, we define the \emph{CDE operator} $\tilde{\mc{T}}^\pi=\mc{O}_c\hat{\mc{T}}^\pi$ to be the composition of $\hat{\mc{T}}^\pi$ with the \emph{shift operator} $\mc{O}_c: \mc{Z} \rightarrow \mc{Z}$ defined by $F^{-1}_{\mc{O}_cZ(s,a)}(\tau) = F^{-1}_{Z(s,a)}(\tau) - c(s,a)$;
thus, Lemma~\ref{theorem:iterative-conservative-estimatetion} says $\tilde{Z}^{k+1}=\tilde{\mc{T}}^\pi\tilde{Z}^k$. Now, we show that $\tilde{\mc{T}}^\pi$ is a contraction in the maximum Wasserstein distance $\bar{d}_p$.
\begin{lemma}
\label{theorem:CODAC-contraction}
$\tilde{\mc{T}}^{\pi}$ is a $\gamma$-contraction in $\bar{d}_p$, so $\tilde{Z}^k$ converges to a unique fixed point $\tilde{Z}^\pi$.
\end{lemma}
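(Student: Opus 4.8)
The plan is to factor the CDE operator as $\tilde{\mc{T}}^\pi = \mc{O}_c\hat{\mc{T}}^\pi$ and to treat the two factors separately under $\bar{d}_p$: the empirical Bellman operator $\hat{\mc{T}}^\pi$ contracts by $\gamma$, while the shift operator $\mc{O}_c$ preserves distances exactly (it is an isometry). Composing an isometry with a $\gamma$-contraction yields a $\gamma$-contraction, after which the Banach fixed-point theorem delivers the unique fixed point $\tilde{Z}^\pi$ and convergence of the iterates $\tilde{Z}^{k+1}=\tilde{\mc{T}}^\pi\tilde{Z}^k$.

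First I would establish that $\mc{O}_c$ is an isometry in $\bar{d}_p$. The key observation is that the shift $c(s,a)$ subtracted from the quantile function depends only on $(s,a)$, not on $\tau$; hence for any $Z_1,Z_2\in\mc{Z}$ and any fixed $(s,a)$, expanding the definition of $W_p$ gives
$$W_p(\mc{O}_cZ_1(s,a),\mc{O}_cZ_2(s,a)) = \left(\int_0^1\left|(F^{-1}_{Z_2(s,a)}(\tau)-c(s,a))-(F^{-1}_{Z_1(s,a)}(\tau)-c(s,a))\right|^p d\tau\right)^{1/p},$$
and the two copies of $c(s,a)$ cancel, leaving exactly $W_p(Z_1(s,a),Z_2(s,a))$. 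Taking the supremum over $(s,a)$ yields $\bar{d}_p(\mc{O}_cZ_1,\mc{O}_cZ_2)=\bar{d}_p(Z_1,Z_2)$. This is simply the translation invariance of the Wasserstein distance applied pointwise in $(s,a)$. I would also note that $\mc{O}_c$ maps $\mc{Z}$ into itself, since a constant shift preserves boundedness of the $p$th moment.

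Next I would invoke the $\gamma$-contraction of $\hat{\mc{T}}^\pi$. Although Eq.~\ref{eq:wasserstein-contraction} is stated for $\mc{T}^\pi$, the argument of \cite{bellemare2017distributional} relies only on the structure of the distributional Bellman update (a reward shift, $\gamma$-scaling, and mixing over next state-action pairs) and holds verbatim for any transition/reward model; since $\hat{\mc{T}}^\pi$ is obtained by substituting the estimates $\hat{R},\hat{P}$, it is likewise a $\gamma$-contraction in $\bar{d}_p$. Chaining the isometry with the contraction then gives
$$\bar{d}_p(\tilde{\mc{T}}^\pi Z_1,\tilde{\mc{T}}^\pi Z_2) = \bar{d}_p(\mc{O}_c\hat{\mc{T}}^\pi Z_1,\mc{O}_c\hat{\mc{T}}^\pi Z_2) = \bar{d}_p(\hat{\mc{T}}^\pi Z_1,\hat{\mc{T}}^\pi Z_2) \leq \gamma\,\bar{d}_p(Z_1,Z_2).$$

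Finally I would conclude via the Banach fixed-point theorem: since $(\mc{Z},\bar{d}_p)$ is a complete metric space and $\tilde{\mc{T}}^\pi$ is a $\gamma$-contraction with $\gamma<1$, it has a unique fixed point $\tilde{Z}^\pi$, and the iterates converge to it from any initialization. The only point needing care is completeness of $(\mc{Z},\bar{d}_p)$, which is standard for Wasserstein spaces. I do not expect a genuine obstacle here: the substantive content is the one-line cancellation showing $\mc{O}_c$ is an isometry, which is precisely what makes appending the conservative shift compatible with the existing contraction machinery; everything else is bookkeeping.
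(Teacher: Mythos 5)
Your proposal is correct and follows essentially the same route as the paper: decompose $\tilde{\mc{T}}^\pi=\mc{O}_c\hat{\mc{T}}^\pi$, use the $\gamma$-contraction of $\hat{\mc{T}}^\pi$ in $\bar{d}_p$, observe that the shift operator does not increase $\bar{d}_p$, and conclude via the Banach fixed-point theorem. The only (harmless) difference is that you prove the slightly stronger fact that $\mc{O}_c$ is an isometry via the explicit cancellation of $c(s,a)$, whereas the paper merely asserts it is a non-expansion.
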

The first part follows since $\hat{\mc{T}}^{\pi}$ is a $\gamma$-contraction in $\bar{d}_p$ \cite{bellemare2017distributional, dabney2018distributional}, and $\mc{O}_c$ is a non-expansion in $\bar{d}_p$, so by composition, $\tilde{\mc{T}}^\pi$ is a $\gamma$-contraction in $\bar{d}_p$; the second follows by the Banach fixed point theorem.

Now, our first main theorem says that the fixed point $\tilde{Z}^\pi$ of $\tilde{\mc{T}}^\pi$ is a conservative estimate of $Z^\pi$ at all quantiles $\tau$---i.e., CDE computes quantile estimates that lower bound the quantiles of the true return; furthermore, it says that this lower bound is tight.
\begin{theorem}
\label{theorem:CODAC-quantile-pointwise-lowerbound}
For any $\delta\in\mathbb{R}_{>0}$, $c_0(s,a)>0$, with probability at least $1-\delta$,
\begin{align*}
& F_{Z^\pi(s,a)}^{-1}(\tau)\ge F_{\tilde{Z}^\pi(s,a)}^{-1}(\tau)+ (1-\gamma)^{-1}\min_{s',a'}\{c(s',a')-\Delta(s',a')\}, \\ 
& F_{Z^\pi(s,a)}^{-1}(\tau)\le F_{\tilde{Z}^\pi(s,a)}^{-1}(\tau)+ (1-\gamma)^{-1}\max_{s',a'}\{c(s',a')-\Delta(s',a')\}
\end{align*}
for all $s \in \mc{D}$, $a\in\mc{A}$, and $\tau \in [0,1]$,
where $\Delta(s,a) = \frac{1}{\zeta} \sqrt{\frac{5|\mc{S}|}{n(s,a)}\log \frac{4|\mc{S}||\mc{A}|}{\delta}}$. Furthermore, for $\alpha$ sufficiently large $(i.e., \alpha \geq \max_{s,a} \{ \frac{p\cdot \Delta(s,a)^{p-1}}{c_0(s,a)} \})$, we have $F^{-1}_{Z^{\pi}(s,a)}(\tau) \ge F^{-1}_{\tilde{Z}^{\pi}(s,a)}(\tau)$.
\end{theorem}
We give a proof in Appendix~\ref{sec:theorem:CODAC-quantile-pointwise-lowerbound:proof}. The first inequality says that the quantile estimates computed by CDE form a lower bound on the true quantiles; this bound is not vacuous as long as $\alpha$ satisfies the given condition. Furthermore, the second inequality states that this lower bound is tight.

Many RL objectives (e.g., expected or CVaR return) are distorted expectations (i.e, integrals of the return quantiles). We can extend Theorem \ref{theorem:CODAC-quantile-pointwise-lowerbound} to obtain conservative estimates for all such objectives:
\begin{corollary}
\label{theorem:distorted-expectation-lower-bound}
For any $\delta\in\mb{R}_{>0}$, $c_0(s,a)>0$, $\alpha$ sufficiently large, and $g(\tau)$, with probability at least $1-\delta$, for all $s \in \mc{D}$, $a\in\mc{A}$,
we have $\Phi_g(Z^\pi(s,a))\ge\Phi_g(\tilde{Z}^\pi(s,a))$.
\end{corollary}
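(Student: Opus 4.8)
The plan is to derive this directly from the quantile lower bound in Theorem~\ref{theorem:CODAC-quantile-pointwise-lowerbound} by integrating against the nonnegative weight $g$. The crucial structural fact is that the distorted expectation $\Phi_g(Z(s,a)) = \int_0^1 F^{-1}_{Z(s,a)}(\tau)\,g(\tau)\,d\tau$ is a \emph{linear} functional of the quantile function $F^{-1}_{Z(s,a)}$, and that $g$, being a distribution over $[0,1]$, satisfies $g(\tau) \ge 0$ for all $\tau$. Consequently, any pointwise inequality between quantile functions is preserved under the map $Z \mapsto \Phi_g(Z)$.

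First I would fix $\delta > 0$ and $c_0(s,a) > 0$, and take $\alpha$ large enough for the second conclusion of Theorem~\ref{theorem:CODAC-quantile-pointwise-lowerbound} to hold. That theorem then gives, with probability at least $1-\delta$, the pointwise bound $F^{-1}_{Z^\pi(s,a)}(\tau) \ge F^{-1}_{\tilde{Z}^\pi(s,a)}(\tau)$ simultaneously for all $s \in \mc{D}$, $a \in \mc{A}$, and $\tau \in [0,1]$. I would emphasize that the ``for all $s,a,\tau$'' quantification and the $1-\delta$ confidence are inherited verbatim from the theorem, so no additional union bound or probabilistic argument is needed at this stage.

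Next I would multiply both sides of this inequality by $g(\tau) \ge 0$, which preserves the inequality, and integrate over $\tau \in [0,1]$. By monotonicity of integration against a nonnegative weight, this yields $\int_0^1 F^{-1}_{Z^\pi(s,a)}(\tau)\,g(\tau)\,d\tau \ge \int_0^1 F^{-1}_{\tilde{Z}^\pi(s,a)}(\tau)\,g(\tau)\,d\tau$, which is precisely $\Phi_g(Z^\pi(s,a)) \ge \Phi_g(\tilde{Z}^\pi(s,a))$ by definition of $\Phi_g$. Since the pointwise bound holds for every $s \in \mc{D}$ and $a \in \mc{A}$ on the same high-probability event, the conclusion follows for all such $(s,a)$ with probability at least $1-\delta$.

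There is essentially no hard step: the corollary is an immediate consequence of the monotonicity of the integral. The only points requiring care are confirming that $g(\tau) \ge 0$ (guaranteed since $g$ is a distribution over $[0,1]$) and that the quantile functions are integrable against $g$, which follows from the boundedness $Z^\pi, \tilde{Z}^\pi \in [V_{\text{min}}, V_{\text{max}}]$ ensured by Assumption~3. I would also remark that the argument is agnostic to the choice of $g$, so it applies uniformly to the risk-neutral objective ($g = \text{Uniform}([0,1])$) and the CVaR objective ($g = \text{Uniform}([0,\xi])$) alike, which is the practically relevant takeaway.
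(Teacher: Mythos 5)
Your proof is correct and is essentially the paper's own argument: the paper treats this corollary as an immediate consequence of Theorem~\ref{theorem:CODAC-quantile-pointwise-lowerbound}, obtained by integrating the pointwise quantile lower bound (for $\alpha$ sufficiently large) against the nonnegative weight $g(\tau)$ over $[0,1]$, exactly as you do.
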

Choosing $g=\text{Uniform}([0,1])$ gives $Q^{\pi}(s,a) \ge \tilde{Q}^{\pi}(s,a)$---i.e., a lower bound on the $Q$-function.
CQL~\cite{kumar2020conservative} obtains a similar lower-bound; thus, CDE generalizes CQL to other objectives---e.g.,
it can be used in conjunction with any distorted expectation objective (e.g., CVaR, Wang, CPW, etc.) for risk-sensitive offline RL.

Note that Theorem~\ref{theorem:CODAC-quantile-pointwise-lowerbound} does not preclude the possibility that the lower bounds are more conservative for good actions (i.e., ones for which $\hat{\pi}_{\beta}(a\mid s)$ is larger). We prove that under the choice\footnote{We may have $c_0(s,a)\le0$; we can use $c_0'(s,a)=c_0(s,a)+(1-\min_{s,a}c_0(s,a))$ to avoid this issue.}
\begin{align}
\label{eqn:cnot}
c_0(s,a)=\frac{\mu(a\mid s)-\hat{\pi}_{\beta}(a\mid s)}{\hat{\pi}_{\beta}(a\mid s)}
\end{align}
for some $\mu(a\mid s)\neq\hat{\pi}_{\beta}(a\mid s)$, then $\tilde{\mc{T}}^\pi$ is \textit{gap-expanding}---i.e., the difference in quantile values between in-distribution and out-of-distribution actions is larger under $\tilde{\mc{T}}^\pi$ than under $\mc{T}^\pi$. Intuitively, $c_0(s,a)$ is large for actions $a$ with higher probability under $\mu$ than under $\hat{\pi}_{\beta}$ (i.e., an OOD action).
\begin{theorem}%
\label{theorem:CODAC-quantile-gap-expansion}
For $p=2$, $\alpha$ sufficiently large, and $c_0$ as in (\ref{eqn:cnot}), for all $s\in\mc{S}$ and $\tau\in[0,1]$,
\begin{align*}
\BE_{\hat{\pi}_{\beta}(a\mid s)}F^{-1}_{\tilde{Z}^\pi(s,a)}(\tau) - \BE_{\mu(a\mid s)}F^{-1}_{\tilde{Z}^\pi(s,a)}(\tau) \ge \BE_{\hat{\pi}_{\beta}(a\mid s)}F^{-1}_{Z^\pi(s,a)}(\tau) - \BE_{\mu(a\mid s)}F^{-1}_{Z^\pi(s,a)}(\tau).
\end{align*}
\end{theorem}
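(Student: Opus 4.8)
The plan is to use the fixed-point characterization of Lemma~\ref{theorem:iterative-conservative-estimatetion} to split the left-hand action-gap into an explicit penalty term plus a residual, show the penalty term is a non-negative divergence growing linearly in $\alpha$, and confine the residual to a bounded range so that a large enough $\alpha$ forces the inequality. First I would specialize to $p=2$: since $1/(p-1)=1$ and $\alpha>0$, Lemma~\ref{theorem:iterative-conservative-estimatetion} gives the shift $c(s,a)=\tfrac{\alpha}{2}c_0(s,a)$, which has the same sign as $c_0(s,a)$. Evaluating the fixed point $\tilde Z^\pi=\tilde{\mc T}^\pi\tilde Z^\pi$ yields $F^{-1}_{\tilde Z^\pi(s,a)}(\tau)=F^{-1}_{\hat{\mc T}^\pi\tilde Z^\pi(s,a)}(\tau)-c(s,a)$. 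Substituting this into $\sum_a(\hat{\pi}_{\beta}(a\mid s)-\mu(a\mid s))F^{-1}_{\tilde Z^\pi(s,a)}(\tau)$ and collecting the shift, the penalty contributes
\begin{align*}
-\sum_a\big(\hat{\pi}_{\beta}(a\mid s)-\mu(a\mid s)\big)c(s,a)=\frac{\alpha}{2}\sum_a\frac{\big(\mu(a\mid s)-\hat{\pi}_{\beta}(a\mid s)\big)^2}{\hat{\pi}_{\beta}(a\mid s)}=:\frac{\alpha}{2}D(s),
\end{align*}
a scaled Pearson $\chi^2$-divergence between $\mu(\cdot\mid s)$ and $\hat{\pi}_{\beta}(\cdot\mid s)$; it is non-negative and, since we assume $\mu\neq\hat{\pi}_{\beta}$, strictly positive. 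The theorem then reduces to showing $E(s,\tau)+\tfrac{\alpha}{2}D(s)\ge0$ for the residual
\begin{align*}
E(s,\tau)=\sum_a\big(\hat{\pi}_{\beta}(a\mid s)-\mu(a\mid s)\big)\Big(F^{-1}_{\hat{\mc T}^\pi\tilde Z^\pi(s,a)}(\tau)-F^{-1}_{Z^\pi(s,a)}(\tau)\Big).
\end{align*}

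Next I would bound $|E(s,\tau)|$ uniformly in $\alpha$ and $\tau$. By the no-function-approximation assumption, every candidate distribution---hence the fixed point $\tilde Z^\pi$---is supported on $[V_{\text{min}},V_{\text{max}}]$; since $r\in[R_{\text{min}},R_{\text{max}}]$ and $r+\gamma[V_{\text{min}},V_{\text{max}}]\subseteq[V_{\text{min}},V_{\text{max}}]$, the Bellman image $\hat{\mc T}^\pi\tilde Z^\pi(s,a)$ is again supported on $[V_{\text{min}},V_{\text{max}}]$ (read off the CDF identity, Eq.~\ref{eq:cdf-bellman-operator}). As $F^{-1}_{Z^\pi(s,a)}(\tau)$ also lies in $[V_{\text{min}},V_{\text{max}}]$, each bracketed difference has magnitude at most $V_{\text{max}}-V_{\text{min}}$, so $|E(s,\tau)|\le(V_{\text{max}}-V_{\text{min}})\sum_a|\hat{\pi}_{\beta}(a\mid s)-\mu(a\mid s)|$, a bound independent of both $\alpha$ and $\tau$.

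I would then conclude by magnitude comparison: because $\tfrac{\alpha}{2}D(s)$ grows linearly in $\alpha$ while the residual bound is fixed, any
\begin{align*}
\alpha\ge\max_{s:\,D(s)>0}\frac{2(V_{\text{max}}-V_{\text{min}})\sum_a|\hat{\pi}_{\beta}(a\mid s)-\mu(a\mid s)|}{D(s)}
\end{align*}
(a maximum over finitely many states) gives $E(s,\tau)+\tfrac{\alpha}{2}D(s)\ge0$ for all $s,\tau$, which is exactly the claimed gap expansion; states with $D(s)=0$ carry zero gap-weights and hold with equality.

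The hard part will be the residual control, and in particular justifying that the penalized fixed point genuinely respects the support $[V_{\text{min}},V_{\text{max}}]$: for large $\alpha$ the shift $c(s,a)$ drives the target quantiles past the boundary, so one must argue carefully how the support constraint and the shift operator $\mc O_c$ interact at the edges (in effect, that the penalty saturates OOD quantiles at $V_{\text{min}}$ and in-distribution quantiles at $V_{\text{max}}$, which only \emph{strengthens} the gap). The underlying reason a uniform range bound is needed at all---rather than the explicit linear resolvent $(I-\gamma P^\pi)^{-1}$ available in the tabular $Q$-function analysis of CQL---is that the distributional Bellman operator acts nonlinearly on quantile functions: $\mc T^\pi Z(s,a)$ mixes the next-state returns $Z(s',a')$ over $(s',a')$ and CDF inversion does not commute with this mixture. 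Tracking the $\alpha$-dependence of $\tilde Z^\pi$ through this nonlinearity, using the monotonicity/contraction structure behind Lemma~\ref{theorem:CODAC-contraction} and Theorem~\ref{theorem:CODAC-quantile-pointwise-lowerbound}, is the crux.
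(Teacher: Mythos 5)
Your reduction is correct as far as it goes, and your penalty computation is exactly the paper's: writing the gap difference as $E(s,\tau)+\tfrac{\alpha}{2}D(s)$ with $D(s)=\sum_a(\mu(a\mid s)-\hat{\pi}_{\beta}(a\mid s))^2/\hat{\pi}_{\beta}(a\mid s)>0$ is the same $\chi^2$-type identity ($\BE_{\hat{\pi}_{\beta}}c_0=0$, $\BE_{\mu}c_0=D(s)$) that appears inside the paper's Lemma~\ref{lem:gaphelper}. The genuine gap is the step you yourself flag as the crux: the uniform-in-$\alpha$ bound on $E(s,\tau)$ is not merely hard to justify, it is false in exactly the regime the theorem addresses. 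The fixed point $\tilde{Z}^\pi$ is the Banach fixed point of $\tilde{\mc{T}}^\pi=\mc{O}_c\hat{\mc{T}}^\pi$ on $\mc{Z}$ (Lemma~\ref{theorem:CODAC-contraction}), and $\mc{O}_c$ shifts quantile functions with no clipping. Your two ingredients contradict each other: if $\tilde{Z}^\pi$ were supported on $[V_{\text{min}},V_{\text{max}}]$, then (as you argue) so is $\hat{\mc{T}}^\pi\tilde{Z}^\pi$, but then the fixed-point identity $F^{-1}_{\tilde{Z}^\pi(s,a)}(\tau)=F^{-1}_{\hat{\mc{T}}^\pi\tilde{Z}^\pi(s,a)}(\tau)-\tfrac{\alpha}{2}c_0(s,a)$ forces $F^{-1}_{\tilde{Z}^\pi(s,a)}(\tau)<V_{\text{min}}$ at any action with $c_0(s,a)>0$ (one exists since $\mu\neq\hat{\pi}_{\beta}$) as soon as $\alpha>2(V_{\text{max}}-V_{\text{min}})/c_0(s,a)$. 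So for large $\alpha$ the quantiles of $\tilde{Z}^\pi$, and hence of $\hat{\mc{T}}^\pi\tilde{Z}^\pi$, are displaced by $\Theta(\alpha)$ amounts; $E(s,\tau)$ then acquires its own linear-in-$\alpha$ component (the downstream shifts, mixed through $\hat{P}^\pi$ and weighted by $\hat{\pi}_{\beta}-\mu$, do not cancel), and your magnitude comparison ``linear beats bounded'' degenerates into a comparison of two linear-in-$\alpha$ coefficients that the argument never controls. The saturation fallback does not rescue this: clipping would redefine $\mc{O}_c$, invalidating both Lemma~\ref{theorem:iterative-conservative-estimatetion} (your source of the exact $\tfrac{\alpha}{2}D(s)$ term) and the contraction result you cite.

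The paper avoids working at the fixed point precisely for this reason. Its Lemma~\ref{lem:gaphelper} is a one-step statement: for the \emph{same} input $Z$, the gap under $\tilde{\mc{T}}^\pi Z$ exceeds the gap under $\mc{T}^\pi Z$ by $\tfrac{\alpha}{2}\bar{c}(s)-\bar{\Delta}(s)$, where the residual $\bar{\Delta}(s)$ is the sampling-error bound of Lemma~\ref{lemma:quantile-concentration-bound} (comparing $\hat{\mc{T}}^\pi$ and $\mc{T}^\pi$ applied once to a common $Z$) and is therefore bounded independently of $\alpha$; the shift enters exactly once rather than being compounded through a fixed-point equation. The theorem then follows by running $\tilde{\mc{T}}^\pi$ and $\mc{T}^\pi$ from a common initialization, applying the one-step lemma at every iterate with slack $\bar{\Delta}=4V_{\text{max}}$ to absorb the discrepancy between the two sequences of iterates, and passing to the limit. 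If you want a direct fixed-point argument, you would need a quantitative account of how the shifts $c(s,a)$ propagate through $\hat{\mc{T}}^\pi$---a distributional analogue of the resolvent $(I-\gamma P^\pi)^{-1}$ you mention---which is exactly the nonlinearity you identify as the crux but do not resolve.
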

As before, the gap-expansion property implies gap-expansion of integrals of the quantiles---i.e.:
\begin{corollary}%
\label{corollary:CODAC-distorted-expectation-gap-expansion}
For $p=2$, $\alpha$ sufficiently large, $c_0$ as in (\ref{eqn:cnot}), and any $g(\tau)$, for all $s\in\mc{S}$,
\begin{align*}
\BE_{\hat{\pi}_{\beta}(a\mid s)}\Phi_g(\tilde{Z}^\pi(s,a)) - \BE_{\mu(a\mid s)}\Phi_g(\tilde{Z}^\pi(s,a)) \ge \BE_{\hat{\pi}_{\beta}(a\mid s)}\Phi_g(Z^\pi(s,a)) - \BE_{\mu(a\mid s)}\Phi_g(Z^\pi(s,a)).
\end{align*}
\end{corollary}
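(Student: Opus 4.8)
The plan is to derive this corollary directly from the quantile-level gap-expansion in Theorem~\ref{theorem:CODAC-quantile-gap-expansion} by integrating against the distortion weight $g$. Recall that by definition $\Phi_g(Z(s,a)) = \int_0^1 F^{-1}_{Z(s,a)}(\tau)\,g(\tau)\,d\tau$, so that for any fixed $s$ the quantity on the left-hand side of the claim can be written as
\[
\BE_{\hat{\pi}_{\beta}(a\mid s)}\Phi_g(\tilde{Z}^\pi(s,a)) - \BE_{\mu(a\mid s)}\Phi_g(\tilde{Z}^\pi(s,a)) = \int_0^1 g(\tau)\Big(\BE_{\hat{\pi}_{\beta}(a\mid s)}F^{-1}_{\tilde{Z}^\pi(s,a)}(\tau) - \BE_{\mu(a\mid s)}F^{-1}_{\tilde{Z}^\pi(s,a)}(\tau)\Big)\,d\tau,
\]
and similarly with $\tilde{Z}^\pi$ replaced by $Z^\pi$. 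Here I would first justify pulling the action-expectation (a finite weighted sum over $a\in\mc{A}$ in the finite-MDP setting) inside the integral over $\tau$ by linearity; this interchange is immediate since the sum is finite and each quantile function is integrable on $[0,1]$.

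Having rewritten both sides as integrals over $\tau$ of the same nonnegative weight $g(\tau)$, the next step is to invoke Theorem~\ref{theorem:CODAC-quantile-gap-expansion}, which gives, for every $\tau\in[0,1]$ (and under the same hypotheses $p=2$, $\alpha$ large, $c_0$ as in~(\ref{eqn:cnot})),
\[
\BE_{\hat{\pi}_{\beta}(a\mid s)}F^{-1}_{\tilde{Z}^\pi(s,a)}(\tau) - \BE_{\mu(a\mid s)}F^{-1}_{\tilde{Z}^\pi(s,a)}(\tau) \ge \BE_{\hat{\pi}_{\beta}(a\mid s)}F^{-1}_{Z^\pi(s,a)}(\tau) - \BE_{\mu(a\mid s)}F^{-1}_{Z^\pi(s,a)}(\tau).
\]
Because this inequality holds pointwise in $\tau$, and $g$ is a distribution over $[0,1]$ and hence satisfies $g(\tau)\ge0$, multiplying both sides by $g(\tau)$ and integrating over $\tau\in[0,1]$ preserves the direction of the inequality, yielding exactly the claimed inequality between the distorted expectations.

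The argument is essentially routine once Theorem~\ref{theorem:CODAC-quantile-gap-expansion} is established; the only two points that need care are (i) the nonnegativity of the distortion weight $g$, which is what guarantees that integrating the pointwise inequality does not flip its sign, and (ii) the interchange of the finite action-expectation with the $\tau$-integral. Neither is a real obstacle in the finite-MDP setting, so I expect no hard step here---the corollary is a clean integration of the per-quantile result, mirroring exactly how Corollary~\ref{theorem:distorted-expectation-lower-bound} was obtained from Theorem~\ref{theorem:CODAC-quantile-pointwise-lowerbound}.
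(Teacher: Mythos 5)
Your proposal is correct and matches the paper's own (implicitly stated) argument: the paper derives the corollary exactly by integrating the pointwise-in-$\tau$ inequality of Theorem~\ref{theorem:CODAC-quantile-gap-expansion} against the nonnegative weight $g(\tau)$, using the definition $\Phi_g(Z(s,a))=\int_0^1 F^{-1}_{Z(s,a)}(\tau)g(\tau)\,d\tau$ and linearity of the finite action-expectation. Your two points of care (nonnegativity of $g$ and the finite sum--integral interchange) are exactly the right ones, and nothing further is needed.
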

Together, Corollaries~\ref{theorem:distorted-expectation-lower-bound} \&~\ref{corollary:CODAC-distorted-expectation-gap-expansion} say that CDE provides conservative lower bounds on the return quantiles while being less conservative for in-distribution actions.

Finally, we briefly discuss the condition on $\alpha$ in Theorems~\ref{theorem:CODAC-quantile-pointwise-lowerbound} \&~\ref{theorem:CODAC-quantile-gap-expansion}. In general, $\alpha$ can be taken to be small as long as $\Delta(s,a)$ is small for all $s\in\mc{S}$ and $a\in\mc{A}$, which in turn holds as long as $n(s,a)$ is large---i.e., the dataset $\mc{D}$ has wide coverage.

\section{Conservative offline distributional actor critic}
\label{sec:method_practical}

Next, we incorporate the distributional evaluation algorithm in Section~\ref{sec:method_theoretical} into an actor-critic framework. Following \cite{kumar2020conservative}, we propose a min-max objective where the inner loop chooses the current policy to maximize the CDE objective, and the outer loop minimizes the CDE objective for this policy:
\begin{align*}
\hat{Z}^{k+1}=\operatorname*{\arg\min}_{Z} \max_\mu \left\{ \alpha \cdot \BE_{U(\tau)} \left[ \BE_{\mc{D}(s),\mu(a\mid s)} F^{-1}_{Z(s,a)}(\tau) - \BE_{\mc{D}(s,a)} F^{-1}_{Z(s,a)}(\tau) \right] + \mathcal{L}_p(Z, \hat{\mc{T}}^{\pi^k} \hat{Z}^k)\right\}
\end{align*}
where we have used $c_0$ as in (\ref{eqn:cnot}). We can interpret $\mu$ as an actor policy, the first term as the objective for $\mu$, and the overall objective as an actor-critic algorithm~\cite{degris2012off}. In this framework, a natural choice for $\mu$ is a maximum entropy policy $\mu(a\mid s) \propto \text{exp}(Q(s,a))$~\cite{ziebart2008maximum}. Then,
our objective becomes
\begin{align*}
\hat{Z}^{k+1} = \operatorname*{\arg\min}_{Z} \left\{ \alpha \cdot \BE_{U(\tau)} \left[\BE_{\mc{D}(s)} \log \sum_a \text{exp}(F^{-1}_{Z(s,a)}(\tau)) - \BE_{\mc{D}(s,a)} F^{-1}_{Z(s,a)}(\tau) \right] + \mathcal{L}_p(Z,\hat{\mc{T}}^{\pi^k} \hat{Z}^k) \right\},
\end{align*}
where $U=\text{Uniform}([0,1])$; we provide a derivation in Appendix~\ref{appendix:implementation}.
We call this strategy \emph{conservative offline distributional actor critic (CODAC)}. To optimize over $Z$, we represent the quantile function as a DNN $G_{\theta}(\tau;s,a)\approx F^{-1}_{Z(s,a)}(\tau)$. %
The main challenge is optimizing the term $\mathcal{L}_p(Z,\hat{\mc{T}}^\pi\hat{Z}^k)=W_p(Z,\hat{\mc{T}}^\pi\hat{Z}^k)^p$.
We do so using distributional temporal-differences (TD)~\cite{dabney2018implicit}. For a sample $(s,a,r,s')\sim\mc{D}$ and $a'\sim\pi(\cdot\mid s')$ and random quantiles $\tau,\tau'\sim U$, we have
\begin{align*}
\mathcal{L}_p(Z,\hat{\mc{T}}^\pi\hat{Z}^k) \approx
\mc{L}_{\kappa}(\delta;\tau)
\qquad\text{where}\qquad
\delta=r + \gamma G_{\theta'}(\tau';s',a') - G_{\theta}(\tau;s,a).
\end{align*}
Here, $\delta$ is the distributional TD error, $\theta'$ are the parameters of the target $Q$-network~\cite{mnih2015human}, and
\begin{equation}
\label{equation:huber-quantile-regression-loss}
\mc{L}_{\kappa}(\delta;\tau) =
\begin{cases}
|\tau - \mathbbm{1}(\delta < 0)| \cdot \delta^2/(2\kappa) & \text{if } |\delta|\leq \kappa \\ 
|\tau - \mathbbm{1}(\delta < 0)| \cdot (|\delta|-\kappa/2)& \text{otherwise }.
\end{cases}
\end{equation}
is the $\tau$-Huber quantile regression loss at threshold $\kappa$~\cite{huber1964robust}; then, $\mathbb{E}_{U(\tau)}\mc{L}_{\kappa}(\delta;\tau)$ is an unbiased estimator of the Wasserstein distance that can be optimized using stochastic gradient descent (SGD)~\cite{koenker2001quantile}.
With this strategy, our overall objective can be optimized using any off-policy actor-critic method~\cite{lillicrap2015continuous, haarnoja2018soft,fujimoto2018addressing}; we use distributional soft actor-critic (DSAC)~\cite{ma2020distributional}, which replaces the $Q$-network in SAC~\cite{haarnoja2018soft} with a quantile distributional critic network~\cite{dabney2018distributional}. We provide the full CODAC pseudocode in Algorithm \ref{algo:codac-pseudocode} of Appendix \ref{appendix:implementation}.

\begin{figure*}
\centering
\resizebox{\textwidth}{!}{
\begin{tabular}{cccc}
\textbf{CODAC-C} & \textbf{CODAC-N} & \textbf{ORAAC} & \textbf{CQL} \\ 
\includegraphics[width=.24\linewidth]{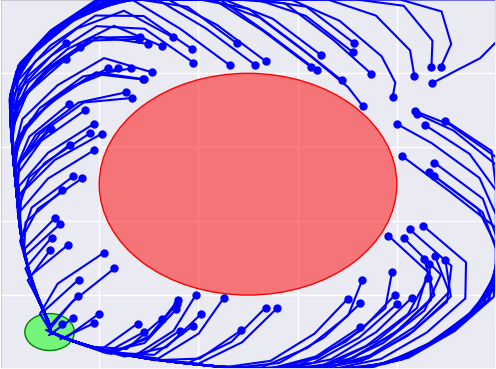} &
\includegraphics[width=.24\linewidth]{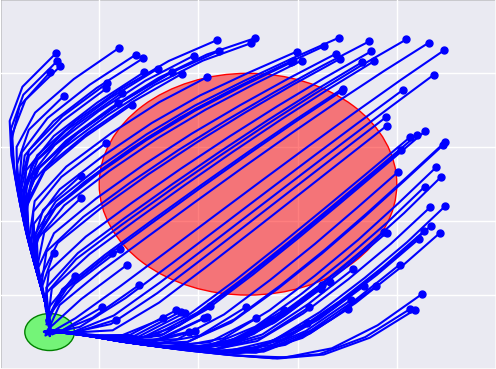} &
\includegraphics[width=.24\linewidth]{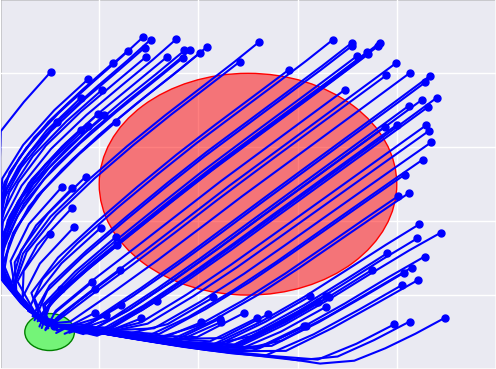} &
\includegraphics[width=.24\linewidth]{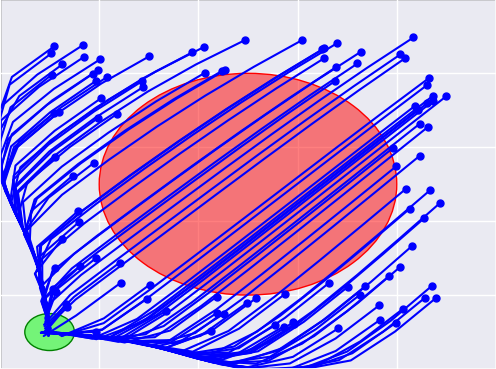}
\end{tabular}}
\caption{2D visualization of evaluation trajectories on the Risky PointMass  environment. The red region is risky, the solid blue circles indicate initial states, and the blue lines are trajectories. CODAC-C is the only algorithm that successfully avoids the risky region.} 
\label{figure:Risky PointMass-trajectories}
\end{figure*}

\section{Experiments}

We show that CODAC achieves state-of-the-art results on both risk-sensitive (Sections~\ref{sec:experiment-risk-sensitive:1} \&\ref{sec:experiment-risk-sensitive:2})
and risk-neutral
(Section \ref{sec:experiment-risk-neutral})
offline RL tasks, including our risky robot navigation and D4RL\footnote{The D4RL dataset license is Apache License 2.0.}~\cite{fu2020d4rl}. We also show that our lower bound (Theorem~\ref{theorem:CODAC-quantile-pointwise-lowerbound}) and gap-expansion (Theorem~\ref{theorem:CODAC-quantile-gap-expansion}) results approximately hold in practice,
(Section \ref{sec:experiment-additional-analysis}), 
validating our theory on CODAC's effectiveness. We provide additional details (e.g., environment descriptions, hyperparameters, and additional results) in Appendix \ref{appendix:experiment}.

\subsection{Risky robot navigation}
\label{sec:experiment-risk-sensitive:1}

\setlength\intextsep{0pt}
\begin{wrapfigure}{r}{0.30\linewidth}
\includegraphics[width=\linewidth]{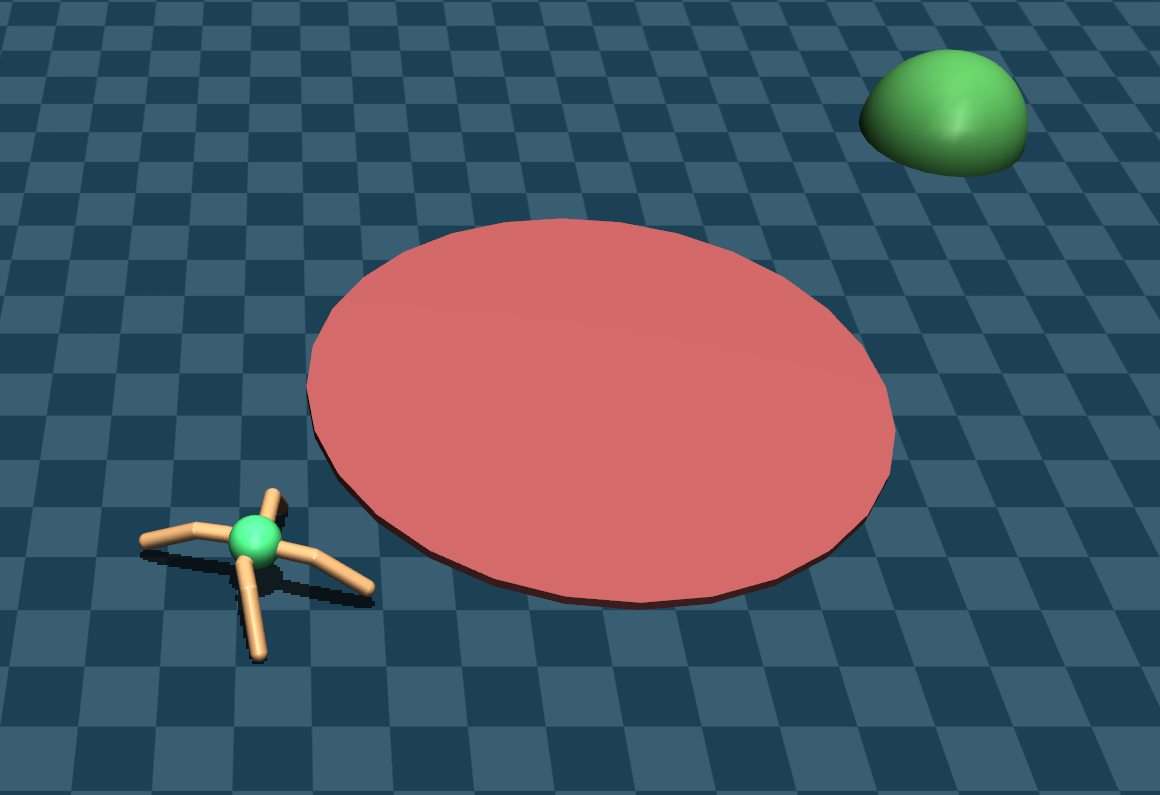}
\caption{
Risky Ant.}
\label{figure:risky-robot-environment}
\end{wrapfigure}

\textbf{Tasks.}
We consider an Ant robot whose goal is to navigate from a random initial state to the green circle as quickly as possible (see Figure~\ref{figure:risky-robot-environment} for a visualization). Passing through the red circle triggers a high cost with small probability, introducing risk. A risk-neutral 
agent may pass through the red region, but a risk-aware agent should not. We also consider a PointMass variant for illustrative purposes.

We construct an offline dataset that is the replay buffer of a risk-neutral
distributional SAC~\cite{ma2020distributional} agent. Intuitively, this choice matches the practical goals of offline RL, where data is gathered from a diverse range of sources with no assumptions on their quality and risk tolerance levels.~\cite{levine2020offline} See Appendix \ref{appendix:risky-robot-navigation} for details on the environments and datasets.

\textbf{Approaches.}
We consider two variants of CODAC: (i) CODAC-N, which maximizes the expected return
and (ii) CODAC-C, which optimizes the $\text{CVaR}_{0.1}$ objective.
We compare to Offline Risk-Averse Actor Critic (ORAAC) \cite{urpi2021risk}, a state-of-the-art offline risk-averse RL algorithm that combines a distributional critic with an imitation-learning based policy to optimize a risk-sensitive objective, and to Conservative Q-Learning (CQL)~\cite{kumar2020conservative}, a state-of-art offline RL algorithm that is non-distributional.

\begin{table*}[t]
\caption{\textbf{Risky robot navigation quantitative evaluation}. CODAC-C achieves the best performance on most metrics and is the only method that learns risk-averse behavior. This table is reproduced with standard deviations in Table \ref{table:risky-robot-navigation-appendix} in Appendix \ref{appendix:experiment}.}
\label{table:risky-robot-navigation}
\centering
\resizebox{\textwidth}{!}{
\begin{tabular}{l|rrrr|rrrr}
\toprule
\multicolumn{1}{c|}{\multirow{2}{*}{\textbf{Algorithm}}}
& \multicolumn{4}{c|}{\textbf{Risky PointMass}}
& \multicolumn{4}{c}{\textbf{Risky Ant}} \\ 
& Mean & Median & $\text{CVaR}_{0.1}$ & Violations & Mean & Median & $\text{CVaR}_{0.1}$ & Violations \\
\midrule
DSAC (Online) & -7.69& -3.82 & -49.9& 94 & -866.1& -833.3 & -1422.7& 2247\\ 
CODAC-C (Ours) & \textbf{-6.05} & -4.89 & \textbf{-14.73}  & \textbf{0.0}  & -456.0  & -433.4   & \textbf{-686.6}  & \textbf{347.8}  \\ 
CODAC-N (Ours) & -8.60  & \textbf{-4.05} & -51.96 & 108.3 & \textbf{-432.7}  & \textbf{-395.1} & -847.1  & 936.0  \\
ORAAC & -10.67 & -4.55 & -64.12 & 138.7 & -788.1  & -795.3  & -1247.2 & 1196  \\
CQL & -7.51 & -4.18  & -43.44  & 93.4& -967.8 & -858.5  & -1887.3 & 1854.3  \\ 
\hline
\end{tabular}}
\end{table*}

\textbf{Results.}
We evaluate each approach using $100$ test episodes, reporting the mean, median, and $\text{CVaR}_{0.1}$  (i.e., average over bottom 10 episodes) returns, and the total number of violations (i.e., time steps spent inside the risky region), all averaged over $5$ random seeds. We also report the performance of the online DSAC agent used to collect data. See Appendix \ref{appendix:experiment} for details. Results are in Table~\ref{table:risky-robot-navigation}. 

\textbf{Performance of CODAC.}
CODAC-C consistently outperforms the other approaches on the $\text{CVaR}_{0.1}$ return, as well as the number of violations, demonstrating that CODAC-C is able to avoid risky actions. It is also competitive in terms of mean return due to its high $\text{CVaR}_{0.1}$ performance, but performs slightly worse on median return, since it is not designed to optimize this objective. Remarkably, on Risky PointMass, CODAC-C learns a safe policy that completely avoids the risky region (i.e., zero violations), even though such behavior is absent in the dataset. In Appendix~\ref{appendix:risky-robot-navigation}, we also show that CODAC can successfully optimize alternative risk-sensitive objectives such as Wang and CPW.

\textbf{Comparison to ORAAC.}
While ORAAC also optimizes the CVaR objective, it uses imitation learning to regularize the learned policy to stay close to the empirical behavior policy. However, the dataset contains many sub-optimal trajectories generated early in training, and is furthermore risk-neutral. Thus, imitating the behavioral policy encourages poor performance. In practice, a key use of offline RL is to leverage large datasets available for training, and such datasets will rarely consist of data from a single, high-quality behavioral policy. Our results demonstrate that CODAC is significantly better suited to learned in these settings compared to ORAAC.

\textbf{Comparison to CQL.}
On Risky PointMass, CQL learns a risky policy with poor tail-performance, indicated by its high median performance but low $\text{CVaR}_{0.1}$ performance. Interestingly, its mean performance is also poor; intuitively, the mean is highly sensitive to outliers that may not be present in the training dataset. On Risky Ant, possibly due to the added challenge of high-dimensionality, CQL performs poorly on all metrics, failing to reach the goal and to avoid the risky region. As expected, these results show that accounting for risk is necessary in risky environments.

\textbf{Qualitative analysis.}
In Figure~\ref{figure:Risky PointMass-trajectories}, we show the $100$ evaluation rollouts from each policy on Risky PointMass. As can be seen, CODAC-C extrapolates a safe policy that distances itself from the risky region before proceeding to the goal; in contrast, all other agents traverse the risky region. For Ant, we include plots of the trajectories of trained agents in Appendix~\ref{appendix:risky-robot-navigation}, and videos in the supplement.

\subsection{Risk-sensitive D4RL}
\label{sec:experiment-risk-sensitive:2}

\textbf{Tasks.}
Next, we consider stochastic D4RL~\cite{urpi2021risk}. The original D4RL benchmark~\cite{fu2020d4rl} consists of datasets collected by SAC agents of varying performance (Mixed, Medium, and Expert) on the Hopper, Walker2d, and HalfCheetah MuJoCo environments~\cite{todorov2012mujoco}; stochastic D4RL relabels the rewards to represent stochastic robot damage for behaviors such as unnatural gaits or high velocities; see Appendix \ref{appendix:stochastic-d4rl}.
The Expert dataset consists of rollouts from a fixed SAC agent trained to convergence; the Medium dataset is constructed the same way except the agent is trained to only achieve 50\% of the expert agent's return. The Mixed dataset is the replay buffer of the Medium agent.

\textbf{Results.}
In Table \ref{table:d4rl-mujoco-regular} (Left), we report the mean and $\text{CVaR}_{0.1}$ returns on test episodes from each approach, averaged over 5 random seeds. We show results on the Expert dataset in Appendix \ref{appendix:stochastic-d4rl}; CODAC still achieves the strongest performance. As can be seen, CODAC-C and CODAC-N outperform both CQL and ORAAC on most datasets. Surprisingly, CODAC-N is quite effective on the $\text{CVaR}_{0.1}$ metric despite its risk-neutral objective;
a likely explanation is that for these datasets, mean and CVaR performance are highly correlated. Furthermore, we observe that directly optimizing CVaR may lead to unstable training, potentially since CVaR estimates have higher variance. This instability occurs for both CODAC-C and ORAAC---on Walker2d-Medium, they perform worse than the risk-neutral algorithms.
Overall, CODAC-C outperforms CODAC-N in terms of $\text{CVaR}_{0.1}$ on about half of the datasets, %
and often improves mean performance as well.
Next, while ORAAC is generally effective on Medium datasets, it performs poorly on Mixed datasets; these results mirror the ones in Section~\ref{sec:experiment-risk-sensitive:1}. Finally, CQL's performance varies drastically across datasets; we hypothesize that learning the full distribution helps stabilize training in CODAC. In Appendix~\ref{appendix:stochastic-d4rl}, we also qualitatively analyze the behavior learned by CODAC compared to the baselines, demonstrating that the better CVaR performance CODAC obtains indeed translates to safer locomotion behaviors.

\begin{table*}[t]
\caption{\textbf{D4RL results.} CODAC achieves the best overall performance in both risk-sensitive (\textbf{Left}) and risk-neutral (\textbf{Right}) variants of the benchmark. These tables are reproduced with standard deviations in Tables \ref{table:d4rl-mujoco-stochastic-appendix} \& \ref{table:d4rl-mujoco-regular-full-appendix} in Appendix \ref{appendix:experiment}.}
\label{table:d4rl-mujoco-regular}
\centering
\resizebox{\textwidth}{!}{
\begin{tabular}{ll|rr|rr}
\toprule
& \multirow{2}{*}{\textbf{Algorithm}}& \multicolumn{2}{c|}{\textbf{Medium}}
& \multicolumn{2}{c}{\textbf{Mixed}}\\ 
& & Mean & $\text{CVaR}_{0.1}$ & Mean & $\text{CVaR}_{0.1}$\\
\midrule 
\multirow{4}{*}{\rotatebox[origin=c]{90}{Cheetah}} &
CQL & 33.2 & -15.0 & 214.1& 12.0 \\ 
& ORAAC & \textbf{361.4}& \textbf{91.3}& 307.1& 118.9\\ 
& CODAC-N & 338 & -41 & 347.7 & 149.2 \\ 
& CODAC-C & 335 & -27 & \textbf{396.4}& \textbf{238.5}\\ 
\midrule
\multirow{4}{*}{\rotatebox[origin=c]{90}{Hopper}} &
CQL & 877.9& 693.0& 189.2 & -21.4 \\ 
& ORAAC & 1007.1& 767.6& 876.3 & 524.9 \\ 
& CODAC-N & 993.7& 952.5& 1483.9& \textbf{1457.6}\\ 
& CODAC-C &  \textbf{1014.0}& \textbf{976.4}& \textbf{1551.2}& 1449.6\\ 
\midrule
\multirow{4}{*}{\rotatebox[origin=c]{90}{Walker2d}} &
CQL & 1524.3 & \textbf{1343.8} & 74.3 & -64.0 \\ 
& ORAAC & 1134.1 & 663.0 & 222.0& -69.6\\ 
& CODAC-N & \textbf{1537.3} & 1158.8 & 358.7& 106.4\\ 
& CODAC-C & 1120.8 & 902.3 & \textbf{450.0} & \textbf{261.4}\\ 
\bottomrule
\end{tabular}
\quad 
\begin{tabular}{l|rrrr|r}
\toprule
\multicolumn{1}{c|}{\textbf{Dataset}} &
\multicolumn{1}{c}{\textbf{BCQ}} &
\multicolumn{1}{c}{\textbf{MOPO}} &
\multicolumn{1}{c}{\textbf{CQL}} &
\multicolumn{1}{c|}{\textbf{ORAAC}} &
\multicolumn{1}{c}{\textbf{CODAC}} \\
\midrule
halfcheetah-random & 2.2 & \textbf{35.4} & 35.4 & 13.5 & 34.6 \\
hopper-random &  10.6 & \textbf{11.7} & 10.8 & 9.8& 11.0 \\
walker2d-random  & 4.9 & 13.6 & 7.0& 3.2& \textbf{18.7}\\ \midrule
halfcheetah-medium  &  40.7 & 42.3& 44.4& 41.0 & \textbf{46.3} \\
walker2d-medium  & 53.1 & 17.8 & 79.2& 27.3 &\textbf{82.0}  \\
hopper-medium  &  54.5 & 28.0 & 58.0& 1.48 & \textbf{70.8}  \\
\midrule
halfcheetah-mixed  &  38.2 & \textbf{53.1} & 46.2& 30.0& 44.1 \\
hopper-mixed  & 33.1 & 67.5 & 48.6& 16.3& \textbf{100.2} \\
walker2d-mixed   & 15.0 & \textbf{39.0}& 26.7& 28 & 33.2 \\
\midrule
halfcheetah-med-exp   & 64.7 & 63.3& 62.4 & 24.0& \textbf{70.4} \\
walker2d-med-exp  & 57.5 & 44.6&98.7& 28.2 & \textbf{106.0} \\
hopper-med-exp  & 110.9 & 23.7& 111.0& 18.2 & \textbf{112.0} \\
\bottomrule
\end{tabular}}
\end{table*}

\subsection{Risk-neutral D4RL}
\label{sec:experiment-risk-neutral}

\textbf{Task.}
Next, we show that CODAC is effective even when the goal is to optimize the standard expected return. To this end, we evaluate CODAC-N on the popular D4RL Mujoco benchmark \cite{fu2020d4rl}.

\textbf{Baselines.}
We compare to state-of-art algorithms benchmarked in \cite{fu2020d4rl} and \cite{yu2020model}, including Batch-Constrained Q-Learning (BCQ), Model-Based Offline Policy Optimization (MOPO) \cite{yu2020model}, and CQL. We also include ORAAC as an offline distributional RL baseline. We have omitted less competitive baselines included in \cite{fu2020d4rl} from the main text; a full comparison is included in Appendix \ref{appendix:regular-d4rl}.

\textbf{Results.}
Results for non-distributional approaches are directly taken from \cite{fu2020d4rl}; for ORAAC and CODAC, we evaluate them using 10 test episodes in the environment, averaged over 5 random seeds.
 As shown in Table~\ref{table:d4rl-mujoco-regular} (Right), CODAC achieves strong performance across all 12 datasets, obtaining state-of-art results on 5 datasets (walker2d-random, hopper-medium, hopper-mixed, halfcheetah-medium-expert, and walker2d-medium-expert), demonstrating that performance improvements from distributional learning also apply in the offline setting. Note that CODAC's advantage is not solely due to distributional RL---ORAAC also uses distributional RL, but in most cases underperforms prior state-of-the-art,
These results suggest that CODAC's use of a conservative penalty is critical for it to achieve strong performance.

\subsection{Analysis of Theoretical Insights}
\label{sec:experiment-additional-analysis}

\begin{wraptable}{r}{0.7\textwidth}

\caption{\textbf{Monte-Carlo estimate vs. critic prediction.} The CODAC-predicted expected and $\text{CVaR}_{0.1}$ return is a lower bound on a MC estimate of the true value.}
\label{table:CODAC-conservative-lower-bound}

\centering
\resizebox{0.7\textwidth}{!}{
\begin{tabular}{l|rr|rr|rr}
\toprule
\multirow{2}{*}{\textbf{Regular}}& \multicolumn{2}{c|}{\textbf{Walker2d-Medium}}
& \multicolumn{2}{c|}{\textbf{Walker2d-Mixed}}
& \multicolumn{2}{c}{\textbf{Walker2d-Medium-Expert}}\\ 
& MC Return & Q-Estimate & MC Return & Q-Estimate & MC Return & Q-Estimate \\
\midrule
CODAC & 240.2& 55.7& 127.1& 97.6& 370.& 39.7\\ 
CQL & 247.2& 53.0& 124.5& -45.2& 369.7& 116.4\\ 
ORAAC & 245.2 & 302.2 & 118.2 & 7.70$\times 10^5$ & 68.2& 322.2\\ 
\bottomrule
\toprule
\multirow{2}{*}{\textbf{Stochastic}}& \multicolumn{2}{c|}{\textbf{Walker2d-Medium}}& \multicolumn{2}{c|}{\textbf{Walker2d-Mixed}} & \multicolumn{2}{c}{\textbf{Walker2d-Medium-Expert}}\\ 
& MC $\text{CVaR}_{0.1}$ & Z-Estimate & MC $\text{CVaR}_{0.1}$ & Z-Estimate & MC $\text{CVaR}_{0.1}$ & Z-Estimate \\
\midrule
CODAC & 185.7& 204.2& 85.6 & 59.9& 265.3 & -127.8\\ 
ORAAC & 201.9& 367.6& 50.9 & 1.54$\times 10^6$ & 199.5& 343.5\\ 
\bottomrule
\end{tabular}}
\end{wraptable}

We perform additional experiments to validate that our theoretical insights in Section~\ref{sec:method_theoretical} hold in practice, suggesting that they help explain CODAC's empirical performance.

\textbf{Lower bound.} We show that in practice, CODAC obtains conservative estimates of the $Q$ and CVaR objectives across different dataset types (i.e., Medium vs. Mixed vs. Medium-Expert).
Given an initial state $s_0$, we obtain a Monte Carlo (MC) estimate of $Q$ and CVaR for $(s_0,\pi(s_0))$ based on sampled rollouts from $s_0$, and compare them to the values predicted by the critic. In Table~\ref{table:CODAC-conservative-lower-bound}, we show results averaged over 10 random $s_0$ and with 100 MC samples for each $s_0$.
CODAC obtains conservative estimates for both $Q$ and CVaR; in contrast, ORAAC overestimates these values, especially on Mixed datasets, and CQL only obtains conservative estimates for $Q$, not CVaR.

\setlength\intextsep{0.5pt}
\begin{wraptable}{r}{0.7\textwidth}

\caption{\textbf{Gap-expansion:} CODAC expands the quantile gap and obtains higher returns than an ablation without the conservative penalty.}
\label{table:gap-expansion}

\centering
\resizebox{0.7\textwidth}{!}{
\begin{tabular}{l|rr|rr|rr}
\toprule
\multirow{2}{*}{}& \multicolumn{2}{c|}{\textbf{HalfCheetah-Medium-Expert}}
& \multicolumn{2}{c|}{\textbf{Hopper-Medium-Expert}}
& \multicolumn{2}{c}{\textbf{Walker2d-Medium-Expert}} \\ 
& Positive Gap \% & Return & Positive Gap \%& Return & Positive Gap \%& Return \\
\midrule
CODAC & \textbf{95.3} & \textbf{93.6} & \textbf{91.3} & \textbf{111.9} & \textbf{91.1} & \textbf{111.3}\\ 
CODAC w.o. Penalty & 4.7 & 12.1 & 8.7 & 25.8 & 8.9 & 5.9\\ 
\bottomrule
\end{tabular}}
\end{wraptable}

\textbf{Gap-Expansion.} Next, we verify that CODAC's quantile estimates expand their gap between in-distribution and out-of-distribution actions. We use the D4RL Medium-Expert datasets where CODAC uniformly performs well, making them ideal for understanding the source of CODAC's empirical performance. We train ``CODAC w.o. Penalty'', a non-conservative variant of CODAC (i.e., $\alpha=0$), and use its actor as $\mu$ and its critic as $F^{-1}_{Z^\pi}$. Next, for each dataset, we randomly sample $1000$ state-action pairs and $32$ quantiles $\tau$, resulting in $32000$ $(s,a,\tau)$ tuples; for each one, we compute the quantile gaps for CODAC and CODAC w.o. Penalty.
In Table~\ref{table:gap-expansion}, we show the percentage of tuples where each CODAC variant has a larger quantile gap, along with their average return. As can be seen, CODAC has a larger gap for more than $90\%$ of the tuples on all datasets, as well as significantly higher returns. These results show that gap-expansion holds in practice and suggest that it helps CODAC achieve good performance.

\vspace{-0.3cm}
\section{Conclusion}
We have introduced Conservative Offline Distributional Actor-Critic (CODAC), a general purpose offline distributional reinforcement learning algorithm. We have proven that CODAC obtains conservative estimates of the  return quantile, which translate into lower bounds on $Q$ and CVaR values. In our experiments, CODAC outperforms prior approaches on both stochastic, risk-sensitive offline RL benchmarks, as well as traditional, risk-neutral benchmarks.

One limitation of our work is that CODAC has hyperparameters that must be tuned (in particular, the penalty magnitude $\alpha$). As in prior work, we choose these hyperparameters by evaluate online rollouts in the environment. Designing better hyperparameter selection strategies for offline RL is an important direction for future work. Finally, we do not foresee any societal impacts or ethical concerns for our work, other than the usual risks around algorithms for improving robotics capabilities.

\begin{ack}
This work is funded in part by an Amazon Research Award, gift funding from NEC Laboratories America, NSF Award CCF-1910769, and ARO Award W911NF-20-1-0080. The U.S. Government is authorized to reproduce and distribute reprints for Government purposes notwithstanding any copyright notation herein.
\end{ack}

\bibliographystyle{plain}
\bibliography{arxiv_clean}

\clearpage
\appendix
\section{Proofs}
\label{appendix:theorem-proofs}

\subsection{Proof of Lemma~\ref{theorem:iterative-conservative-estimatetion}}
\label{sec:theorem:iterative-conservative-estimatetion:proof}

Recall that the $p$-Wasserstein distance is the $L_p$ metric between quantile functions (see Eq.~\ref{eq:wasserstein-contraction}). Thus, we can re-write the CODAC objective as
\begin{align*}
&\alpha \cdot \BE_{U(\tau),\mc{D}(s,a)} \left[ c_0(s,a) \cdot F^{-1}_{Z(s,a)}(\tau) \right] + \BE_{\mc{D}(s,a)} \int_0^1 \left| F^{-1}_{Z(s,a)}(\tau) - F^{-1}_{\hat{\mc{T}}^{\pi}\hat{Z}^k(s,a)}(\tau) \right|^p d\tau \\
&=\int_0^1 \BE_{\mc{D}(s,a)} \left[\alpha \cdot 
c_0(s,a)\cdot F^{-1}_{Z(s,a)}(\tau) + \left| F^{-1}_{Z(s,a)}(\tau) - F^{-1}_{\hat{\mc{T}}^{\pi}\hat{Z}^k(s,a)}(\tau) \right|^p \right] d\tau.
\end{align*}
We consider a perturbation
\begin{align*}
G_{s,a}^\epsilon(\tau)=F_{Z(s,a)}^{-1}(\tau)+\epsilon\cdot\phi_{s,a}(\tau)
\end{align*}
for arbitrary smooth functions $\phi_{s,a}$ with compact support $[V_{\text{min}},V_{\text{max}}]$, yielding
\begin{align*}
\int_0^1 \BE_{\mc{D}(s,a)} \left[ %
\alpha c_0(s,a)\cdot G_{s,a}^\epsilon(\tau) + \left| G_{s,a}^\epsilon(\tau) - F^{-1}_{\hat{\mc{T}}^{\pi}\hat{Z}^k(s,a)}(\tau) \right|^p  \right] d\tau.
\end{align*}
Taking the derivative with respect to $\epsilon$ at $\epsilon=0$, we have
\begin{align*}
&\frac{d}{d\epsilon} \int_0^1 \BE_{\mc{D}(s,a)} \left[
\alpha c_0(s,a)\cdot G_{s,a}^\epsilon(\tau) + \left| G_{s,a}^\epsilon(\tau) - F^{-1}_{\hat{\mc{T}}^{\pi}\hat{Z}^k(s,a)}(\tau) \right|^p  \right] d\tau\Bigg|_{\epsilon=0} \\
&= \BE_{\mc{D}(s,a)} \int_0^1 
\left[ \alpha c_0(s,a)
+ p \left| F^{-1}_{Z(s,a)}(\tau) - F^{-1}_{\hat{\mc{T}}^{\pi}\hat{Z}^k(s,a)}(\tau) \right|^{p-1} \text{sign}\left(F^{-1}_{Z(s,a)}(\tau) - F^{-1}_{\hat{\mc{T}}^{\pi}\hat{Z}^k(s,a)}(\tau) \right)\right]\phi_{s,a}(\tau)  d\tau.
\end{align*}
This term must equal $0$ for $F_{Z(s,a)}^{-1}$ to minimize the objective; otherwise, some perturbation $G_{s,a}^{\epsilon}$ decreases the objective value. Since $\phi_{s,a}$ are arbitrary, it must equal zero for each $s,a$ individually; otherwise, increasing $\phi_{s,a}$ would increase the term, making it nonzero. Thus, we have
\begin{align*}
&\int_0^1
\left[ \alpha c_0(s,a) + p\left| F^{-1}_{Z(s,a)}(\tau) - F^{-1}_{\hat{\mc{T}}^{\pi}\hat{Z}^k(s,a)}(\tau) \right|^{p-1} \text{sign}\Big(F^{-1}_{Z(s,a)}(\tau) - F^{-1}_{\hat{\mc{T}}^{\pi}\hat{Z}^k(s,a)}(\tau) \Big)\right] \phi_{s,a}(\tau)  d\tau=0
\end{align*}
for all $s,a$. Then, by the fundamental lemma of the calculus of variations, for each $s,a$, if this term is zero for all $\phi_{s,a}$, then the integrand must be zero---i.e.,
\begin{align*}
\alpha c_0(s,a) + p\left| F^{-1}_{Z(s,a)}(\tau) - F^{-1}_{\hat{\mc{T}}^{\pi}\hat{Z}^k(s,a)}(\tau) \right|^{p-1} \text{sign}\left(F^{-1}_{Z(s,a)}(\tau) - F^{-1}_{\hat{\mc{T}}^{\pi}\hat{Z}^k(s,a)}(\tau) \right) = 0,
\end{align*}
which holds if and only if
\begin{align*}
F^{-1}_{Z(s,a)}(\tau) = F^{-1}_{\hat{\mc{T}}^{\pi}\hat{Z}^{k}(s,a)}(\tau)
- c(s,a).
\end{align*}
where $c(s,a)=|\alpha p^{-1}c_0(s,a)|^{1/(p-1)}\cdot\mathrm{sign}(c_0(s,a))$,
Clearly, this choice of $Z$ is valid, so the claim follows. $\qed$

\subsection{Proof of Theorem~\ref{theorem:CODAC-quantile-pointwise-lowerbound}}
\label{sec:theorem:CODAC-quantile-pointwise-lowerbound:proof}

First, we have the following result, which is a concentration bound on the quantile values; this result enables us to bound the estimation error of $\hat{\mc{T}}^\pi$ compared to $\mc{T}^\pi$:
\begin{lemma}
\label{lemma:quantile-concentration-bound}
Let $n(s,a)=|\{(s,a)\mid(s,a,r,s')\in\mc{D}\}|$ be the number of times $(s,a)$ occurs in $\mc{D}$. For any return distribution $Z$ with $\zeta$-strongly monotone CDF $F_{Z(s,a)}$ and any $\delta\in\mb{R}_{>0}$, with probability at least $1-\delta$, for all $s\in \mc{D}$ and $a\in\mc{A}$, we have \begin{align*}
\|F^{-1}_{\hat{\mc{T}}^\pi Z(s,a)} - F^{-1}_{\mc{T}^\pi Z(s,a)}\|_{\infty}
\le\Delta(s,a)
\quad\text{where}\quad
\Delta(s,a)=\frac{1}{\zeta} \sqrt{\frac{5|\mc{S}|}{n(s,a)} \log \frac{4 |\mc{S}||\mc{A}|}{\delta}}.
\end{align*}
\end{lemma}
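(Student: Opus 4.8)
The plan is to reduce the claim about quantile functions (inverse CDFs) to a \emph{uniform} concentration bound on the CDFs themselves, exploiting the $\zeta$-strong monotonicity to convert CDF error into quantile error. Concretely, I would first establish the elementary fact that if $F_1,F_2$ are CDFs with $\sup_x |F_1(x)-F_2(x)|\le\epsilon$ and $F_2$ is $\zeta$-strongly monotone, then $\sup_\tau|F_1^{-1}(\tau)-F_2^{-1}(\tau)|\le\epsilon/\zeta$. This follows by fixing $\tau$, setting $x^\star=F_2^{-1}(\tau)$, and using strong monotonicity to get $F_2(x^\star+\epsilon/\zeta)\ge\tau+\epsilon$ and $F_2(x^\star-\epsilon/\zeta)\le\tau-\epsilon$; combined with $|F_1-F_2|\le\epsilon$ and the infimum definition of $F_1^{-1}$, this pins $F_1^{-1}(\tau)$ into $[x^\star-\epsilon/\zeta,\,x^\star+\epsilon/\zeta]$, so no invertibility of $F_1$ is needed. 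Applying this with $F_1=F_{\hat{\mc{T}}^\pi Z(s,a)}$ and $F_2=F_{\mc{T}^\pi Z(s,a)}$ reduces the lemma to showing that $\epsilon(s,a)\coloneqq\sup_x|F_{\hat{\mc{T}}^\pi Z(s,a)}(x)-F_{\mc{T}^\pi Z(s,a)}(x)|$ is at most $\zeta\,\Delta(s,a)$ with the stated probability.

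Two things must be checked to run this reduction. First, $F_{\mc{T}^\pi Z(s,a)}$ must itself be $\zeta$-strongly monotone; differentiating the CDF Bellman identity (Eq.~\ref{eq:cdf-bellman-operator}) under the sum and integral gives $F'_{\mc{T}^\pi Z(s,a)}(x)=\gamma^{-1}\sum_{s',a'}P^\pi(s',a'\mid s,a)\int F'_{Z(s',a')}(\tfrac{x-r}{\gamma})\,dF_{R(s,a)}(r)\ge\zeta/\gamma\ge\zeta$, using $F'_{Z(s',a')}\ge\zeta$ and $\gamma<1$, so strong monotonicity propagates through $\mc{T}^\pi$. Second, I would bound $\epsilon(s,a)$ by splitting the error into a transition part and a reward part. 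Writing $\hat{\mc{T}}^\pi$ in the form of Eq.~\ref{eq:cdf-bellman-operator} with $P,R$ replaced by the empirical $\hat P(\cdot\mid s,a),\hat F_{R(s,a)}$, and using that $\pi$ is known (so $P^\pi$ is estimated only through $\hat P$), a telescoping decomposition gives $\epsilon(s,a)\le\norm{\hat P(\cdot\mid s,a)-P(\cdot\mid s,a)}_1+\sup_r|\hat F_{R(s,a)}(r)-F_{R(s,a)}(r)|$. The transition term is uniform in $x$ because the inner integral $\int F_{Z(s',a')}(\tfrac{x-r}{\gamma})\,dF_R(r)$ lies in $[0,1]$, so it is dominated by the $\ell_1$ deviation of the empirical transition vector; the reward term is made uniform in $x$ by integration by parts, bounding $|\int F_{Z}(\tfrac{x-r}{\gamma})\,d(\hat F_R-F_R)(r)|$ by $\sup_r|\hat F_R-F_R|$ times the total variation of the (monotone, hence TV-$1$) map $r\mapsto F_Z(\tfrac{x-r}{\gamma})$.

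It then remains to control the two deviations by standard concentration. For the transition vector over $|\mc{S}|$ outcomes I would invoke a multinomial $\ell_1$ concentration inequality (Bretagnolle--Huber--Carol / Weissman), giving $\norm{\hat P-P}_1\lesssim\sqrt{(|\mc{S}|+\log(1/\delta'))/n(s,a)}$ with probability $1-\delta'$, which is exactly the source of the $|\mc{S}|$ factor in $\Delta(s,a)$; for the reward CDF I would use the DKW inequality, giving $\sup_r|\hat F_R-F_R|\lesssim\sqrt{\log(1/\delta')/n(s,a)}$, a lower-order term. Taking $\delta'=\Theta(\delta/(|\mc{S}||\mc{A}|))$ and union bounding over all $s\in\mc{D}$, $a\in\mc{A}$ produces the $\log\frac{4|\mc{S}||\mc{A}|}{\delta}$ inside the square root, and loosely combining the two $\sqrt{\cdot}$ terms absorbs the remaining constants into the factor $5$. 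Dividing by $\zeta$ from the reduction yields the claimed $\Delta(s,a)$. I expect the main obstacle to be the CDF concentration step: specifically, obtaining a bound that is uniform over the continuum of thresholds $x$ rather than pointwise (resolved above by the $\ell_1$/TV arguments that decouple the $x$-dependence from the randomness), and bookkeeping the constants from two different concentration inequalities plus the union bound into the stated single expression.
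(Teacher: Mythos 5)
Your proposal matches the paper's proof essentially step for step: the same two-part structure of (i) a sup-norm CDF concentration bound obtained by adding and subtracting a cross term to split the error into a reward part (bounded via DKW) and a transition part (bounded via H\"older and multinomial $\ell_1$ concentration), with a union bound over $(s,a)$ producing the $\log\frac{4|\mc{S}||\mc{A}|}{\delta}$ factor and the constant $5$ absorbing both terms, followed by (ii) a conversion of the CDF bound into a quantile bound using $\zeta$-strong monotonicity, losing a factor $1/\zeta$. The only cosmetic differences are that you prove the CDF-to-quantile conversion by a direct sandwich argument from the infimum definition of $F^{-1}$ (the paper uses a change-of-variables integral), you bound the reward term by integration by parts rather than the paper's symmetric convolution identity, and you place the strong-monotonicity hypothesis on $F_{\mc{T}^\pi Z(s,a)}$ with an explicit propagation argument, whereas the paper applies it to $F_{\hat{\mc{T}}^\pi Z(s,a)}$ and leaves that propagation implicit.
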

This lemma follows by first using the
Dvoretzky-Kiefer-Wolfowitz inequality to bound the error of the empirical CDF $F_{\hat{\mc{T}}^\pi Z(s,a)}$ compared to the true CDF $F_{\mc{T}^\pi Z(s,a)}$ using similar analysis as in~\cite{keramati2020being}, and then leveraging monotonicity to bound the quantile functions; we give a proof in Appendix~\ref{sec:lemma:quantile-concentration-bound:proof}.
Next, we have the following key lemma, which relates one-step distributional Bellman contraction to an $\infty$-norm bound at the fixed point.
\begin{lemma}
\label{lem:contractionbound}
If $Z$ satisfies $\|F_{Z(s,a)}^{-1}-F_{\mc{T}Z(s,a)}^{-1}\|_\infty\le\beta$ for all $s\in\mc{S}$ and $a\in\mc{A}$, then
\begin{align*}
\|F_{Z(s,a)}^{-1}-F_{Z^\pi(s,a)}^{-1}\|_\infty\le(1-\gamma)^{-1}\beta
\qquad(\forall s\in\mc{S},a\in\mc{A}),
\end{align*}
\end{lemma}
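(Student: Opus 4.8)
The plan is to recognize this as the standard ``one-step error controls fixed-point error'' estimate for a contraction mapping, carried out in the supremal $\infty$-Wasserstein metric $\bar{d}_\infty(Z_1,Z_2) \coloneqq \sup_{s,a} W_\infty(Z_1(s,a),Z_2(s,a))$, where $W_\infty(Z_1(s,a),Z_2(s,a)) = \norm{F^{-1}_{Z_1(s,a)}-F^{-1}_{Z_2(s,a)}}_\infty$ is exactly the $\infty$-norm between quantile functions appearing in the statement. Writing $\mc{T}=\mc{T}^\pi$ (the fixed point being $Z^\pi$), the hypothesis reads $\bar{d}_\infty(Z,\mc{T}^\pi Z)\le\beta$, and since $\mc{T}^\pi Z^\pi = Z^\pi$ by Eq.~\ref{eq:distributional-bellman-operator-identity}, the target is $\bar{d}_\infty(Z,Z^\pi)\le(1-\gamma)^{-1}\beta$, which then implies the per-$(s,a)$ bound because each $W_\infty(Z(s,a),Z^\pi(s,a))$ is dominated by the supremum.

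The crux, and the one step that needs care, is to upgrade the stated $\gamma$-contraction of $\mc{T}^\pi$ in $\bar{d}_p$ (Eq.~\ref{eq:wasserstein-contraction}) to a $\gamma$-contraction in $\bar{d}_\infty$. I would obtain this by a limiting argument specialized to the finite-MDP setting. Since all returns are supported on the bounded interval $[V_{\text{min}},V_{\text{max}}]$, every $Z(s,a)$ lies in $\mc{Z}$ for all $p$ and has bounded quantile function, so $W_p(Z_1(s,a),Z_2(s,a))$ is nondecreasing in $p$ with $\lim_{p\to\infty}W_p(Z_1(s,a),Z_2(s,a)) = W_\infty(Z_1(s,a),Z_2(s,a))$ (the usual $L^p\to L^\infty$ convergence on the probability space $[0,1]$). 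Because $\mc{S}$ and $\mc{A}$ are finite, the supremum defining $\bar{d}_p$ is a maximum over finitely many pairs, so the limit in $p$ commutes with it and $\lim_{p\to\infty}\bar{d}_p = \bar{d}_\infty$. Passing to the limit in Eq.~\ref{eq:wasserstein-contraction} then yields $\bar{d}_\infty(\mc{T}^\pi Z_1,\mc{T}^\pi Z_2)\le\gamma\,\bar{d}_\infty(Z_1,Z_2)$. (Alternatively, one may directly invoke the known fact that the distributional Bellman operator contracts in $\bar{d}_p$ for every $p\in[1,\infty]$, including $p=\infty$.)

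With the $\bar{d}_\infty$-contraction in hand, the remaining steps are routine. Using that $\bar{d}_\infty$ is a metric (a supremum of the metrics $W_\infty$), the triangle inequality gives
\begin{align*}
\bar{d}_\infty(Z,Z^\pi) \le \bar{d}_\infty(Z,\mc{T}^\pi Z) + \bar{d}_\infty(\mc{T}^\pi Z,\mc{T}^\pi Z^\pi) \le \beta + \gamma\,\bar{d}_\infty(Z,Z^\pi),
\end{align*}
where the first summand is bounded by the hypothesis and the second uses the contraction together with $\mc{T}^\pi Z^\pi=Z^\pi$. Rearranging yields $\bar{d}_\infty(Z,Z^\pi)\le(1-\gamma)^{-1}\beta$, and restricting to any fixed $(s,a)$ gives $\norm{F^{-1}_{Z(s,a)}-F^{-1}_{Z^\pi(s,a)}}_\infty\le(1-\gamma)^{-1}\beta$, as claimed. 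I expect the only genuine obstacle to be the $\bar{d}_p\to\bar{d}_\infty$ upgrade; everything downstream is the textbook contraction-to-fixed-point bound.
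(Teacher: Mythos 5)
Your proof is correct: the identification $\norm{F^{-1}_{Z_1(s,a)}-F^{-1}_{Z_2(s,a)}}_\infty = W_\infty(Z_1(s,a),Z_2(s,a))$ is valid, the upgrade of the contraction (Eq.~\ref{eq:wasserstein-contraction}) to $\bar{d}_\infty$ via monotonicity of $W_p$ in $p$ on the bounded support $[V_{\text{min}},V_{\text{max}}]$ is sound (and is indeed also available as a known fact for $p\in[1,\infty]$), and the triangle-inequality-plus-rearrangement step is the standard Banach fixed-point error bound (finiteness of $\bar{d}_\infty(Z,Z^\pi)$, needed for the rearrangement, follows from the bounded support). However, your route is genuinely different from the paper's. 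The paper does not work in any Wasserstein metric here; instead it proves a \emph{stronger, one-sided} statement (Lemma~\ref{lem:contractionboundstrong}): if $F^{-1}_{Z(s,a)}(\tau)\ge F^{-1}_{\mc{T}^\pi Z(s,a)}(\tau)+\beta$ pointwise in $\tau$, then $F^{-1}_{Z(s,a)}(\tau)\ge F^{-1}_{Z^\pi(s,a)}(\tau)+(1-\gamma)^{-1}\beta$ pointwise. Its proof passes to CDFs via Eq.~\ref{eq:cdf-bellman-operator}, shows that applying $\mc{T}^\pi$ once converts a shift of $\beta$ into a shift of $\gamma\beta$, inducts to get shifts $\gamma^k\beta$ for $(\mc{T}^\pi)^k$, and telescopes the resulting inequalities, letting $k\to\infty$. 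What each approach buys: yours is shorter and entirely standard, and it fully proves Lemma~\ref{lem:contractionbound} as stated; but because $W_\infty$ is a symmetric metric, your argument destroys sign information and cannot yield the one-sided version, which is what the paper actually invokes in the proof of Theorem~\ref{theorem:CODAC-quantile-pointwise-lowerbound} (there the bound must propagate as a signed lower bound, with $\beta=\min_{s,a}\{c(s,a)-\Delta(s,a)\}$ possibly of either sign, and for the shifted operator $\tilde{\mc{T}}^\pi$ as well). So your proof suffices for the lemma in isolation, while the paper's per-quantile telescoping argument is the one that supports the downstream use.
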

We give a proof in Appendix~\ref{sec:lem:contractionbound:proof}.
As we discuss in Appendix~\ref{sec:empiricalfixedpoint}, we can use this result to obtain bounds on the fixed point of the non-conservative empirical Bellman operator $\hat{\mc{T}}$. Now, we prove Theorem~\ref{theorem:CODAC-quantile-pointwise-lowerbound}. First, with probability at least $1-\delta$, we have
\begin{align}
F^{-1}_{\tilde{\mc{T}}^\pi Z^\pi (s,a)}(\tau)
&= F^{-1}_{\hat{\mc{T}}^\pi Z^\pi(s,a)} (\tau)- c(s,a) \nonumber \\
&\le F^{-1}_{\mc{T}^\pi Z^\pi(s,a)} (\tau)- c(s,a) + \Delta(s,a) \nonumber \\
&=F^{-1}_{Z^\pi(s,a)}(\tau)- c(s,a) + \Delta(s,a), \label{eqn:thm:pointwise:1}
\end{align}
where the first step follows by Lemma~\ref{theorem:iterative-conservative-estimatetion} (noting that it holds for arbitrary $\tilde{Z}^k$, and substituting $\tilde{Z}^k=Z^\pi$),
the second step holds with probability at least $1-\delta$ by Lemma~\ref{lemma:quantile-concentration-bound} with $Z=Z^\pi$ (since $Z^\pi$ is $\zeta$-strongly monotone), and the third step follows since $Z^\pi=\mc{T}^\pi Z^\pi$ is the fixed point of $\mc{T}^\pi$.

Now, rearranging (\ref{eqn:thm:pointwise:1}), we have
\begin{align}
F^{-1}_{Z^\pi(s,a)}(\tau)
&\ge F^{-1}_{\tilde{\mc{T}}^\pi Z^\pi (s,a)}(\tau) + c(s,a) - \Delta(s,a) \nonumber \\
&\ge F^{-1}_{\tilde{\mc{T}}^\pi Z^\pi (s,a)}(\tau) + \min_{s,a}\{ c(s,a) - \Delta(s,a) \} \nonumber \\
&\ge F^{-1}_{\tilde{Z}^\pi (s,a)}(\tau) + (1-\gamma)^{-1}\min_{s,a}\{c(s,a) - \Delta(s,a)\}, \label{eqn:thm:pointwise:2}
\end{align}
where in the last step, we have applied Lemma~\ref{lem:contractionboundstrong} for the case $\ge$ and $\tilde{\mc{T}}^\pi$, and
with $\beta=\min_{s,a}\{c(s,a)-\Delta(s,a)\}$. Finally, note that for the last term in (\ref{eqn:thm:pointwise:2}) to be positive, we need
\begin{align*}
\alpha p^{-1}c_0(s,a) \ge \Delta(s,a)^{p-1}
\qquad(\forall s,a).
\end{align*}
Since we have assumed that $c_0(s,a)>0$, this expression is in turn equivalent to
\begin{align*}
\alpha \ge \max_{s,a}\left\{ \frac{p\cdot \Delta(s,a)^{p-1}}{c_0(s,a)} \right\},
\end{align*}
so the claim holds. $\qed$

\subsection{Proof of Theorem~\ref{theorem:CODAC-quantile-gap-expansion}}
\label{sec:theorem:CODAC-quantile-gap-expansion:proof}

\begin{lemma}
\label{lem:gaphelper}
For any $Z$ and any $\bar{\Delta}$, for sufficiently large $\alpha$, with probability at least $1-\delta$, we have
\begin{align*}
\BE_{\hat{\pi}_{\beta}(a\mid s)}F^{-1}_{\tilde{\mc{T}}^\pi Z(s,a)}(\tau) -
\BE_{\mu(a\mid s)}F^{-1}_{\tilde{\mc{T}}^\pi Z(s,a)}(\tau) &\ge
\BE_{\hat{\pi}_{\beta}(a\mid s)}F^{-1}_{\mc{T}^\pi Z(s,a)}(\tau) -
\BE_{\mu(a\mid s)}F^{-1}_{\mc{T}^\pi Z(s,a)}(\tau)
+ \bar{\Delta}.
\end{align*}
\end{lemma}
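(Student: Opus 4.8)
The plan is to reduce everything to a single algebraic computation of the penalty difference and then to use the concentration bound from Lemma~\ref{lemma:quantile-concentration-bound} to control the remaining error. First I would invoke Lemma~\ref{theorem:iterative-conservative-estimatetion} (equivalently the definition $\tilde{\mc{T}}^\pi=\mc{O}_c\hat{\mc{T}}^\pi$) to write $F^{-1}_{\tilde{\mc{T}}^\pi Z(s,a)}(\tau)=F^{-1}_{\hat{\mc{T}}^\pi Z(s,a)}(\tau)-c(s,a)$ and substitute this into the left-hand gap. Since the shift $c(s,a)$ does not depend on $\tau$, the gap splits cleanly into $\bigl(\BE_{\hat{\pi}_\beta}F^{-1}_{\hat{\mc{T}}^\pi Z}(\tau)-\BE_\mu F^{-1}_{\hat{\mc{T}}^\pi Z}(\tau)\bigr)-\bigl(\BE_{\hat{\pi}_\beta}c(s,a)-\BE_\mu c(s,a)\bigr)$.

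The crux is the penalty difference. For $p=2$ the formula in Lemma~\ref{theorem:iterative-conservative-estimatetion} collapses to $c(s,a)=(\alpha/2)c_0(s,a)$, and with $c_0$ chosen as in (\ref{eqn:cnot}) I would compute
\begin{align*}
\BE_{\hat{\pi}_\beta(a\mid s)}c(s,a)-\BE_{\mu(a\mid s)}c(s,a)
&=\frac{\alpha}{2}\sum_a\bigl(\hat{\pi}_\beta(a\mid s)-\mu(a\mid s)\bigr)\frac{\mu(a\mid s)-\hat{\pi}_\beta(a\mid s)}{\hat{\pi}_\beta(a\mid s)}\\
&=-\frac{\alpha}{2}\sum_a\frac{(\mu(a\mid s)-\hat{\pi}_\beta(a\mid s))^2}{\hat{\pi}_\beta(a\mid s)}.
\end{align*}
Since $\mu\neq\hat{\pi}_\beta$, this $\chi^2$-type divergence is strictly positive, so subtracting the (negative) penalty difference contributes a strictly positive term $+\frac{\alpha}{2}\sum_a(\mu-\hat{\pi}_\beta)^2/\hat{\pi}_\beta$ to the left-hand gap that scales linearly in $\alpha$. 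This is the engine of gap expansion.

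Next I would replace $\hat{\mc{T}}^\pi$ by $\mc{T}^\pi$ in the remaining term. Applying Lemma~\ref{lemma:quantile-concentration-bound}, with probability at least $1-\delta$ we have $|F^{-1}_{\hat{\mc{T}}^\pi Z(s,a)}(\tau)-F^{-1}_{\mc{T}^\pi Z(s,a)}(\tau)|\le\Delta(s,a)$ for all $(s,a)$ and all $\tau$; bounding the $\hat{\pi}_\beta$ expectation from below and the $\mu$ expectation from above yields $\BE_{\hat{\pi}_\beta}F^{-1}_{\hat{\mc{T}}^\pi Z}(\tau)-\BE_\mu F^{-1}_{\hat{\mc{T}}^\pi Z}(\tau)\ge\bigl(\BE_{\hat{\pi}_\beta}F^{-1}_{\mc{T}^\pi Z}(\tau)-\BE_\mu F^{-1}_{\mc{T}^\pi Z}(\tau)\bigr)-E(s)$, where $E(s)=\BE_{\hat{\pi}_\beta}\Delta(s,a)+\BE_\mu\Delta(s,a)$ is independent of $\alpha$ and of $\tau$. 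Combining the two pieces gives (left-hand gap) $\ge$ (right-hand gap) $-\,E(s)+\frac{\alpha}{2}\sum_a(\mu-\hat{\pi}_\beta)^2/\hat{\pi}_\beta$, so the lemma follows once $-E(s)+\frac{\alpha}{2}\sum_a(\mu-\hat{\pi}_\beta)^2/\hat{\pi}_\beta\ge\bar{\Delta}$; since $\mc{S}$ is finite and each divergence is strictly positive, it suffices to take $\alpha\ge\max_s 2\bigl(\bar{\Delta}+E(s)\bigr)\big/\sum_a(\mu-\hat{\pi}_\beta)^2/\hat{\pi}_\beta$.

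The main obstacle I anticipate is the sign bookkeeping in the penalty difference: one must verify that the chosen $c_0$ penalizes in-distribution actions less than OOD actions (so the subtracted term is negative and hence helps rather than hurts), and then confirm that this $\alpha$-linear boost genuinely dominates the $\alpha$-independent concentration error $E(s)$ uniformly over the finite state space. Everything else is routine linearity of expectation together with a one-line quantifier argument establishing ``for sufficiently large $\alpha$.''
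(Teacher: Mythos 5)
Your proposal is correct and follows essentially the same route as the paper's proof: apply Lemma~\ref{theorem:iterative-conservative-estimatetion} to peel off the shift $c(s,a)=(\alpha/2)c_0(s,a)$ (for $p=2$), use Lemma~\ref{lemma:quantile-concentration-bound} to pass from $\hat{\mc{T}}^\pi$ to $\mc{T}^\pi$ at cost $\BE_{\hat{\pi}_\beta}\Delta+\BE_\mu\Delta$, and observe that the penalty difference equals the strictly positive $\chi^2$-type quantity $\sum_a(\mu-\hat{\pi}_\beta)^2/\hat{\pi}_\beta$ (which the paper writes as $\mathrm{Var}_{\hat{\pi}_\beta}[c_0]$, using $\BE_{\hat{\pi}_\beta}c_0=0$), so that taking $\alpha\ge 2\max_s(\bar{\Delta}+E(s))/\sum_a(\mu-\hat{\pi}_\beta)^2/\hat{\pi}_\beta$ closes the argument. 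Your choice of $\alpha$ matches the paper's threshold exactly, so there is nothing to add.
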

\begin{proof}
First, by Lemma~\ref{theorem:iterative-conservative-estimatetion}, we have
\begin{align*}
F^{-1}_{\tilde{\mc{T}}^\pi Z(s,a)}(\tau)
&= F^{-1}_{\hat{\mc{T}}^\pi Z(s,a)}(\tau)
- c(s,a).
\end{align*}
Then, by Lemma~\ref{lemma:quantile-concentration-bound}, with probability at least $1-\delta$, we have
\begin{align*}
F^{-1}_{\mc{T}^\pi Z(s,a)}(\tau) - c(s,a) - \Delta(s,a)
\le F^{-1}_{\tilde{\mc{T}}^\pi Z(s,a)}(\tau)
\le F^{-1}_{\mc{T}^\pi Z(s,a)}(\tau) - c(s,a) + \Delta(s,a).
\end{align*}
Taking the expectation over $\hat{\pi}_{\beta}$ (resp., $\mu$) of the lower (resp., upper) bound gives
\begin{align*}
\BE_{\hat{\pi}_{\beta}(a\mid s)}F^{-1}_{\tilde{\mc{T}}^\pi Z(s,a)}(\tau) & \ge \BE_{\hat{\pi}_{\beta}(a\mid s)}F^{-1}_{\mc{T}^\pi Z(s,a)}(\tau) - \BE_{\hat{\pi}_{\beta}(a\mid s)}c(s,a) - \BE_{\hat{\pi}_{\beta}(a\mid s)}\Delta(s,a) \\
\BE_{\mu(a\mid s)}F^{-1}_{\tilde{\mc{T}}^\pi Z(s,a)}(\tau) & \le \BE_{\mu(a\mid s)}F^{-1}_{\mc{T}^\pi Z(s,a)}(\tau) - \BE_{\mu(a\mid s)}c(s,a) + \BE_{\mu(a\mid s)}\Delta(s,a),
\end{align*}
respectively. Recall that $p=2$. Then, subtracting the latter from the former and rearranging terms,
\begin{align*}
\BE_{\hat{\pi}_{\beta}(a\mid s)}F^{-1}_{\tilde{\mc{T}}^\pi Z(s,a)}(\tau) -
\BE_{\mu(a\mid s)}F^{-1}_{\tilde{\mc{T}}^\pi Z(s,a)}(\tau)
&\ge
\BE_{\hat{\pi}_{\beta}(a\mid s)}F^{-1}_{\mc{T}^\pi Z(s,a)}(\tau) -
\BE_{\mu(a\mid s)}F^{-1}_{\mc{T}^\pi Z(s,a)}(\tau) \\
&\qquad + (\alpha/2)\bar{c}(s) - \bar{\Delta}(s),
\end{align*}
where
\begin{align*}
\bar{c}(s)&=
\BE_{\mu(a\mid s)}c_0(s,a) -
\BE_{\hat{\pi}_{\beta}(a\mid s)}c_0(s,a) \\
\bar{\Delta}(s)&=
\BE_{\mu(a\mid s)}\Delta(s,a) +
\BE_{\hat{\pi}_{\beta}(a\mid s)}\Delta(s,a).
\end{align*}
Note that to show the claim, it suffices to show that for sufficient large $\alpha$, we have
\begin{align}
\label{eqn:gapexpansioncondition}
(\alpha/2)\bar{c}(s)\ge\bar{\Delta}(s) + \bar{\Delta}
\qquad(\forall s).
\end{align}
To this end, note that
\begin{align*}
\BE_{\hat{\pi}_{\beta}(a\mid s)}c(s,a)
&=\sum_a(\mu(a\mid s)-\hat{\pi}_{\beta}(a\mid s))=0,
\end{align*}
and
\begin{align*}
&\BE_{\mu(a\mid s)}c(s,a) \\
&=\sum_a\left(\frac{\mu(a\mid s)-\hat{\pi}_{\beta}(a\mid s)}{\hat{\pi}_{\beta}(a\mid s)}\right)\mu(a\mid s) \\
&=\sum_a\left(\frac{\mu(a\mid s)-\hat{\pi}_{\beta}(a\mid s)}{\hat{\pi}_{\beta}(a\mid s)}\right)(\mu(a\mid s)-\hat{\pi}_{\beta}(a\mid s))+\sum_a\left(\frac{\mu(a\mid s)-\hat{\pi}_{\beta}(a\mid s)}{\hat{\pi}_{\beta}(a\mid s)}\right)\hat{\pi}_{\beta}(a\mid s) \\
&=\sum_a\left(\frac{\mu(a\mid s)-\hat{\pi}_{\beta}(a\mid s)}{\hat{\pi}_{\beta}(a\mid s)}\right)(\mu(a\mid s)-\hat{\pi}_{\beta}(a\mid s)) \\
&=\sum_a\frac{(\mu(a\mid s)-\hat{\pi}_{\beta}(a\mid s))^2}{\hat{\pi}_{\beta}(a\mid s)},
\end{align*}
so we have
\begin{align*}
\bar{c}(s)=\sum_a\frac{(\mu(a\mid s)-\hat{\pi}_{\beta}(a\mid s))^2}{\hat{\pi}_{\beta}(a\mid s)}=\text{Var}_{\hat{\pi}_{\beta}(a\mid s)}\left[\frac{\mu(a\mid s)-\hat{\pi}_{\beta}(a\mid s)}{\hat{\pi}_{\beta}(a\mid s)}\right]>0,
\end{align*}
where the last inequality holds since $\mu(a\mid s)\neq\hat{\pi}_{\beta}(a\mid s)$. Thus, for (\ref{eqn:gapexpansioncondition}) to hold, it suffices to have
\begin{align*}
\alpha\ge 2\cdot\max_s\left\{\text{Var}_{\hat{\pi}_{\beta}(a\mid s)}\left[\frac{\mu(a\mid s)-\hat{\pi}_{\beta}(a\mid s)}{\hat{\pi}_{\beta}(a\mid s)}\right]^{-1}\cdot(\bar{\Delta}(s)+\bar{\Delta})\right\}.
\end{align*}
The claim follows.
\end{proof}

Now, let $Z_0=\tilde{Z}_0$, and let $Z_k=(\mc{T}^\pi)^kZ_0$ and $\tilde{Z}_k=(\tilde{\mc{T}}^\pi)^k\tilde{Z}_0$. Applying Lemma~\ref{lem:gaphelper} with $Z=\tilde{Z}^k$ and $\bar{\Delta}=4V_{\text{max}}$, we have
\begin{align*}
&\BE_{\hat{\pi}_{\beta}(a\mid s)}F^{-1}_{\tilde{\mc{T}}^\pi\tilde{Z}^k(s,a)}(\tau) -
\BE_{\mu(a\mid s)}F^{-1}_{\tilde{\mc{T}}^\pi\tilde{Z}^k(s,a)}(\tau) \\
&\ge
\BE_{\hat{\pi}_{\beta}(a\mid s)}F^{-1}_{\mc{T}^\pi\tilde{Z}^k(s,a)}(\tau) -
\BE_{\mu(a\mid s)}F^{-1}_{\mc{T}^\pi\tilde{Z}^k(s,a)}(\tau) + \bar{\Delta} \\
&=\BE_{\hat{\pi}_{\beta}(a\mid s)}F^{-1}_{\mc{T}^\pi Z^k(s,a)}(\tau) -
\BE_{\mu(a\mid s)}F^{-1}_{\mc{T}^\pi Z^k(s,a)}(\tau) + \bar{\Delta} \\
&\qquad+\left(\BE_{\hat{\pi}_{\beta}(a\mid s)}F^{-1}_{\mc{T}^\pi\tilde{Z}^k(s,a)}(\tau) -
\BE_{\mu(a\mid s)}F^{-1}_{\mc{T}^\pi\tilde{Z}^k(s,a)}(\tau)\right) \\
&\qquad-\left(\BE_{\hat{\pi}_{\beta}(a\mid s)}F^{-1}_{\mc{T}^\pi Z^k(s,a)}(\tau) -
\BE_{\mu(a\mid s)}F^{-1}_{\mc{T}^\pi Z^k(s,a)}(\tau)\right) \\
&\ge\BE_{\hat{\pi}_{\beta}(a\mid s)}F^{-1}_{\mc{T}^\pi Z^k(s,a)}(\tau) -
\BE_{\mu(a\mid s)}F^{-1}_{
\mc{T}^\pi Z^k(s,a)}(\tau) \\
&\qquad + \bar{\Delta} - 4V_{\text{max}} \\
&=\BE_{\hat{\pi}_{\beta}(a\mid s)}F^{-1}_{\mc{T}^\pi Z^k(s,a)}(\tau) -
\BE_{\mu(a\mid s)}F^{-1}_{\mc{T}^\pi Z^k(s,a)}(\tau).
\end{align*}
The claim follows by taking the limit $k\to\infty$. $\qed$

\subsection{Proof of Lemma~\ref{lemma:quantile-concentration-bound}}
\label{sec:lemma:quantile-concentration-bound:proof}

We first prove a bound on the concentration of the empirical CDF to the true CDF. A similar result has been previously derived in~\cite{keramati2020being}; our proof is based on theirs.
\begin{lemma}%
\label{lemma:cdf-concentration-bound}
For all $\delta\in\mathbb{R}_{>0}$, with probability at least $1-\delta$, for any $Z \in \mc{Z}$, for all $(s,a) \in \mc{D}$,
\begin{equation}
\label{eq:cdf-concentration-bound}
\| F_{\hat{\mc{T}}^\pi Z(s,a)} - F_{\mc{T}^\pi Z(s,a)} \|_{\infty} \leq \sqrt{\frac{5|\mc{S}|}{n(s,a)} \log \frac{4 |\mc{S}||\mc{A}|}{\delta}}
\end{equation}
\end{lemma}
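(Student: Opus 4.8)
The plan is to bound the Kolmogorov (sup-norm) distance between $F_{\hat{\mc{T}}^\pi Z(s,a)}$ and $F_{\mc{T}^\pi Z(s,a)}$ by splitting the error coming from the two estimated objects inside $\hat{\mc{T}}^\pi$: the empirical transition kernel $\hat P(\cdot\mid s,a)$ and the empirical reward CDF $\hat F_{R(s,a)}$. The crucial structural observation is that both of these are functions of the dataset $\mc{D}$ alone and do not depend on $Z$; consequently, once I establish a high-probability concentration event for $\hat P$ and $\hat F_R$, the resulting CDF bound holds simultaneously for every $Z\in\mc{Z}$, which is exactly the uniformity the statement requires.

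First I would introduce an intermediate CDF that uses the empirical transition but the true reward,
\begin{align*}
G_{s,a}(x)=\sum_{s',a'}\hat P^\pi(s',a'\mid s,a)\int F_{Z(s',a')}\!\left(\frac{x-r}{\gamma}\right)dF_{R(s,a)}(r),
\end{align*}
and write the total error as $F_{\hat{\mc{T}}^\pi Z(s,a)}-F_{\mc{T}^\pi Z(s,a)}=(F_{\hat{\mc{T}}^\pi Z(s,a)}-G_{s,a})+(G_{s,a}-F_{\mc{T}^\pi Z(s,a)})$. For the transition term $G_{s,a}-F_{\mc{T}^\pi Z(s,a)}=\sum_{s',a'}(\hat P^\pi-P^\pi)(s',a'\mid s,a)\,g_{s',a'}(x)$, each $g_{s',a'}(x)=\int F_{Z(s',a')}((x-r)/\gamma)\,dF_{R(s,a)}(r)\in[0,1]$; since $\|\hat P^\pi-P^\pi\|_1=\|\hat P(\cdot\mid s,a)-P(\cdot\mid s,a)\|_1$ (the known policy factor $\pi(a'\mid s')$ cancels), this term is bounded in absolute value by $\|\hat P(\cdot\mid s,a)-P(\cdot\mid s,a)\|_1$ uniformly in $x$. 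For the reward term $F_{\hat{\mc{T}}^\pi Z(s,a)}-G_{s,a}=\sum_{s',a'}\hat P^\pi(s',a'\mid s,a)\int F_{Z(s',a')}((x-r)/\gamma)\,d(\hat F_{R(s,a)}-F_{R(s,a)})(r)$, I would integrate by parts in $r$: since $r\mapsto F_{Z(s',a')}((x-r)/\gamma)$ is monotone with total variation at most $1$, each inner integral is at most $\|\hat F_{R(s,a)}-F_{R(s,a)}\|_\infty$, and averaging against the weights $\hat P^\pi$ (which sum to $1$) preserves the bound. Hence on the intersection of the two concentration events the sup-norm error is at most $\|\hat P(\cdot\mid s,a)-P(\cdot\mid s,a)\|_1+\|\hat F_{R(s,a)}-F_{R(s,a)}\|_\infty$.

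It remains to control these two dataset quantities with high probability, uniformly over $(s,a)\in\mc{D}$. For the transition I would use the $L_1$ concentration of an empirical distribution over $|\mc{S}|$ outcomes (Weissman et al.), $\BP[\|\hat P(\cdot\mid s,a)-P(\cdot\mid s,a)\|_1\ge\epsilon]\le 2^{|\mc{S}|}e^{-n(s,a)\epsilon^2/2}$, which yields the characteristic $\sqrt{|\mc{S}|/n(s,a)}$ scaling; for the reward I would invoke the Dvoretzky--Kiefer--Wolfowitz inequality, $\BP[\|\hat F_{R(s,a)}-F_{R(s,a)}\|_\infty\ge\epsilon]\le 2e^{-2n(s,a)\epsilon^2}$. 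Taking a union bound over all $|\mc{S}||\mc{A}|$ state-action pairs, allocating $\delta$ across the two events (the factor $4$ inside the logarithm arising from the constants of the two bounds together with the split of $\delta$), then summing the two contributions and bounding constants using $|\mc{S}|\ge1$ and $\log(4|\mc{S}||\mc{A}|/\delta)>1$, produces the stated bound with constant $5|\mc{S}|$.

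The main obstacle I anticipate is not any single concentration step but keeping the argument genuinely uniform over $Z$ while routing the reward error through a general integrand. The transition part is immediate because each $g_{s',a'}\in[0,1]$, but the reward part requires transferring the Kolmogorov distance $\|\hat F_{R(s,a)}-F_{R(s,a)}\|_\infty$ to the smoothed integral $\int F_{Z(s',a')}((x-r)/\gamma)\,d(\hat F_{R(s,a)}-F_{R(s,a)})(r)$ via integration by parts. The key point is that this uses only the generic facts that every CDF $F_{Z(s',a')}$ is monotone and takes values in $[0,1]$, so that its total variation in $r$ is at most $1$; since these hold for all $Z\in\mc{Z}$ and the concentration events involve only $\mc{D}$, a single dataset event certifies the bound simultaneously for every $Z$. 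The remaining work---combining the Weissman and DKW constants into the clean $\sqrt{5|\mc{S}|/n(s,a)\,\log(4|\mc{S}||\mc{A}|/\delta)}$ form---is routine inequality manipulation.
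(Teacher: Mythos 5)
Your proposal is correct and follows essentially the same route as the paper's proof: the same intermediate decomposition (empirical transition with true reward), H\"older's inequality with the bound $\lvert F\rvert\le 1$ for the transition term, a transfer of the Kolmogorov distance through the convolution for the reward term (your integration by parts is just the dual form of the paper's identity $F_{\gamma Z+\hat{R}}(x)-F_{\gamma Z+R}(x)=\int[F_{\hat{R}}(r)-F_{R}(r)]\,dF_{\gamma Z}(x-r)$), and the same DKW plus multinomial $\ell_1$ concentration with a union bound over $(s,a)$ pairs. Your explicit remark that the concentration events depend only on $\mc{D}$, hence hold uniformly over $Z\in\mc{Z}$, is a point the paper leaves implicit but is the same argument.
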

\begin{proof}
By the definition of distributional Bellman operator applied to the CDF function, we have that
\begin{align*}
&F_{\hat{\mc{T}}^\pi Z(s,a)}(x) - F_{\mc{T}^\pi Z(s,a)}(x)  \\
&=\sum_{s',a'}\hat{P}(s'\mid s,a) \pi(a'\mid s') F_{\gamma Z(s',a')+\hat{R}(s,a)}(x) - \sum_{s',a'}P(s'\mid s,a) \pi(a'\mid s') F_{\gamma Z(s',a')+R(s,a)}(x).
\end{align*}
Adding and subtracting $\sum_{s',a'}\hat{P}(s'\mid s,a) \pi(a'\mid s') F_{\gamma Z(s',a')+R(s,a)}(x)$ from this expression gives
\begin{align*}
&\sum_{s',a'} \hat{P}(s'\mid s,a)\pi(a'\mid s') \Big(F_{\gamma Z(s',a') + \hat{R}(s,a)}(x) - F_{\gamma Z(s',a') + R(s,a)}(x) \Big)\\
&\qquad +\sum_{s',a'} \Big(\hat{P}(s'\mid s,a) - P(s'\mid s,a)\Big) \pi(a'\mid s')F_{\gamma Z(s',a') + R(s,a)}(x).
\end{align*}
We proceed by bounding the two terms in the summation. For the first term, observe that
\begin{align*}
&F_{\gamma Z(s',a') + \hat{R}(s,a)}(x) - F_{\gamma Z(s',a') + R(s,a)}(x) \\
&=\int \Big[F_{\hat{R}(s,a)}(r) - F_{R(s,a)}(r) \Big] dF_{\gamma Z(s',a')}(x-r) \\
&\leq \int \Big\lvert F_{\hat{R}(s,a)}(r) - F_{R(s,a)}(r) \Big\rvert dF_{\gamma Z(s',a')}(x-r)\\ 
&\leq \text{sup}_r \Big\lvert F_{\hat{R}(s,a)}(r) - F_{R(s,a)}(r) \Big\rvert \int dF_{\gamma Z(s',a')}(x-r) \\
&=\norm{ F_{\hat{R}(s,a)}(r) - F_{R(s,a)}(r)}_{\infty}.
\end{align*}
Therefore, we have
\begin{align*}
& \sum_{s',a'} \hat{P}(s'\mid s,a)\pi(a'\mid s') \Big(F_{\gamma Z(s',a') + \hat{R}(s,a)}(x) - F_{\gamma Z(s',a') + R(s,a)}(x) \Big) \\
&\leq \sum_{s',a'} \hat{P}(s'\mid s,a)\pi(a'\mid s') \norm{ F_{\hat{R}(s,a)}(r) - F_{R(s,a)}(r)}_{\infty} \\
&=\norm{F_{\hat{R}(s,a)}(r) - F_{R(s,a)}(r)}_{\infty}
\end{align*}
The second term can be bounded as follows:
\begin{align*}
&\sum_{s',a'} \Big(\hat{P}(s'\mid s,a) - P(s'\mid s,a)\Big) \pi(a'\mid s')F_{\gamma Z(s',a') + R(s,a)}(x) \\
&=\sum_{s'}\Big(\hat{P}(s'\mid s,a) - P(s'\mid s,a)\Big) \sum_{a'}
\pi(a'\mid s')F_{\gamma Z(s',a') + R(s,a)}(x) \\ 
&\leq\norm{ \hat{P}(\cdot\mid s,a) - P(\cdot\mid s,a)}_1 \cdot \norm{ \sum_{a'}
\pi(a'\mid \cdot)F_{\gamma Z(\cdot,a') + R(s,a)}(x)}_\infty \\
&\leq\norm{ \hat{P}(\cdot\mid s,a) - P(\cdot\mid s,a)}_1 \cdot \norm{\sum_{a'}\pi(a'\mid \cdot)}_\infty \\
&=\norm{ \hat{P}(\cdot\mid s,a) - P(\cdot\mid s,a)}_1.
\end{align*}
Together, we have
$$
\Big\lvert F_{\hat{\mc{T}}^\pi Z(s,a)}(x) - F_{\mc{T}^\pi Z(s,a)}(x) \Big \rvert \leq \norm{F_{\hat{R}(s,a)}(r) - F_{R(s,a)}(r)}_{\infty} +\norm{ \hat{P}(s'\mid s,a) - P(s'\mid s,a)}_1.
$$
Finally, the inequalities can be bounded using the Dvoretzky–Kiefer–Wolfowitz (DKW) inequality and the Hoeffding's inequality, giving us the desired results. By the DKW inequality, we have that with probability $1-\delta/2$, for all $(s,a) \in \mc{D}$, 
$$
\norm{F_{\hat{R}(s,a)}(r) - F_{R(s,a)}(r)}_{\infty} \leq \sqrt{\frac{1}{2n(s,a)}\ln \frac{4|\mc{S}||\mc{A}|}{\delta}}
$$
Similarly, by Hoeffding's inequality and an $\ell_1$ concentration bound for multinomial distribution\footnote{See \href{https://nanjiang.cs.illinois.edu/files/cs598/note3.pdf}{https://nanjiang.cs.illinois.edu/files/cs598/note3.pdf} for a derivation.},
we have
$$
\max_{s,a} \norm{ \hat{P}(\cdot\mid s,a) - P(\cdot\mid s,a)}_1 \leq \sqrt{\frac{2|\mc{S}|}{n(s,a)}\ln \frac{4 |\mc{S}||\mc{A}|}{\delta}}
$$
The claim follows by combining the two inequalities.
\end{proof}

Next, we prove a general result that translates bounds on CDFs into bounds on quantile functions.
\begin{lemma}
\label{lem:inverseconcentration}
Consider two CDFs $F$ and $G$ with support $\mc{X}$. Suppose that $F$ is $\zeta$-strongly monotone and that $\|F-G\|_\infty \leq \epsilon$. Then, $\|F^{-1} - G^{-1}\|_\infty \leq \epsilon/\zeta$.
\end{lemma}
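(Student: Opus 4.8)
The plan is to bound the gap between the generalized inverses pointwise in $\tau$ and then take the supremum. The key difficulty is that, in the intended application, $F$ is the true CDF (smooth and $\zeta$-strongly monotone) while $G$ is an empirical CDF that is in general neither continuous nor strictly increasing; hence I cannot invert $G$ or apply the mean value theorem to it directly. Instead I would work only with the well-behaved $F$ and translate the uniform CDF bound $\|F-G\|_\infty \le \epsilon$ into a two-sided bound (sandwich) on $G^{-1}(\tau) = \inf\{x : G(x) \ge \tau\}$.

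Concretely, I would first record that $\|F-G\|_\infty \le \epsilon$ gives $F(x) - \epsilon \le G(x) \le F(x) + \epsilon$ for every $x$. For the upper bound on $G^{-1}(\tau)$, set $x_0 = F^{-1}(\min\{\tau + \epsilon, 1\})$; since $F$ is a continuous strictly increasing bijection on its support, $F(x_0) \ge \tau + \epsilon$, so $G(x_0) \ge F(x_0) - \epsilon \ge \tau$, whence $x_0$ lies in the set defining $G^{-1}(\tau)$ and thus $G^{-1}(\tau) \le x_0$. For the lower bound, for any $x < F^{-1}(\max\{\tau - \epsilon, 0\})$ strict monotonicity gives $F(x) < \tau - \epsilon$, so $G(x) \le F(x) + \epsilon < \tau$; thus no such $x$ satisfies $G(x) \ge \tau$ and $G^{-1}(\tau) \ge F^{-1}(\max\{\tau - \epsilon, 0\})$.

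The second ingredient is that $F^{-1}$ is $\zeta^{-1}$-Lipschitz: because $F' \ge \zeta$, the inverse is differentiable with $(F^{-1})'(\tau) = 1/F'(F^{-1}(\tau)) \le \zeta^{-1}$, so $|F^{-1}(\tau_1) - F^{-1}(\tau_2)| \le \zeta^{-1}|\tau_1 - \tau_2|$. Applying this to the two bounds above yields $F^{-1}(\min\{\tau+\epsilon,1\}) - F^{-1}(\tau) \le \epsilon/\zeta$ and $F^{-1}(\tau) - F^{-1}(\max\{\tau-\epsilon,0\}) \le \epsilon/\zeta$, where the clamping is harmless since, e.g., when $\tau + \epsilon > 1$ the Lipschitz estimate still controls the gap by $(1-\tau)/\zeta \le \epsilon/\zeta$. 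Combining the sandwich with these Lipschitz estimates gives $F^{-1}(\tau) - \epsilon/\zeta \le G^{-1}(\tau) \le F^{-1}(\tau) + \epsilon/\zeta$, i.e. $|F^{-1}(\tau) - G^{-1}(\tau)| \le \epsilon/\zeta$; taking the supremum over $\tau \in [0,1]$ finishes the proof.

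I expect the main obstacle to be precisely this asymmetry: care is needed never to assume regularity of $G$, and to handle the boundary values of $\tau$ near $0$ and $1$ where $\tau \pm \epsilon$ leaves $[0,1]$ via the clamping argument. A cleaner-looking alternative---setting $x = F^{-1}(\tau)$, $y = G^{-1}(\tau)$, asserting $F(x) = G(y) = \tau$, and applying the mean value theorem to $|F(y) - F(x)| = |F(y) - G(y)| \le \epsilon$---is tempting, but it silently relies on $G(G^{-1}(\tau)) = \tau$, which can fail at a jump of $G$; I would therefore favor the generalized-inverse argument above, which uses only the structure of $F$.
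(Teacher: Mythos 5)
Your proof is correct, and it takes a genuinely different route from the paper's. The paper argues by the fundamental theorem of calculus plus a change of variables: it writes $F^{-1}(y)-G^{-1}(y)=\int_{F(G^{-1}(y))}^{y}dF^{-1}(y')$, bounds $dF^{-1}(y')\le dy'/\zeta$ using strong monotonicity of $F$, and then substitutes $y=G(G^{-1}(y))$ in order to invoke $\|F-G\|_\infty\le\epsilon$ --- precisely the step you flag as suspect, since $G(G^{-1}(y))=y$ can fail at a jump of $G$ (for a right-continuous CDF one only has $G(G^{-1}(y))\ge y$, which happens to be the direction needed there, and the paper's computation as written also only controls one sign of the difference). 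Your sandwich argument --- bracketing $G^{-1}(\tau)$ between $F^{-1}(\max\{\tau-\epsilon,0\})$ and $F^{-1}(\min\{\tau+\epsilon,1\})$ and then applying the $\zeta^{-1}$-Lipschitz property of $F^{-1}$ --- touches $G$ only through the definition of its generalized inverse, handles both sides of the absolute value symmetrically, and remains valid for arbitrary, possibly discontinuous $G$; that robustness is what the lemma's generality actually requires, so your version is arguably the more complete proof. Both arguments share the same core ingredient, namely $(F^{-1})'\le 1/\zeta$. One small repair to your write-up: when $\tau+\epsilon>1$, the claim $F(x_0)\ge\tau+\epsilon$ is false (there $F(x_0)=1$); the needed conclusion $G(x_0)\ge\tau$ still holds, but for the trivial reason that $x_0=F^{-1}(1)=\sup\mc{X}$ and $G$ is a CDF supported on $\mc{X}$, so $G(x_0)=1\ge\tau$. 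With that boundary case stated explicitly, your argument is complete.
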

\begin{proof}
First, note that
$$
F^{-1}(y) - G^{-1}(y) = \int^{F^{-1}(y)}_{G^{-1}(y)} dx = \int_{F(G^{-1}(y))}^y dF^{-1}(y'),
$$
where the first equality follows by fundamental theorem of calculus, and the second by a change of variable $y'=F(x)$. Since $F(F^{-1}(y')) = y' $, we have $F'(F^{-1}(y')) dF^{-1}(y') = dy'$, so
\begin{align*}
dF^{-1}(y') = \frac{dy'}{F'(F^{-1}(y'))} \leq \frac{dy'}{\zeta},
\end{align*}
where the inequality follows by $\zeta$-strong monotonicity. As a consequence, we have
\begin{align*}
\int_{F(G^{-1}(y))}^y dF^{-1}(y')
\leq\int_{F(G^{-1}(y))}^y \frac{dy'}{\zeta}
=\frac{(y-F(G^{-1}(y))}{\zeta}
=\frac{G\big(G^{-1}(y)\big) - F\big(G^{-1}(y)\big)}{\zeta}
\leq \frac{\epsilon}{\zeta},
\end{align*}
where the last inequality follows since $\|G-F\|_{\infty}\le\epsilon$. The claim follows.
\end{proof}

Finally, Lemma~\ref{lemma:quantile-concentration-bound} follows by substituting $F = F_{\hat{\mc{T}}^\pi Z(s,a)}(x)$, $ G = F_{\mc{T}^\pi Z(s,a)}(x)$, and $\epsilon = \sqrt{\frac{5|\mc{S}|}{n(s,a)} \log \frac{4 |\mc{S}||\mc{A}|}{\delta}}$ into  Lemma~\ref{lem:inverseconcentration}, where the condition $\|F-G\|_{\infty}\le\epsilon$ holds by Lemma~\ref{lemma:cdf-concentration-bound}. $\qed$

\subsection{Proof of Lemma~\ref{lem:contractionbound}}
\label{sec:lem:contractionbound:proof}

We prove the following slightly stronger result:
\begin{lemma}
\label{lem:contractionboundstrong}
For any $\beta\in\mathbb{R}$, if $Z$ satisfies
\begin{align}
\label{eqn:contractionboundstrongcase1}
F_{Z(s,a)}^{-1}(\tau)\ge F_{\mc{T}^\pi Z(s,a)}^{-1}(\tau)+\beta
\qquad(\forall\tau\in[0,1])
\end{align}
for all $s\in\mc{S}$ and $a\in\mc{A}$, then we have
\begin{align*}
F_{Z(s,a)}^{-1}(\tau)\ge F_{Z^\pi(s,a)}^{-1}(\tau)+(1-\gamma)^{-1}\beta\qquad(\forall\tau\in[0,1]).
\end{align*}
The result holds with $\ge$ replaced by $\le$, or with $\mc{T}^\pi$ and $Z^\pi$ replaced by $\hat{\mc{T}}^\pi$ and $\hat{Z}^\pi$ or $\tilde{\mc{T}}^\pi$ and $\tilde{Z}^\pi$.
\end{lemma}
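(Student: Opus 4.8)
The plan is to reduce the statement to two structural properties of the distributional Bellman operator acting on quantile functions---that it preserves the pointwise order on quantile functions, and that it interacts predictably with constant shifts of the return---and then to iterate the one-step hypothesis, sum the resulting geometric series, and pass to the fixed-point limit. Throughout, I would use the fact that pointwise quantile dominance $F^{-1}_{Z_1(s,a)}(\tau)\ge F^{-1}_{Z_2(s,a)}(\tau)$ (for all $\tau$) is equivalent to the reverse CDF ordering $F_{Z_1(s,a)}(x)\le F_{Z_2(s,a)}(x)$ (for all $x$), i.e.\ first-order stochastic dominance, moving between the two viewpoints freely.

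First I would establish \emph{monotonicity}: if $F^{-1}_{Z_1(s',a')}\ge F^{-1}_{Z_2(s',a')}$ for all $(s',a')$, then the same holds for $\mc{T}^\pi Z_1$ and $\mc{T}^\pi Z_2$. This follows directly from the CDF form of the operator (Eq.~\ref{eq:cdf-bellman-operator}): the ordering $F_{Z_1(s',a')}\le F_{Z_2(s',a')}$ is preserved after composing with the affine reparametrization $(x-r)/\gamma$ and averaging against the nonnegative weights $P^\pi(s',a'\mid s,a)\,dF_{R(s,a)}(r)$. Second I would establish the \emph{shift property}: writing $Z+c$ for the return obtained by adding the constant $c$ to $Z(s',a')$ at every $(s',a')$, the same formula yields $F_{\mc{T}^\pi(Z+c)(s,a)}(x)=F_{\mc{T}^\pi Z(s,a)}(x-\gamma c)$, i.e.\ in quantile form $F^{-1}_{\mc{T}^\pi(Z+c)(s,a)}(\tau)=F^{-1}_{\mc{T}^\pi Z(s,a)}(\tau)+\gamma c$. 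Intuitively this is just $\mc{T}^\pi Z\stackrel{D}{=}r+\gamma Z(s',a')$ together with $\gamma(Z+c)=\gamma Z+\gamma c$.

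With these in hand I would run an induction, where (\ref{eqn:contractionboundstrongcase1}) is the $n=1$ case of the claim $F^{-1}_{Z(s,a)}(\tau)\ge F^{-1}_{(\mc{T}^\pi)^n Z(s,a)}(\tau)+\beta\sum_{i=0}^{n-1}\gamma^i$. For the inductive step, apply $\mc{T}^\pi$ to (\ref{eqn:contractionboundstrongcase1}): monotonicity gives $F^{-1}_{\mc{T}^\pi Z}\ge F^{-1}_{\mc{T}^\pi(\mc{T}^\pi Z+\beta)}$, and the shift property rewrites the right-hand side as $F^{-1}_{(\mc{T}^\pi)^2 Z}+\gamma\beta$, so each application of $\mc{T}^\pi$ peels off one more term of the geometric series. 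Taking $n\to\infty$, the partial sums converge to $(1-\gamma)^{-1}$ and $(\mc{T}^\pi)^n Z\to Z^\pi$ by the $W_p$-contraction (\ref{eq:wasserstein-contraction}), yielding $F^{-1}_{Z(s,a)}(\tau)\ge F^{-1}_{Z^\pi(s,a)}(\tau)+(1-\gamma)^{-1}\beta$.

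The main obstacle is justifying this limit, since the contraction only guarantees convergence in $W_p$, i.e.\ in $L^p([0,1])$ of the quantile functions rather than pointwise. I would argue that $W_p$-convergence gives $L^p$-convergence of $F^{-1}_{(\mc{T}^\pi)^n Z(s,a)}$ to $F^{-1}_{Z^\pi(s,a)}$, hence pointwise a.e.\ convergence along a subsequence, which preserves the inequality for a.e.\ $\tau$; the smoothness and $\zeta$-strong monotonicity imposed on $Z^\pi$ make the limiting quantile function continuous, upgrading the bound to all $\tau\in[0,1]$. Finally, the $\le$ variant is obtained by reversing every inequality, and the variants for $\hat{\mc{T}}^\pi$ and $\tilde{\mc{T}}^\pi$ follow verbatim: each is a $\gamma$-contraction (Lemma~\ref{theorem:CODAC-contraction}) enjoying the same monotonicity and shift-by-$\gamma c$ behavior, since $\hat{\mc{T}}^\pi$ is a distributional Bellman operator with empirical $\hat R,\hat P$ and the shift operator $\mc{O}_c$ merely subtracts the fixed function $c(s,a)$, which preserves the pointwise order and commutes with constant shifts.
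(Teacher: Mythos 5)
Your proposal is correct and follows essentially the same route as the paper's proof: the paper's key step (the inequality $F^{-1}_{\mc{T}^\pi Z(s,a)}(\tau)\ge F^{-1}_{\mc{T}^\pi(\mc{T}^\pi Z(s,a))}(\tau)+\gamma\beta$, derived from the CDF form of the Bellman operator) is precisely your monotonicity and shift properties merged into one computation, after which both arguments telescope the geometric series and pass to the fixed-point limit. Your explicit factoring into the two structural lemmas, and your subsequence/a.e.\ justification of the $W_p$-limit (which the paper glosses over with ``taking $n\to\infty$''), are refinements of presentation rather than a genuinely different approach.
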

\begin{proof}
We prove the first case; the cases with $\ge$, and the cases with $\hat{\mc{T}}^\pi$ and $\hat{Z}^\pi$ follow by the same argument. First, we show that
\begin{align}
\label{eqn:inversecdfformula}
F_{\mc{T}^\pi Z(s,a)}(x)
\ge F_{Z(s,a)}(x+\beta)
\qquad(\forall x\in[V_{\text{min}},V_{\text{max}}]).
\end{align}
To this end, note that rearranging (\ref{eqn:contractionboundstrongcase1}), we have
\begin{align*}
F_{\mc{T}^\pi Z(s,a)}(F^{-1}_{Z(s,a)}(\tau)-\beta) \ge \tau.
\end{align*}
Then, substituting $\tau = F_{\hat{Z}^\pi(s,a)}(x+\beta)$ yields (\ref{eqn:inversecdfformula}); note that such $\tau$ must exist since the CDF is defined on all of $\mathbb{R}$. Next, we show that
\begin{align}
\label{eqn:sumcontraction}
F^{-1}_{\textcolor{red}{\mc{T}^\pi}Z(s,a)}(\tau) \ge F^{-1}_{\textcolor{red}{\mc{T}^{\pi}}(\mc{T}^\pi Z(s,a))}(\tau) + \textcolor{red}{\gamma} \beta
\qquad(\forall\tau\in[0,1]),
\end{align}
where the parts changed from (\ref{eqn:contractionboundstrongcase1}) are highlighted in red. Intuitively, this claim says that $\mc{T}^\pi$ distributes additively to the constant $\beta$, and since $\mc{T}^\pi$ is a $\gamma$-contraction in $\bar{d}_p$, we have $\mc{T}^\pi\beta\le\gamma\beta$.
To show (\ref{eqn:sumcontraction}), first note that
\begin{align*}
F_{\mc{T}^\pi (\mc{T}^\pi Z(s,a))}(x)
&= \sum_{s',a'} P^\pi(s',a'\mid s,a) \int F_{\mc{T}^\pi Z(s',a')}\left(\frac{x-r}{\gamma}\right)dF_{R(s,a)}(r) \\ 
&\ge \sum_{s',a'} P^\pi(s',a'\mid s,a)  \int F_{Z(s',a')}\left(\frac{x-r}{\gamma} + \beta \right) dF_{R(s,a)}(r) \\
&= \sum_{s',a'} P^\pi(s',a'\mid s,a) \int F_{\gamma Z(s',a')}(x-r + \gamma \beta) dF_{R(s,a)}(r) \\
&= \sum_{s',a'} P^\pi(s',a'\mid s,a) F_{R(s,a) + \gamma Z(s',a')}(x + \gamma \beta) \\
&= F_{\mc{T}^\pi Z(s,a)}(x + \gamma\beta),
\end{align*}
where the first step follows by derivation of the Bellman operator for the CDF, the second step follows from (\ref{eqn:inversecdfformula}), and the third step follows from the property of a CDF function. It follows that
\begin{align*}
F^{-1}_{\mc{T}^\pi Z(s,a)}\big(F_{\mc{T}^\pi (\mc{T}^\pi Z(s,a))}(x) \big) \ge x + \gamma\beta.
\end{align*}
Setting  $\tau = F_{\mc{T}^\pi (\mc{T}^\pi Z(s,a))}(x)$, we have
\begin{align*}
F^{-1}_{\textcolor{black}{\mc{T}^\pi}Z(s,a)}(\tau) \ge F^{-1}_{\textcolor{black}{\mc{T}^{\pi}}(\mc{T}^\pi Z(s,a))}(\tau) + \gamma\beta
\end{align*}
for all $\tau \in [0,1]$; thus, we have shown (\ref{eqn:sumcontraction}). Now, by induction on $\mc{T}^\pi$, we have
\begin{align*}
F^{-1}_{(\mc{T}^\pi)^kZ(s,a)}(\tau) \ge F^{-1}_{(\mc{T}^{\pi})^{k+1}Z(s,a)}(\tau) + \gamma^k\beta
\end{align*}
for all $k\in\mathbb{N}$. Summing these inequalities over $k\in\{0,1,...,n\}$ inequality gives
\begin{align*}
\sum_{k=0}^nF^{-1}_{(\mc{T}^\pi)^kZ(s,a)}(\tau) \ge \sum_{k=0}^n F^{-1}_{(\mc{T}^{\pi})^k(\mc{T}^\pi Z(s,a))}(\tau) + \sum_{k=0}^n \gamma^k\beta
\end{align*}
Subtracting common terms from both sides and evaluating the sum over $\gamma^k$, we have
\begin{align*}
F^{-1}_{Z(s,a)}(\tau) \ge F^{-1}_{(\mc{T}^\pi)^{n+1}Z(s,a)}(\tau) + \frac{1-\gamma^{n+1}}{1-\gamma}\beta.
\end{align*}
Taking $n \rightarrow \infty$, we have
\begin{align*}
F^{-1}_{Z(s,a)}(\tau) \ge F^{-1}_{Z^\pi(s,a)}(\tau) - (1-\gamma)^{-1}\beta,
\end{align*}
where we have used the fact that $Z^\pi$ is the fixed point of $\mc{T}^\pi$. The claim follows.
\end{proof}

\subsection{Bound on error of the fixed-point of the empirical
distributional bellman operator}
\label{sec:empiricalfixedpoint}

We can use our techniques to prove finite-sample bounds on the error of using value iteration with the empirical Bellman operator $\hat{\mc{T}}$ compared to the true Bellman operator $\mc{T}$.
\begin{theorem}
We have $\|F_{\hat{Z}^\pi(s,a)}^{-1}-F_{Z^\pi(s,a)}\|_{\infty}\le(1-\gamma)^{-1}\Delta_{\text{max}}$, where $\hat{Z}^\pi$ and $Z^\pi$ are the fixed-points of $\hat{\mc{T}}^\pi$ and $\mc{T}^\pi$, respectively.
\end{theorem}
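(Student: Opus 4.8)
The plan is to combine the one-step concentration bound (Lemma~\ref{lemma:quantile-concentration-bound}) with the fixed-point amplification result (Lemma~\ref{lem:contractionbound}), mirroring the structure of the proof of Theorem~\ref{theorem:CODAC-quantile-pointwise-lowerbound} but now comparing $\hat{\mc{T}}^\pi$ against $\mc{T}^\pi$ rather than $\tilde{\mc{T}}^\pi$ against $\mc{T}^\pi$. Write $\Delta_{\text{max}}=\max_{s,a}\Delta(s,a)$. The key idea is to anchor the argument at $Z^\pi$: it is the fixed point of $\mc{T}^\pi$ and, by the strong-monotonicity assumption on $F_{Z^\pi}$, its CDF is $\zeta$-strongly monotone, so the quantile concentration bound applies cleanly at $Z=Z^\pi$.

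First I would exploit the fixed-point identity $F^{-1}_{Z^\pi(s,a)}=F^{-1}_{\mc{T}^\pi Z^\pi(s,a)}$ to rewrite the one-step discrepancy between the empirical and true operators evaluated at $Z^\pi$ as
\[
\|F^{-1}_{Z^\pi(s,a)}-F^{-1}_{\hat{\mc{T}}^\pi Z^\pi(s,a)}\|_\infty=\|F^{-1}_{\mc{T}^\pi Z^\pi(s,a)}-F^{-1}_{\hat{\mc{T}}^\pi Z^\pi(s,a)}\|_\infty .
\]
By Lemma~\ref{lemma:quantile-concentration-bound} applied with $Z=Z^\pi$, the right-hand side is at most $\Delta(s,a)\le\Delta_{\text{max}}$ for all $s\in\mc{D}$, $a\in\mc{A}$, with probability at least $1-\delta$. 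This shows that $Z^\pi$ is an approximate fixed point of $\hat{\mc{T}}^\pi$ in the $\infty$-norm on quantile functions, with error $\beta=\Delta_{\text{max}}$.

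Next I would feed this one-step bound into the $\hat{\mc{T}}^\pi$ version of Lemma~\ref{lem:contractionbound} (equivalently Lemma~\ref{lem:contractionboundstrong}, which explicitly holds with $\mc{T}^\pi,Z^\pi$ replaced by $\hat{\mc{T}}^\pi,\hat{Z}^\pi$), taking $Z=Z^\pi$ and $\beta=\Delta_{\text{max}}$. Since $\hat{Z}^\pi$ is the fixed point of $\hat{\mc{T}}^\pi$, this immediately yields $\|F^{-1}_{Z^\pi(s,a)}-F^{-1}_{\hat{Z}^\pi(s,a)}\|_\infty\le(1-\gamma)^{-1}\Delta_{\text{max}}$, which is the claim (the second term in the statement should read $F^{-1}_{Z^\pi(s,a)}$, and the norm is symmetric in its arguments). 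The $(1-\gamma)^{-1}$ factor arises exactly because a constant-shift error, propagated through repeated applications of the $\gamma$-contractive operator, is geometrically damped and sums to $(1-\gamma)^{-1}$.

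The main obstacle is the strong-monotonicity requirement underlying the quantile concentration bound. Lemma~\ref{lem:inverseconcentration} converts a uniform CDF bound into an inverse-CDF bound only when one of the two CDFs is $\zeta$-strongly monotone; applying the concentration bound directly at the data-dependent fixed point $\hat{Z}^\pi$ (the most symmetric-looking route, via $\hat{Z}^\pi=\hat{\mc{T}}^\pi\hat{Z}^\pi$ together with the $\mc{T}^\pi$ version of Lemma~\ref{lem:contractionbound}) would require $\hat{Z}^\pi$ to be strongly monotone, which the assumptions do not guarantee. Anchoring at $Z^\pi$ instead sidesteps this, since $F_{Z^\pi(s,a)}$ is strongly monotone by assumption and the inverse-CDF bound is symmetric in its two arguments. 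I would also confirm the uniform-over-$Z$ phrasing of the CDF concentration (Lemma~\ref{lemma:cdf-concentration-bound}), so that the high-probability event does not itself depend on $\hat{Z}^\pi$; this holds because the underlying bound reduces to the data-only quantities $\|F_{\hat{R}}-F_R\|_\infty$ and $\|\hat{P}-P\|_1$, which are independent of $Z$.
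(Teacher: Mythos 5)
Your proof is correct, and it rests on the same two ingredients as the paper's proof---the quantile concentration bound (Lemma~\ref{lemma:quantile-concentration-bound}) and the fixed-point amplification lemma (Lemmas~\ref{lem:contractionbound} and~\ref{lem:contractionboundstrong})---but in the mirror-image arrangement. The paper anchors at the \emph{empirical} fixed point: it applies Lemma~\ref{lemma:quantile-concentration-bound} with $Z=\hat{Z}^\pi$, using $\hat{Z}^\pi=\hat{\mc{T}}^\pi\hat{Z}^\pi$ to get $\|F^{-1}_{\hat{Z}^\pi(s,a)}-F^{-1}_{\mc{T}^\pi\hat{Z}^\pi(s,a)}\|_\infty\le\Delta_{\text{max}}$, and then invokes the $\mc{T}^\pi$-version of Lemma~\ref{lem:contractionbound} to reach $Z^\pi$. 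You anchor at the \emph{true} fixed point: concentration at $Z=Z^\pi$ (using $Z^\pi=\mc{T}^\pi Z^\pi$), followed by the $\hat{\mc{T}}^\pi$-version of Lemma~\ref{lem:contractionboundstrong} to reach $\hat{Z}^\pi$. The swap is not merely cosmetic, and your arrangement is actually the more rigorous one. First, Lemma~\ref{lemma:quantile-concentration-bound} requires its input $Z$ to have a $\zeta$-strongly monotone CDF; the paper's regularity assumption guarantees this only for $Z^\pi$, not for the data-dependent $\hat{Z}^\pi$, so the paper's application of the lemma at $\hat{Z}^\pi$ rests on an unstated hypothesis that your anchoring sidesteps---exactly the obstacle you identify. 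Second, because $Z^\pi$ is a fixed, non-random distribution, your route only needs the concentration event for a single $Z$, whereas the paper's route applies the bound at a data-dependent argument and therefore implicitly needs the event to hold uniformly over $Z$ (which it does, since Lemma~\ref{lemma:cdf-concentration-bound} reduces to the data-only quantities $\|F_{\hat{R}}-F_R\|_\infty$ and $\|\hat{P}-P\|_1$, but the paper does not remark on this). You are also right on the two cosmetic points: the statement's second term should read $F^{-1}_{Z^\pi(s,a)}$, and the conclusion should be qualified as holding with probability at least $1-\delta$, since $\Delta_{\text{max}}$ depends on $\delta$.
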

\begin{proof}
Let $\Delta_{\text{max}}=\max_{s,a}\Delta(s,a)$. We have $\|F_{\hat{Z}^\pi(s,a)}^{-1}-F_{\mc{T}^\pi\hat{Z}^\pi(s,a)}\|_{\infty}\le\Delta_{\text{max}}$ by Lemma~\ref{lemma:quantile-concentration-bound} with  $Z=\hat{Z}^\pi$. Thus, we have
$\|F_{\hat{Z}^\pi(s,a)}^{-1}-F_{Z^\pi(s,a)}\|_{\infty}\le(1-\gamma)^{-1}\Delta_{\text{max}}$ by Lemma~\ref{lem:contractionbound}.
\end{proof}

\section{Algorithm and implementation details}
\label{appendix:implementation}

In this section, we describe our practical implementation of CODAC in detail.

\subsection{Actor-Critic objective}

We first describe a modification to the CODAC objective, which admits \textit{learnable} $\alpha$, instead of having to fix it to a constant value throughout the entirety of training. Recall that the original objective is
\begin{align*}
\hat{Z}^{k+1} = \operatorname*{\arg\min}_{Z} \left\{ \alpha \cdot \BE_{U(\tau)} \left[\BE_{\mc{D}(s)} \log \sum_a \text{exp}(F^{-1}_{Z(s,a)}(\tau)) - \BE_{\mc{D}(s,a)} F^{-1}_{Z(s,a)}(\tau) \right] + \mathcal{L}_p(Z,\hat{\mc{T}}^{\pi^k} \hat{Z}^k) \right\},
\end{align*}

We first provide a derivation of the above objective; this portion largely follows from~\cite{kumar2020conservative}. We first introduce a \textit{regularization} term $\mc{R}(\mu)$ to obtain a well-defined optimization problem:
$$
\hat{Z}^{k+1}=\operatorname*{\arg\min}_{Z} \max_\mu \left\{ \alpha \cdot \BE_{U(\tau)} \left[ \BE_{\mc{D}(s),\mu(a\mid s)} F^{-1}_{Z(s,a)}(\tau) - \BE_{\mc{D}(s,a)} F^{-1}_{Z(s,a)}(\tau) \right] + \mathcal{L}_p(Z, \hat{\mc{T}}^{\pi^k} \hat{Z}^k)\right\} + \mc{R}(\mu)
$$
If we set $\mc{R}(\mu)$ to be the entropy $\mc{H}(\mu)$, then we can see that $\mu(a \mid s) \propto \text{exp}(Q(s,a)) = \text{exp}\big(\int_0^1 F^{-1}_{Z(s,a)}(\tau)d\tau\big)$ is the solution to the inner-maximization. Plugging this choice into the above regularized objective gives 
\begin{align*}
\hat{Z}^{k+1} = \operatorname*{\arg\min}_{Z} \left\{ \alpha \cdot \BE_{U(\tau)} \left[\BE_{\mc{D}(s)} \log \sum_a \text{exp}(F^{-1}_{Z(s,a)}(\tau)) - \BE_{\mc{D}(s,a)} F^{-1}_{Z(s,a)}(\tau) \right] + \mathcal{L}_p(Z,\hat{\mc{T}}^{\pi^k} \hat{Z}^k) \right\},
\end{align*}
as desired. As in~\cite{kumar2020conservative}, we introduce a parameter $\zeta\in\mathbb{R}_{>0}$ that thresholds the quantile value difference between $\mu$ and $\hat{\pi}_{\beta}$. In addition, we scale this difference by $\omega\in\mathbb{R}_{>0}$. This gives a learnable formulation of $\alpha$ via dual gradient descent:
\begin{align*}
\operatorname*{\min}_{Z} \max_{\alpha \geq 0} \left\{ \alpha \cdot \BE_{U(\tau)} \left[ \omega \cdot \left[\BE_{\mc{D}(s)} \log \sum_a \text{exp}(F^{-1}_{Z(s,a)}(\tau)) - \BE_{\mc{D}(s,a)} F^{-1}_{Z(s,a)}(\tau) \right] - \zeta \right] + \mathcal{L}_p(Z,\hat{\mc{T}}^{\pi^k} \hat{Z}^k) \right\},
\end{align*}

Because our experiments are all conducted in continuous-control domains, we cannot enumerate all actions $a$ and compute $\log \sum_a \text{exp}(F^{-1}_{Z(s,a)}(\tau))$ directly. To circumvent this issue, we use the importance sampling approximation scheme introduced in~\cite{kumar2020conservative}. To this end, we use the following approximation in our implementation:
\begin{equation}
\label{eq:log-sum-exp-approximation}
\log \sum_a \text{exp}(F^{-1}_{Z(s,a)}(\tau)) \approx \log \left( \frac{1}{2M} \sum^N_{a_i \sim U(\mathcal{A})} \left[\frac{\text{exp}(F^{-1}_{Z(s,a)}(\tau))}{U(\mathcal{A})}\right] + \frac{1}{2M} \sum^N_{a_i \sim \pi(a\mid s)}\left[\frac{\text{exp}(F^{-1}_{Z(s,a)}(\tau))}{\pi(a_i \mid s)}\right] \right)
\end{equation}
where $U(\mathcal{A})=\text{Uniform}(\mathcal{A})$ denotes the uniform distribution over actions, and where we pick $M=10$. We summarize a single step of the actor and critic updates used by CODAC in Algorithm \ref{algo:codac-pseudocode}. 

\subsection{Neural network architecture}

The policy network $\pi(\cdot \mid s; \phi)$ consists of a two-layer fully connected architecture with $256$ hidden units and ReLU activations. For the quantile network, we use the architecture from~\cite{ma2020distributional}, which builds on top of the implicit quantile network (IQN) architecture~\cite{dabney2018implicit}. Specifically, 
we represent the quantile function $F^{-1}_{Z(s,a)}(\tau)$ as an element-wise (Hadamard) product of a state-action feature representation $\psi(s,a)$ and a quantile embedding $\varphi(\tau)$---i.e., $F^{-1}_{Z(s,a)}(\tau) = \psi(s,a) \odot \varphi(\tau)$. Following IQN, we use the following embedding formula for $\varphi(\tau)$: 
$$
\varphi_{j}(\tau)\coloneqq h\left(\sum_{i=1}^{n} \cos (i \pi \tau) w_{i 
j}+b_{j}\right),
$$
where $w_{i j},b_j$ are weights of the neural network $\varphi$, and $h$ is the sigmoid function.  We use a one-layer 256-unit fully connected neural network for $\psi(s,a)$, and a one-layer 64-unit fully connected neural network for $\varphi(\tau)$, followed with one-layer 256-unit fully connected network applied to $\psi(s,a) \odot \varphi(\tau)$. We apply layer normalization~\cite{ba2016layer} after each activation layer to ensure stable training.

\subsection{Actor-Critic updates}

\begin{algorithm}[t]
\caption{CODAC Update}
\label{algo:codac-pseudocode}
\begin{algorithmic}[1]
\STATE {\bfseries Hyperparameters:} Number of generated quantiles $N$, quantile Huber loss threshold $\kappa$, CODAC penalty scale $\omega$, CODAC penalty threshold $\zeta$, discount rate $\gamma$, learning rates $\eta_{\text{actor}}, \eta_{\text{critic}}, \eta_{\alpha}$ 
\STATE {\bfseries Parameters:} Critic parameters $\theta$, Actor parameters $\phi$, Penalty $\alpha$
\STATE {\bfseries Inputs:} Tuple $s, a, r, s'$
\STATE Sample quantiles $\tau_i$ (for $i=1, \dots, N$) and $\tau_j'$ (for $j=1, \dots, N$) i.i.d. from $\text{Uniform}([0,1])$
\STATE {\color{light-gray} \# Compute distributional TD loss}
\STATE Get next actions for calculating target $a'\sim\pi(\cdot\mid s'; \phi)$
\FOR{$i=1$ {\bfseries to} $N$}
\FOR{$j=1$ {\bfseries to} $N$}
\STATE $\delta_{\tau_i, \tau'_j} = r + \gamma F^{-1}_{Z(s',a'), \theta'}(\tau'_j) - F^{-1}_{Z(s,a),\theta}(\tau_i)$
\ENDFOR
\ENDFOR
\STATE Compute $\mc{L}_{\text{critic}}(\theta) = N^{-2} \sum_{i=1}^N \sum_{j=1}^N \mc{L}_{\kappa}(\delta_{\tau_i,\tau'_j};\tau_i)$
\STATE {\color{light-gray} \# Compute CODAC penalty}
\STATE Sample $i\sim U(\{1,...,N\})$ and use quantile $\tau_i$ 
\STATE Estimate $\log \sum_a \text{exp}(F^{-1}_{Z(s,a), \theta}(\tau_i))$ according to (\ref{eq:log-sum-exp-approximation})
\STATE Compute $\mc{L}_{\text{CODAC}}(\theta, \alpha) = \alpha \cdot \left( \omega \cdot \left(\log \sum_a \text{exp}(F^{-1}_{Z(s,a), \theta}(\tau_i)) - N^{-1} \sum_{j=1}^N F^{-1}_{Z(s,a),\theta}(\tau_j)\right) - \zeta \right)$ 
\STATE Update $\theta\gets\theta-\eta_{\text{critic}}\nabla_{\theta} \big(\mc{L}_{\text{critic}}(\theta) + \mc{L}_{\text{CODAC}}(\theta,\alpha)\big)$ 
\STATE Update $\alpha\gets\alpha-\eta_\alpha\nabla_{\alpha}\mc{L}_{\text{CODAC}}(\theta, \alpha)$
\STATE {\color{light-gray} \# Update Policy Network $\pi_\phi(a\mid s)$ with $\Phi_g$ objective} 
\STATE Get new actions with re-parameterized samples $\tilde{a} \sim \pi(\cdot\mid s; \phi)$
\STATE Compute$\Phi_g(s,\tilde{a})$ using $F^{-1}_{Z(s,\tilde{a}),\theta}(\tau_i), i=1,...,N$
\STATE $\mathcal{L}_{\text{actor}}(\phi) = \log(\pi(\tilde{a}\mid s;\phi)) - \Phi_g(s,\tilde{a})$ 
\STATE Update $\phi\gets\phi+ \eta_{\text{actor}} \nabla \mathcal{L}_{\text{actor}}(\phi)$ 
\end{algorithmic}
\end{algorithm}

We summarize a single actor-critic update performed by CODAC in Algorithm \ref{algo:codac-pseudocode}. We briefly discuss a few implementation details. First, since computing the CODAC penalty to all quantiles is prohibitively expensive, we apply the conservative penalty to a randomly chosen $\tau_i$ on each update step (Line 13-15). This practical choice aligns well with our theoretical objective, whose outer expectation is taken with respect to the uniform distribution $U(\tau)$ over quantiles. We also found subtracting the \textit{average} quantile values (i.e., $N^{-1} \sum_{j=1}^N F^{-1}_{Z(s,a),\theta}(\tau_j)$) to be more stable than just subtracting the corresponding quantile value  $F^{-1}_{Z(s,a),\theta}(\tau_i)$. This step can be viewed as rewriting 
$$
\BE_{U(\tau)} \left[\BE_{\mc{D}(s)} \log \sum_a \text{exp}(F^{-1}_{Z(s,a)}(\tau)) - \BE_{\mc{D}(s,a)} F^{-1}_{Z(s,a)}(\tau) \right]
$$
as 
$$
\BE_{U(\tau)} \left[\BE_{\mc{D}(s)} \log \sum_a \text{exp}(F^{-1}_{Z(s,a)}(\tau))\right] - \BE_{U(\tau)}\left[\BE_{\mc{D}(s,a)} F^{-1}_{Z(s,a)}(\tau) \right]
$$
and implementing the latter as in Line 15. Finally, to compute $\Phi_g(s, \tilde{a})$ in Line 21, we take the average of all $F^{-1}_{Z(s,\tilde{a}),\theta}(\tau_i)$ where $\tau_i$ is less than or equal to the risk threshold value. For the expected-return (i.e., risk-neutral objective), the threshold is $1$, and $\Phi_g(s, \tilde{a}) = \sum_{i=1}^N F^{-1}_{Z(s,\tilde{a}),\theta}(\tau_i) / N$. For CVaR0.1, the threshold is $0.1$, and $\Phi_g(s, \tilde{a}) = \sum_{i=1}^{\max_i: \tau_i < 0.1} F^{-1}_{Z(s,\tilde{a}),\theta}(\tau_i) / \big(\max_i: \tau_i < 0.1)$.

\section{Experiment details and additional results}
\label{appendix:experiment}

\subsection{Risky robot navigation}
\label{appendix:risky-robot-navigation}

\textbf{Risky PointMass environment. } The state space of the PointMass agent $4$-dimensional, including the agent's position as well as the goal position, which is fixed to $[0.1, 0.1]$. The state space constraint is $[0,1]$. Hence, the agent cannot enter a location outside of this unit square. The risky red region is centered at $[0.5,0.5]$ with radius of $0.3$. The agent's initial state is randomly chosen inside the $[0.1,0.9]^2$ box outside the risky red region. The agent dynamics is holomorphic, allowing the agent to move freely in any direction with its $x$-axis and $y$-axis displacement capped at $0.1$. The reward the agent receives at each step is its negative Euclidean distance to the goal plus a constant $-0.1$, which encourages the agent to reach the goal as fast as possible. When the agent is inside the risky red region, with probability $0.1$, an additional $-50$ reward is incurred. The episode terminates when the agent is within $0.1$ distance to the goal. An episode may last up to $100$ steps. 

\textbf{Risky Ant environment. } The state space of the Ant agent is identical to the original state space of the Mujoco Ant agent. The goal is located at $[10,10]$, and the risky red region is centered at $[5,5]$ with a radius of $3$. The agent's initial state is randomly chosen inside the $[0, 7]^2$ box outside the risky red region. The agent dynamics is also identical to the Mujoco Ant environment. At each timestep, the agent receives its negative Euclidean distance to the goal plus $0.1 \times \text{velocity}$ as its reward. When the agent is inside the risky red region, with probability $0.1$, an additional $-50$ reward is incurred. The episode terminates when the agent is within $0.1$ distance to the goal.  When the agent is inside the risky red region, with probability $0.05$, an additional $-90$ reward is incurred. The episode terminates when the agent is within  distance $1$ of the goal.  An episode may last up to $200$ steps. 

\textbf{Dataset and training details.}
We train a distributional SAC agent online for $100$ (resp., $5000$) episodes in the PointMass (resp., Ant) environment, and use this agent's replay buffer as the dataset for offline RL training. All offline RL algorithms are trained for $10^4$ (resp., $10^6$) gradient steps. We use the default hyperparameters for ORAAC, and use $\omega=0.01$ and $\zeta=10$ for both CODAC and CQL.
Our results are reported using $100$ evaluation episodes with same set of initial states.

\textbf{Additional results.}
In Table~\ref{table:risky-robot-navigation-appendix}, we show full results for the risky robot navigation environments. As can be seen, CODAC-C achieves the best performance on most metrics and is the only method that learns risk-averse behavior. In addition, in Figure~\ref{figure:appendix-risky-ant-trajectories}, we visualize trajectories for various Ant agents. As can be seen, CODAC-C avoids the risky region shown in red, while still making it to the goal.

\textbf{Alternative risk-sensitive objectives.} 
On the risky pointmass domain, we also show that CODAC can optimize CPW and Wang risk-sensitive objectives using the same offline dataset. As for CODAC-CVaR (CODAC-C) and CODAC-Neutral (CODAC-N), we train CODAC-Wang and CODAC-CPW using $5$ random seeds and report the results in Table~\ref{table:risky-pointmass-alternative-risk}. As shown, CODAC-Wang performs similarly to CODAC-CVaR, trading off slightly better average performance at the cost of safety. On the other hand, CODAC-CPW is on par with CODAC-Neutral. These findings match our intuition that Wang is slightly more risk-seeking than CVaR since it gives non-zero (but vanishingly small) weight to quantile values above the risk cutoff threshold, and CPW is similar to risk-neutral due to its intended modeling of human game-play behavior. These findings are also consistent with those in prior work~\cite{dabney2018implicit}, which investigates these risk objectives for online distributional reinforcement learning.

\begin{table}[t]
\caption{CODAC can optimize various distorted expectation based risk-sensitive objectives.}
\label{table:risky-pointmass-alternative-risk}
\centering
\begin{tabular}{l|rrrr|}
\toprule
\multicolumn{1}{c|}{\multirow{2}{*}{\textbf{Algorithm}}}
& \multicolumn{4}{c|}{\textbf{Risky PointMass}}\\ 
& Mean & Median & $\text{CVaR}_{0.1}$ & Violations \\
\midrule 
CODAC-CVaR & -6.05 & -4.89 & \textbf{-14.73} & \textbf{0.0} \\ 
CODAC-CPW & -8.34 & \textbf{-4.00} & -54.18 & 103.0  \\ 
CODAC-Neutral & -8.60 & -4.05 & -51.96 & 108.3 \\ 
CODAC-Wang & \textbf{-6.01} & -4.46 & -16.80 & 7.0 \\ 
\hline
\end{tabular}
\end{table}

\begin{table}[t]
\caption{Risky robot navigation quantitative evaluation.}
\label{table:risky-robot-navigation-appendix}
\centering
\resizebox{\textwidth}{!}{
\begin{tabular}{l|rrrr|rrrr}
\toprule
\multicolumn{1}{c|}{\multirow{2}{*}{\textbf{Algorithm}}}
& \multicolumn{4}{c|}{\textbf{Risky PointMass}}
& \multicolumn{4}{c}{\textbf{Risky Ant}} \\ 
& Mean & Median & $\text{CVaR}_{0.1}$ & Violations & Mean & Median & $\text{CVaR}_{0.1}$ & Violations \\
\midrule
DSAC (Online) & -7.69& -3.82 & -49.9& 94 & -866.1& -833.3 & -1422.7& 2247\\ 
CODAC-C (Ours) & \textbf{-6.05} $\pm$ 0.42 & -4.89 $\pm$ 0.35& \textbf{-14.73} $\pm$ 0.95 & \textbf{0.0} $\pm$ 0.0 & -456.0 $\pm$ 24.0 & -433.4 $\pm$ 17.1  & \textbf{-686.6} $\pm$ 149.8 & \textbf{347.8} $\pm$ 69.7 \\ 
CODAC-N (Ours) & -8.60 $\pm$ 1.62 & \textbf{-4.05} $\pm$ 0.12 & -51.96 $\pm$ 12.34 & 108.3 $\pm$ 11.90 & \textbf{-432.7} $\pm$ 41.3 & \textbf{-395.1} $\pm$ 11.5 & -847.1 $\pm$ 309.3 & 936.0 $\pm$ 186.1 \\
ORAAC & -10.67 $\pm$ 1.18 & -4.55 $\pm$ 0.55 & -64.12 $\pm$ 5.14 & 138.7 $\pm$ 16.4 & -788.1 $\pm$ 82.0 & -795.3 $\pm$ 144.4 & -1247.2 $\pm$ 48.0 & 1196 $\pm$ 49.7 \\
CQL & -7.51 $\pm$ 1.05& -4.18 $\pm$ 0.13 & -43.44 $\pm$ 10.57 & 93.4 $\pm$ 0.94 & -967.8 $\pm$ 66.9& -858.5 $\pm$ 22.0 & -1887.3 $\pm$ 236.1 & 1854.3 $\pm$ 369.1 \\ 
\hline
\end{tabular}}
\end{table}

\begin{figure}[t]
\centering
\resizebox{\textwidth}{!}{
\begin{tabular}{cccc}
\textbf{CODAC-C} & \textbf{CODAC-N} & \textbf{ORAAC} & \textbf{CQL} \\ 
\includegraphics[width=.24\linewidth]{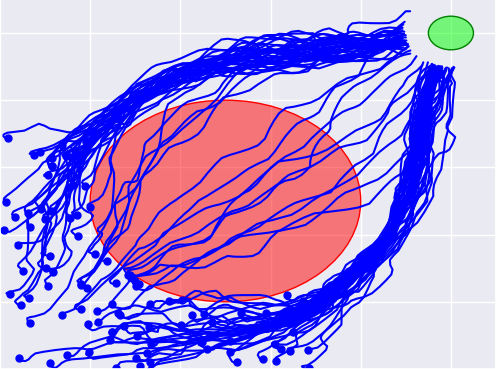} &
\includegraphics[width=.24\linewidth]{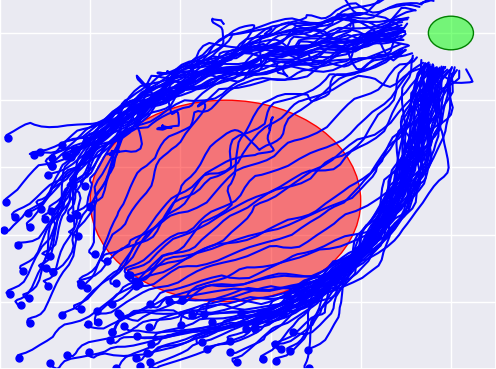} &
\includegraphics[width=.24\linewidth]{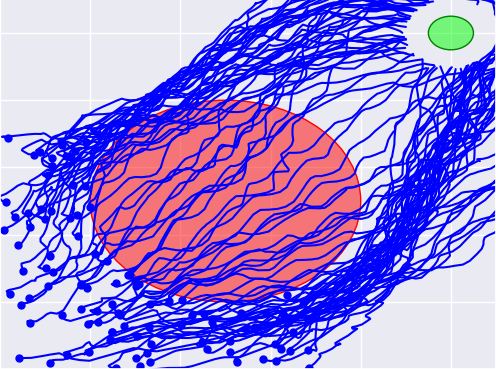} &
\includegraphics[width=.24\linewidth]{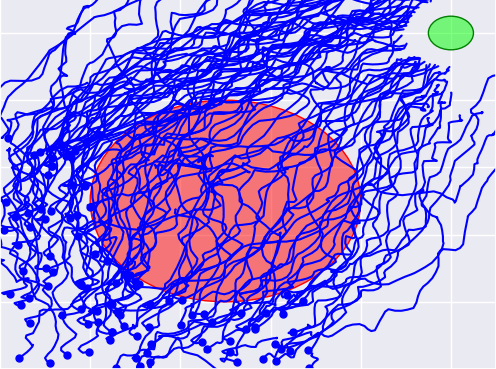}
\end{tabular}}
\caption{2D visualization of evaluation trajectories on the Risky Ant environment. The red region is risky, the solid blue circles indicate initial states, and the blue lines are trajectories. CODAC-C learns the most risk-averse behavior while consistently approaching the goal.} 
\label{figure:appendix-risky-ant-trajectories}
\end{figure}

\subsection{Stochastic D4RL Mujoco suite}
\label{appendix:stochastic-d4rl}

Our experimental protocol largely follows~\cite{fu2020d4rl}. All algorithms are trained for 500k gradient steps. We use 10 evaluation episodes on the modified Mujoco environments (see below). Hyperparameters are detailed in Appendix~\ref{appendix:hyperparameters}.

\textbf{Dataset descriptions.}
We describe the stochastic reward modification made to the original HalfCheetah, Hopper, and Walker2d environments~\cite{urpi2021risk}. These reward modifications are used to relabel the reward label in D4RL datasets; the modified environments are also used for evaluation in this set of experiments. The following paragraphs are adapted from~\cite{urpi2021risk}:
\begin{itemize}[topsep=0pt,itemsep=0ex,partopsep=1ex,parsep=1ex,leftmargin=*]
\item \textbf{Half-Cheetah:} We use $R_t(s, a) = \bar{r}_t(s, a) -  70\cdot\mathbbm{1}_{v > \bar{v}} \cdot \mathcal{B}_{0.1}$, where $\bar{r}_t(s, a)$ is the original environment reward, $v$ is the forward velocity, and $\bar{v}$ is a threshold velocity ($\bar{v}=4$ for Medium/Mixed datasets and $\bar{v}=10$ for the Expert dataset). The maximum episode length is reduced to $200$ steps.
\item \textbf{Walker2D/Hopper:} We use $R_t(s, a) = \bar{r}_t(s, a) -  p\cdot\mathbbm{1}_{|\theta| > \bar{\theta}} \cdot \mathcal{B}_{0.1}$, where $\bar{r}_t(s, a)$ is the original environment reward, $\theta$ is the pitch angle, $\bar{\theta}$ is a threshold angle ($\bar{\theta}=0.5$ for Walker2d and $\bar{\theta}=0.1$ for Hopper) and $p=30$ for Walker2d and $p=50$ for Hopper. When $|\theta| > 2\bar{\theta}$ the robot falls, the episode terminates. The maximum episode length is reduced to $500$ steps.
\end{itemize}

\textbf{Additional results.}
In Table \ref{table:d4rl-mujoco-stochastic-appendix}, we present the full Stochastic D4RL Mujoco results, including results on the Expert dataset. We repeat the results on the Medium and Mixed datasets in the main text here for completeness. Recall that the Expert (resp., Medium) dataset consists of rollouts from a fixed SAC agent trained to Expert (resp., Medium) performance, Expert is convergence and Medium is 50\% of Expert performance. The Mixed dataset is the replay buffer of a SAC agent trained to achieve 50\% of the expert return. 

\begin{table}[t]
\caption{Normalized Return on the Stochastic D4RL Mujoco Suite, averaged over $5$ random seeds.}
\label{table:d4rl-mujoco-stochastic-appendix}
\centering
\resizebox{\textwidth}{!}{
\begin{tabular}{ll|cc|cc|cc}
\toprule
& \multirow{2}{*}{Algorithm}& \multicolumn{2}{c|}{Medium}
& \multicolumn{2}{c|}{Mixed}
& \multicolumn{2}{c}{Expert}\\ 
& & Mean & $\text{CVaR}_{0.1}$ & Mean & $\text{CVaR}_{0.1}$ & Mean & $\text{CVaR}_{0.1}$ \\
\midrule
\multirow{4}{*}{\rotatebox[origin=c]{90}{Cheetah}} &
CQL & 33.2 $\pm$ 21.6& -15.0 $\pm$ 14.3 & 214.1 $\pm$ 52.0& 12.0 $\pm$ 23.8& -74.8 $\pm$ 22.6& -206.6 $\pm$ 46.9\\ 
& ORAAC & 361.4 $\pm$ 14.2& 91.3 $\pm$ 42.1& 307.1 $\pm$ 5.8& 118.9 $\pm$ 27.1 & 598.3 $\pm$ 47.0& 99.7 $\pm$ 71.3\\ 
& CODAC-N & 338.9 $\pm$ 65.7& -41.6 $\pm$ 16.7& 347.7 $\pm$ 32.3 & 149.2 $\pm$ 79.2& 686.3 $\pm$ 128.8& 123.2 $\pm$ 90.1\\ 
& CODAC-C & 335.8 $\pm$ 80.6& -27.7 $\pm$ 60.3& 396.4 $\pm$ 56.1& 238.5 $\pm$ 58.9& 551.6 $\pm$ 129.4& 151.3 $\pm$ 133.0\\ 
\midrule
\multirow{4}{*}{\rotatebox[origin=c]{90}{Hopper}} &
CQL & 877.9 $\pm$ 193.3& 693.0 $\pm$ 160.9 & 189.2 $\pm$ 63.0 & -21.4 $\pm$ 62.5 & 1165.0 $\pm$ 59.4 & 886.0 $\pm$ 132.7\\ 
& ORAAC & 1007.1 $\pm$ 58.5 & 767.6 $\pm$ 101.0 & 876.3 $\pm$ 86.7& 524.9 $\pm$ 323.0 & 1156.8 $\pm$ 340.5& 767.4 $\pm$ 372.6\\ 
& CODAC-N & 993.7 $\pm$ 32.9& 952.5 $\pm$ 29.0& 1483.9 $\pm$ 16.2 & 1457.6 $\pm$ 20.7 & 1292.7 $\pm$ 34.9 & 1024.0 $\pm$ 45.6\\ 
& CODAC-C &  1014.0 $\pm$ 281.7 & 976.4 $\pm$ 272.1& 1551.2 $\pm$ 33.4 & 1449.6 $\pm$ 101.3 & 1270.6 $\pm$ 74.8 & 986.4 $\pm$99.7\\ 
\midrule
\multirow{4}{*}{\rotatebox[origin=c]{90}{Walker2d}} &
CQL & 1524.3 $\pm$ 87.9& 1343.8 $\pm$ 248.2& 74.3 $\pm$ 76.7 & -64.0 $\pm$ -77.7 & 2045.2 $\pm$ 37.6& 1868.2 $\pm$55.1\\ 
& ORAAC & 1134.1 $\pm$ 235.4& 663.0 $\pm$ 349.8 & 222.0 $\pm$ 37.4& -69.6 $\pm$ 76.3 & 991.2 $\pm$ 203.5 & 108.9 $\pm$ 73.2\\ 
& CODAC-N & 1537.3 $\pm$ 65.8& 1158.8 $\pm$ 357.3& 358.7 $\pm$ 125.4& 106.4 $\pm$ 146.9 & 2170.3 $\pm$ 22.7& 2035.4 $\pm$ 39.9\\ 
& CODAC-C & 1120.8 $\pm$ 319.3& 902.3 $\pm$ 492.0& 450.0 $\pm$ 193.2 & 261.4 $\pm$ 231.3& 2056.7 $\pm$ 43.1& 1889.4 $\pm$ 28.6\\ 
\bottomrule
\end{tabular}}
\end{table}

\textbf{Qualitative analysis.} 
To better interpret the stochastic D4RL results, we have collected behavioral statistics of the agents trained on the risk-sensitive HalfCheetah-Mixed-v0 and Walker2d-Mixed-v0 datasets. We execute one trained agent for each method reported in Table~\ref{table:d4rl-mujoco-regular} for 10 episodes in the environment and record the percentage of timesteps where the agent violates the threshold and their average velocity over these evaluation episodes.

As shown in Table~\ref{table:d4rl-stochastic-qualitative}, CODAC-C achieves the lowest percentage of violations in the HalfCheetah environment, indicating that it has learned a safer policy than all other methods. On Walker2d, CQL appears to be the safest; however, this result is due to the fact that CQL failed to learn the desirable walking behavior as indicated by its low reward in the paper. Among the methods that learned to walk, CODAC-C achieves the lowest average angular velocity while maximizing the return.

\begin{table}[t]
\caption{Stochastic D4RL qualitative results}
\label{table:d4rl-stochastic-qualitative}
\centering
\resizebox{\textwidth}{!}{
\begin{tabular}{l|rr|rr|}
\toprule
\multicolumn{1}{c|}{\multirow{2}{*}{\textbf{Algorithm}}}
& \multicolumn{2}{c|}{\textbf{HalfCheetah-Mixed-v0}}
& \multicolumn{2}{c}{\textbf{Walker2d-Mixed-v0}} \\ 
& \% Violation & Average Velocity & \% Violation & Average Velocity \\
\midrule
CODAC-C (Ours) & \textbf{11} & \textbf{1.49} & 15 & \textbf{0.28} \\ 
CODAC-N (Ours) & 54 & 2.02 & \textbf{9} & 0.34 \\ 
CQL & 23 & 1.71 & 13 & 0.19 \\ 
ORAAC & 37 & 1.76 & 48 & 0.49 \\
\hline
\end{tabular}}
\end{table}

\subsection{D4RL Mujoco suite}
\label{appendix:regular-d4rl}

Our experimental protocol largely follows from~\cite{fu2020d4rl}. All algorithms are trained for 1M gradient steps. We use 10 evaluation episodes on the original Mujoco environments, which all last $1000$ steps long. Hyperparameters are detailed in Appendix \ref{appendix:hyperparameters}. In Table \ref{table:d4rl-mujoco-regular-full-appendix}, we show the full results on the risk-neutral D4RL Mujoco Suite, which includes additional baselines such as BEAR~\cite{kumar2019stabilizing} and BRAC~\cite{wu2019behavior}.

\begin{table}[t]
\caption{Normalized Return on the D4RL Mujoco Suite, averaged over $5$ random seeds. }
\label{table:d4rl-mujoco-regular-full-appendix}
\centering
\resizebox{\textwidth}{!}{
\begin{tabular}{lrrrrrrrr}
\toprule
\textbf{Dataset} &  \textbf{BC} & \textbf{BEAR} & \textbf{BRAC-v} & \textbf{BCQ} & \textbf{MOPO} & \textbf{CQL} & \textbf{ORAAC} & \textbf{CODAC} \\ \hline
halfcheetah-random & 2.1 & 25.1 & 24.1 & 2.2 & \textbf{35.4} & 35.4 & 13.5 & 34.6 $\pm$ 1.27\\
hopper-random &  9.8 & 11.4  & \textbf{12.2} & 10.6 & 11.7 & 10.8 & 9.8&  11 $\pm$ 0.43 \\
walker2d-random  & 1.6 & 7.3 & 1.9 & 4.9 & 13.6 & 7.0& 3.2& \textbf{18.7} $\pm$ 4.5 \\ \midrule
halfcheetah-medium  & 36.1 & 41.7 & 43.8 & 40.7 & 42.3& 44.4& 41.0 & \textbf{46.3}$\pm$ 0.98 \\
walker2d-medium  & 6.6  & 59.1 & 81.1 & 53.1 & 17.8 & 79.2& 27.3 &\textbf{82.0} $\pm$ 0.45\\
hopper-medium  & 29.0 & 52.1 & 31.1 & 54.5 & 28.0 & 58.0& 1.48 & \textbf{70.8} $\pm$ 11.4 \\
\midrule
halfcheetah-mixed  & 38.4 & 38.6 & 47.7 & 38.2 & \textbf{53.1} & 46.2& 30.0& 44 $\pm$ 0.76\\
hopper-mixed & 11.8 & 33.7 & 0.6 & 33.1 & 67.5 & 48.6& 16.3& \textbf{100.2} $\pm$ 1.0\\
walker2d-mixed  & 11.3 & 19.2 & 0.9 & 15.0 & \textbf{39.0}& 26.7& 28 & 33.2 $\pm$ 17.6\\
\midrule
halfcheetah-medium-expert  & 35.8 & 53.4 & 41.9 & 64.7 & 63.3& 62.4 & 24.0& \textbf{70.4} $\pm$ 19.4 \\
walker2d-medium-expert  & 6.4 & 40.1 & 81.6 & 57.5 & 44.6&98.7& 28.2 & \textbf{106.0} $\pm$ 4.6 \\
hopper-medium-expert  & 111.9 & 96.3 & 0.8 & 110.9 & 23.7& 111.0& 18.2 & \textbf{112.0} $\pm$ 1.7 \\
\bottomrule
\end{tabular}}
\end{table}

\subsection{Hyperparameters}
\label{appendix:hyperparameters}

As CODAC builds on top of distributional SAC (DSAC), we keep the DSAC-specific hyperparameters identical as the original work. These hyperparameters are shown in Table \ref{table:codac-backbone-hyperparameters}.

\begin{table}[t]
\caption{CODAC backbone hyperparameters}
\label{table:codac-backbone-hyperparameters}
\centering
\resizebox{0.5\textwidth}{!}{
\begin{tabular}{lr}
\toprule
Hyper-parameter & Value \\
\midrule
\hspace{1em}Policy network learning rate $\eta_{\text{actor}}$& 3e-4 \\
\hspace{1em}(Quantile) Value network learning rate $\eta_{\text{critic}}$ & 3e-5 \\
\hspace{1em}Optimizer    & Adam \\
\hspace{1em}Discount factor $\gamma$  & 0.99 \\
\hspace{1em}Target smoothing & 5e-3 \\
\hspace{1em}Batch size & 256 \\
\hspace{1em}Replay buffer size  & 1e6  \\
\hspace{1em}Minimum steps before training & 1e4 \\
\hspace{1em}Number of quantile fractions $N$ & 32\\
\hspace{1em}Quantile fraction embedding size  & 64\\
\hspace{1em}Huber regression threshold $\kappa$ & 1  \\
\bottomrule
\end{tabular}}
\end{table}

CODAC additionally introduces hyperparameters $\alpha, \omega, \zeta$ (see Appendix \ref{appendix:implementation}). In most cases, $\alpha$ is a learnable parameter initialized to $1$ with learning rate $\eta_\alpha = 3\times10^{-4}$; in few cases, we fix it to $1$ throughout the entirety of training, which we indicate by setting $\zeta=-1$, as in~\cite{kumar2020conservative}. For ORAAC, we use the default hyperparameters tuned on the stochastic D4RL Mujoco suite for all experiments; for CQL, we use the default hyperparameters tuned on the original D4RL Mujoco suite for all experiments. Below, we describe the specific CODAC hyperparameters we use for the risk-neutral and risk-sensitive D4RL experiments.

\textbf{Risk-neutral D4RL. } We restrict the search range of the hyperparameters as follow: $\omega \in \{0.1, 1, 10\}, \zeta \in \{-1, 10\}$. We also experiment with enabling entropy tuning in DSAC and tune the value network learning rate $\eta_{\text{critic}}$ between $3e-4$ and $3e-5$, which improves performance on some datasets. Table \ref{table:d4rl-mujoco-regular-hyperparameters-appendix} summarizes the hyperparameters used for each dataset in our reported results. At a high level, we find $\omega=1$ to be effective for Mixed and Random datasets and $\omega=10$ effective for Medium and Medium-Expert datasets. These empirical findings match our intuition that the penalty needs not to be high when the dataset has wide coverage. 

\begin{table}[t]
\caption{CODAC hyperparameters for risk-neutral D4RL}
\label{table:d4rl-mujoco-regular-hyperparameters-appendix}
\centering
\resizebox{0.7\textwidth}{!}{
\begin{tabular}{lrrrr}
\toprule
\multicolumn{1}{c}{\textbf{dataset}} &
\multicolumn{1}{c}{$\omega$} &
\multicolumn{1}{c}{$\zeta$} &
\multicolumn{1}{c}{$\eta_{\text{critic}}$} &
\multicolumn{1}{c}{entropy tuning} \\
\midrule
halfcheetah-random & 1 & 10 & 3e-5 & yes \\ 
hopper-random & 1 & 10 & 3e-5 & yes \\ 
walker2d-random  & 1 & 10 & 3e-5 & yes \\
\midrule
halfcheetah-medium  & 10 & 10 & 3e-5 & no \\
hopper-medium  & 10 & 10 & 3e-4 & yes \\ 
walker2d-medium  &10 & 10 & 3e-5 & no \\
\midrule
halfcheetah-mixed  & 1 & 10 & 3e-5 & yes \\
hopper-mixed & 1 & 10 & 3e-5 & yes \\
walker2d-mixed  & 1 & 10 & 3e-5 & yes \\
\midrule
halfcheetah-medium-expert  & 0.1 & -1 & 3e-4 & no \\
hopper-medium-expert  & 10 & 10 & 3e-5 & no \\
walker2d-medium-expert  & 10 & 10 &  3e-5 & no \\
\bottomrule
\end{tabular}}
\end{table}

\textbf{Risk-sensitive D4RL. } We use the same hyperparameter range as in risk-neutral D4RL for a grid search. Interestingly, the best value of $\omega$ is smaller across most datasets, suggesting less conservatism may be needed due to the increased stochasticity in the environment. Table \ref{table:d4rl-mujoco-stochastic-hyperparameters-appendix} summarizes the hyperparameter choices.

\begin{table}[t]
\caption{CODAC hyperparameters for risk-sensitive D4RL}
\label{table:d4rl-mujoco-stochastic-hyperparameters-appendix}
\centering
\resizebox{0.7\textwidth}{!}{
\begin{tabular}{lrrrr}
\toprule
\multicolumn{1}{c}{\textbf{dataset}} &
\multicolumn{1}{c}{$\omega$} &
\multicolumn{1}{c}{$\zeta$} &
\multicolumn{1}{c}{$\eta_{\text{critic}}$} &
\multicolumn{1}{c}{entropy tuning} \\
\midrule
halfcheetah-medium  & 1 & -1 & 3e-5 & no \\
hopper-medium  & 0.1 & 10 & 3e-5 & yes \\ 
walker2d-medium  &1 & -1 & 3e-5 & yes \\
\midrule 
halfcheetah-mixed  & 0.1 & 10 & 3e-5 & yes \\
hopper-mixed & 1 & 10 & 3e-5 & yes \\
walker2d-mixed  & 1 & 10 & 3e-5 & yes \\
\midrule
halfcheetah-medium-expert  & 1 & -1 & 3e-5 & yes \\
hopper-medium-expert  & 10 & 10 & 3e-5 & no \\
walker2d-medium-expert  & 10 & 10 &  3e-5 & no \\
\bottomrule 
\end{tabular}}
\end{table}

\subsection{Compute resources}
We use a single Nvidia 2080-Ti with 32 cores to run our experiments. Each CODAC run takes about 10 hours in clock time. 

\end{document}